\newcommand{\Set}{\mathcal{S}}
\newcommand{\defeq}{\mathrel{\mathop:}=}
\def\eqref#1{equation~\ref{#1}}
\def\1{\bm{1}}
\def\rmI{{\mathbf{I}}}
\def\rmQ{{\mathbf{Q}}}
\def\rmU{{\mathbf{U}}}
\def\rmX{{\mathbf{X}}}
\def\veta{{\bm{\eta}}}
\def\vd{{\bm{d}}}
\def\vo{{\bm{o}}}
\def\vt{{\bm{t}}}
\def\vv{{\bm{v}}}
\def\vw{{\bm{w}}}
\def\vx{{\bm{x}}}
\def\vy{{\bm{y}}}
\def\vz{{\bm{z}}}
\DeclareMathAlphabet{\mathsfit}{\encodingdefault}{\sfdefault}{m}{sl}
\SetMathAlphabet{\mathsfit}{bold}{\encodingdefault}{\sfdefault}{bx}{n}
\def\gA{{\mathcal{A}}}
\def\gD{{\mathcal{D}}}
\def\gF{{\mathcal{F}}}
\def\gG{{\mathcal{G}}}
\def\gL{{\mathcal{L}}}
\def\gN{{\mathcal{N}}}
\def\gR{{\mathcal{R}}}
\def\gS{{\mathcal{S}}}
\def\gT{{\mathcal{T}}}
\def\gW{{\mathcal{W}}}
\def\sP{{\mathbb{P}}}
\def\sR{{\mathbb{R}}}
\def\sZ{{\mathbb{Z}}}
\newcommand{\E}{\mathbb{E}}
\DeclareMathOperator*{\argmax}{arg\,max}
\DeclareMathOperator*{\argmin}{arg\,min}
\newtheorem{assumption}[theorem]{Assumption}
\theoremstyle{plain}
\newcommand{\change}[1]{{\color{black} #1}}
\newcommand{\edit}[1]{{\color{black} #1}}
\newcommand{\revision}[1]{{\color{black} #1}}
\begin{document}

\title{On the Impact of Hard Adversarial Instances \\ on Overfitting in Adversarial Training}

\author{\name Chen Liu$^{1~\dagger~*}$ \email chen.liu@cityu.edu.hk
        \AND
        \name Zhichao Huang$^2$ \email zhichao.huang@connect.ust.hk
        \AND
        \name Mathieu Salzmann$^3$ \email mathieu.salzmann@epfl.ch
        \AND
        Tong Zhang$^{4~\ddagger}$ \email tongzhang@tongzhang-ml.org
        \AND
        \name Sabine S\"usstrunk$^3$ \email sabine.susstrunk@epfl.ch \\
        \AND
        \addr $^1$ Department of Computer Science, City University of Hong Kong \\
        83 Tai Chee Ave, Kowloon Tong, Hong Kong, China \\
        \addr $^2$ Department of Mathematics, Hong Kong University of Science and Technology \\
        Clear Water Bay, Hong Kong, China \\
        \addr $^3$ School of Computer and Communication Sciences, École Polytechnique Fédérale de Lausanne \\
        Rte Cantonale, 1015 Lausanne, Switzerland \\
        \addr $^4$ Siebel School of Computing and Data Science, University of Illinois Urbana-Champaign \\
        201 N Goodwin Ave, Urbana, IL 61801, USA \\
        \addr $\dagger$ Most of the work was done when Chen Liu was with École Polytechnique Fédérale de Lausanne. \\
        \addr $\ddagger$ The work was done when Tong Zhang was with Hong Kong University of Science and Technology. \\
        \addr $*$ Corresponding author
        }

\editor{Pradeep Ravikumar}

\setlength{\parindent}{0pt}
\setlength{\parskip}{0.5em}

\maketitle

\begin{abstract} %
Adversarial training is a popular method to robustify models against adversarial attacks.
However, it exhibits much more severe overfitting than training on clean inputs.
In this work, we investigate this phenomenon from the perspective of training instances, i.e., training input-target pairs.
\change{Based on a quantitative metric measuring the relative difficulty of an instance in the training set, we analyze the model's behavior on training instances of different difficulty levels}.
This lets us demonstrate that the decay in generalization performance of adversarial training is a result of fitting hard adversarial instances.
We theoretically verify our observations for both linear and general nonlinear models, proving that models trained on hard instances have worse generalization performance than ones trained on easy instances, and that this generalization gap \change{increases with the size of the adversarial budget}.
Finally, we investigate solutions to mitigate adversarial overfitting in several scenarios, including fast adversarial training and fine-tuning a pretrained model with additional data.
Our results demonstrate that using training data adaptively improves the model's robustness.
\end{abstract}

\begin{keywords}
  Robustness, overfitting, adversarial training, deep learning, optimization.
\end{keywords}

\section{Introduction} \label{sec:intro}

The existence of adversarial examples~\citep{szegedy2013intriguing} causes serious safety concerns when deploying modern deep learning models.
For example, for classification tasks, imperceptible perturbations of the input instance can fool state-of-the-art classifiers.
Many strategies to obtain models that are robust against adversarial attacks have been proposed~\citep{buckman2018thermometer, dhillon2018stochastic, ma2018characterizing, samangouei2018defense, pang2019improving, pang2020rethinking, xiao2020enhancing}, but most of them have been found to be ineffective in the presence of adaptive attacks~\citep{athalye2018obfuscated, croce2020reliable, tramer2020adaptive, croce2021mind}.
Ultimately, this leaves adversarial training~\citep{madry2017towards} and its variants~\citep{alayrac2019labels, carmon2019unlabeled, hendrycks2019using, sinha2019harnessing, zhang2019you, gowal2020uncovering, wu2020adversarial, 2021Improving, jiang2023towards, Wang2023BetterDM, Cui2024DecoupledKD, zhong2024efficient} as the most effective and popular approaches to construct robust models.
Unfortunately, adversarial training yields much worse performance on the test data than vanilla training.
In particular, it strongly suffers from overfitting~\citep{rice2020overfitting}, with the model's performance decaying significantly on the test set in the later phase of adversarial training.
\revision{Because modern deep neural networks have sufficient capacity to fit the training data perfectly, even under adversarial attacks, overfitting remains one of the primary challenges for improving model robustness on the test data.}
While the overfitting issue can be mitigated by early stopping~\citep{rice2020overfitting} or model smoothing~\citep{chen2021robust}, the reason behind the overfitting of adversarial training remains poorly understood.

In this paper, we study this phenomenon from the perspective of training instances, i.e., training input-target pairs.
We first introduce a quantitative metric, \revision{based on the percentile of the instance's loss objective}, to measure the relative difficulty of an instance within a training set.
Then, we analyze the model's behavior, such as its loss and intermediate activations, on training instances of different difficulty levels.
This lets us discover that the model's generalization performance decays significantly when it fits the hard adversarial instances in the later training phase.

To more rigorously study this phenomenon, we conduct theoretical analyses on both linear and nonlinear models.
For linear models, we study logistic regression on a Gaussian mixture model, in which we can calculate the analytical expression of the model parameters upon convergence and thus the robust test accuracy.
Our theorem demonstrates that adversarial training on harder instances leads to larger generalization gaps.
Furthermore, the difference in robust accuracy between the models trained by the hard instances and the ones trained by the easy instances increases with the size of the adversarial budget.
In the case of nonlinear models, we derive the lower bound of the model's Lipschitz constant when the model is well fit to the training instances under adversarial attacks.
This bound increases with the difficulty level of the training instances and the size of the adversarial budget.
Since a larger Lipschitz constant indicates a higher adversarial vulnerability~\citep{ruan2018reachability, weng2018towards, weng2018evaluating}, our theoretical analysis confirms our empirical observations.

Our empirical and theoretical analyses indicate that avoiding fitting the hard training instances can mitigate adversarial overfitting.
We therefore study this in three different scenarios: standard adversarial training, fast adversarial training and adversarial fine-tuning with additional training data.
We show that existing approaches that successfully mitigate adversarial overfitting~\citep{balaji2019instance, chen2021robust, huang2020self} implicitly avoid fitting the hard adversarial input-target pairs, by either adaptive inputs or adaptive targets.
By contrast, the methods that focus on fitting hard adversarial~\citep{zhang2021geometryaware} instances are not truly robust under adaptive attacks~\citep{hitaj2021evaluating}.

\textbf{Contributions.} Our contributions are as follows: 
1) Based on a quantitative metric of instance difficulty, we show that fitting hard adversarial instances leads to degraded generalization performance in adversarial training.
2) We conduct rigorous theoretical analyses on both linear and nonlinear models. For linear models, we show analytically that models trained on harder instances have larger robust test error than the ones trained on easy instances; the gap increases with the size of the adversarial budget. For nonlinear models, we derive a lower bound of the model's Lipschitz constant. It increases with the difficulty of the training instances and the size of the adversarial budget, indicating that both factors exacerbate adversarial overfitting.
3) We show that existing approaches to mitigating adversarial overfiting implicitly avoid fitting hard adversarial instances.

\textbf{Notation and terminology.} In this paper, $\vx$ and $\vx'$ are the clean input and its adversarial counterpart.
We use $f_\vw$ to represent a model parameterized by $\vw$ and omit the subscript $\vw$ unless ambiguous.
$\vo = f_\vw(\vx)$ and $\vo' = f_\vw(\vx')$ are the model's output of the clean input and the adversarial input.
$\gL_\vw(\vx, \vy)$ and $\gL_\vw(\vx', \vy)$ represent the loss of the clean and adversarial instances, receptively, in which we sometimes omit $\vw$ and $\vy$ for notation simplicity.
We use $\|\vw\|$ and $\|\rmX\|$ to represent the $l_2$ norm of the vector $\vw$ and the spectral norm of the matrix $\rmX$, respectively.
$sign$ is an elementwise function which returns $+1$ for positive elements, $-1$ for negative elements and $0$ for $0$.
$\mathbf{1}_y$ is the one-hot vector with only the $y$-th dimension being $1$.
The term \textit{adversarial budget} refers to the allowable perturbations applied to the input instance.
It is characterized by $l_p$ norm and the size $\epsilon$ as a set $\Set^{(p)}(\epsilon) = \{\Delta | \|\Delta\|_p \leq \epsilon\}$.
A notation table is provided in Appendix~\ref{sec:notation}.

\change{Based on the notations above, given the training set $\gD$, the robust learning problem can be formulated as the following min-max optimization problem.
Unless explicitly stated, we usually omit $y$ in the loss function for notation simplicity.
\begin{equation}
\begin{aligned}
\min_\vw \E_{(\vx, y) \sim \gD} \max_{\Delta \in \Set^{(p)}(\epsilon)} \gL_\vw(\vx + \Delta, y)
\end{aligned} \label{eq:problem}
\end{equation}}
In this paper, \textit{vanilla training} refers to training on the clean inputs, and \textit{vanilla adversarial training} to the adversarial training method in~\cite{madry2017towards}.
\textit{RN18} and \textit{WRN34} are the 18-layer ResNet~\citep{he2016deep} and the 34-layer WideResNet~\citep{zagoruyko2016wide} with the width factor 10 used in~\cite{madry2017towards} and~\cite{wong2020fast}, respectively.
\revision{To avoid confusion with the general term \textit{overfitting}, which refers to the gap between the training error and the test error, we use the term \textit{adversarial overfitting} to indicate the phenomenon where the robust error on the test set significantly increases in the late phase of training.
Adversarial overfitting often results in a significant generalization gap, because the model's robust error on the training set decreases during training, an increase in robust test error indicates that the model is not effectively generalizing to new data.
} 



The code to reproduce the results of this paper is publicly available on Github\footnote{\href{https://github.com/IVRL/RobustOverfit-HardInstance.git}{https://github.com/IVRL/RobustOverfit-HardInstance.git}}.
\section{Related Work}

We concentrate on white-box attacks, where the attacker has access to the model parameters.
Such attacks are usually based on first-order information and stronger than black-box attacks~\citep{andriushchenko2020square, dong2018boosting}.
For example, the \textit{fast gradient sign method (FGSM)}~\citep{goodfellow2014explaining} perturbs the input based on its gradient's sign.
The \textit{iterative fast gradient sign method (IFGSM)}~\citep{kurakin2016adversarial} iteratively runs FGSM using a smaller step size and projects the perturbation to the adversarial budget after each iteration.
On top of IFGSM, \textit{projected gradient descent (PGD)}~\citep{madry2017towards} uses random initialization and restarts to boost the strength of the attack.

It is challenging to defend models against adversarial examples. Some early defense methods~\citep{pang2019improving,pang2020rethinking,xiao2020enhancing} are shown to utilize obfuscated gradients~\citep{athalye2018obfuscated}, which means they can only tackle some specific types of attacks instead of achieving true robustness.
Models trained by these methods are vulnerable to stronger adaptive attacks~\cite{athalye2018obfuscated, croce2020reliable, tramer2020adaptive, croce2021mind}.
In contrast, several works have designed training algorithms to obtain \textit{provably} robust  models~\citep{raghunathan2018certified, wong2018provable, cohen2019certified, gowal2019scalable, salman2019provably}. 
Unfortunately, these methods either do not generalize to modern network architectures or have a prohibitively large computational complexity.
As a consequence, adversarial training~\citep{madry2017towards} and its variants~\citep{alayrac2019labels, carmon2019unlabeled, hendrycks2019using, sinha2019harnessing, zhang2019you, gowal2020uncovering, wu2020adversarial, 2021Improving, jiang2023towards, Wang2023BetterDM, Cui2024DecoupledKD, zhong2024efficient} have become the de facto approach to obtain robust models in practice.
In essence, these methods generate adversarial examples, usually using PGD, and use them to optimize the model parameters.

While effective, adversarial training is more challenging than vanilla training. 
It was shown to require larger models~\citep{xie2020intriguing} and to exhibit a poorer convergence behavior~\citep{liu2020loss}.
Furthermore, as observed in~\cite{rice2020overfitting}, it suffers from \textit{adversarial overfitting}:
the robust accuracy on the test set significantly decreases in the late adversarial training phase.
\cite{rice2020overfitting} thus proposed to perform early stopping based on a separate validation set to improve the generalization performance in adversarial training.
Furthermore,~\cite{chen2021robust} introduced logit smoothing and weight smoothing strategies to reduce adversarial overfitting.
In parallel to this, several techniques to improve the model's robust test accuracy were proposed~\citep{wang2020improving, wu2020adversarial, zhang2021geometryaware}, but without solving the adversarial overfitting issue.
By contrast, other works~\citep{balaji2019instance, huang2020self} were empirically shown to mitigate adversarial overfitting but without providing any explanations as to how this phenomenon was addressed.

\edit{In addition to adversarial training, there are some previous works studying the training dynamics and generalization properties of vanilla training~\citep{neyshabur2017exploring, zhang2017understanding, toneva2018empirical, swayamdipta2020dataset}.
Unlike adversarial training, models usually have pretty good generalization performance~\citep{bartlett2020benign, li2021towards, kou2023benign} despite over-parameterization, which are usually the cases of deep neural networks.
This phenomenon is called \textit{benign overfitting}.
There are some works connecting benign overfitting with adversarial robustness.
\cite{bubeck2021universal} theoretically proves that at least $\Omega(nm)$ trainable parameters are needed for interpolating $n$ $m$-dimensional instances.
\cite{sanyal2020benign} studies the overparameterization regime in the context of label noise, and demonstrates that label noise in the training data dramatically hurts adversarial robustness.
}

In this paper, we study the causes of adversarial overfitting from both an empirical and a theoretical point of view.
\edit{We address how adversarial perturbations affect the generalization properties of deep neural networks.}
We also identify the reasons why prior attempts~\citep{balaji2019instance, chen2021efficient, huang2020self} successfully mitigate it.

\section{A Metric for Instance Difficulty} \label{sec:hardeasy}

Parametric models are trained to minimize a loss objective based on several input-target pairs called training set, and are then evaluated on a held-out set called test set.
By comparing the loss value of each instance, we can understand which ones, in either the training or the test set, are more difficult for the model to fit.
Therefore, our metric for instance difficulty is based on an instance's loss during the training process.

\revision{To this end, considering that we train the model for $M$ epochs, we use $\{\vw_i\}_{i = 1}^M$ to represent the model parameters in each epoch.
In addition, we introduce the perturbation algorithm $\gA$ and use $\gA(\vx, \vw)$ to denote the adversarial examples of the input $\vx$ given the model parameters $\vw$.
In vanilla training, $\gA_{clean}$ does not perturb the input, i.e., $\gA_{clean}(\vx, \vw) = \vx$; in adversarial training in~\citep{madry2017towards}, $\gA_{PGD}(\vx, \vw)$ is the adversarial example of $\vx$ generated by PGD.
Under this notation, the average loss $\overline{\gL}$ is calculated as $\overline{\gL}(\vx, \gA) \defeq \frac{1}{M}\sum_{i = 1}^M \gL_{\vw_i}(\gA(\vx, \vw_i), y)$, where the loss function $\gL$ is defined in Equation~(\ref{eq:problem}).}
We then study the relative difficulty level of an instance within a finite set, and define the difficulty function $d$ of an instance $\vx$ within a set $\gD$ \revision{for the perturbation algorithm $\gA$} as
\begin{equation}
\begin{aligned}
d(\vx, \gA) = \sP(\overline{\gL}(\vx, \gA) > \overline{\gL}(\widetilde{\vx}, \gA) \vert \widetilde{\vx} \sim U(\gD)) + \frac{1}{2} \sP(\overline{\gL}(\vx, \gA) = \overline{\gL}(\widetilde{\vx}, \gA) \vert \widetilde{\vx} \sim U(\gD))\;,
\end{aligned} \label{eq:difficulty}
\end{equation}
where $\widetilde{\vx} \sim U(\gD)$ indicates that $\widetilde{\vx}$ is uniformly sampled from the finite set $\gD$.
\revision{$d(\vx, \gA)$ is defined based on the model, the attack algorithm $\gA$ and the set $\gD$.
Since $d(\vx, \gA)$ denotes the relative difficulty, it is a bounded function, close to $1$ for the hardest instances and close to $0$ for the easiest ones.}

\revision{We discuss the motivation for and properties of $d(\vx, \gA)$ in Appendix~\ref{subsec:d_function}.
In particular, in Appendix~\ref{subsec:d_function}, we demonstrate that the difficulty function $d$ mainly depends on the original data $\vx$ and the perturbation algorithm $\gA$; the model architecture and the training duration have negligible effects on $d$.
Therefore, we use $\vx$ and $\gA$ as the parameters of the function $d$, and omit the others for notation simplicity.}
In other words, $d(\vx, \gA)$ can represent the difficulty of $\vx$ within a set under a specific type of attack $\gA$.


We show some of the easiest and hardest examples according to our metric in adversarial training in Figure \ref{fig:easy_hard_picture}, \revision{which indicates that our metric aligns well with human perception.}
The easiest instances are visually highly similar, with consistent and typical features of the corresponding category.
By contrast, the hardest ones are much more diverse and with non-typical visual features.
Some of them are ambiguous or even incorrectly labeled.

\revision{
In the remainder of this paper, we use the difficulty metric as defined by Equation (\ref{eq:difficulty}), which not only aligns well with human perception but also is straightforward, easy to obtain, and facilitates our theoretical analysis.
Although other instance difficulty metrics have been proposed, such as the ones in~\cite{baldock2021deep, paul2021deep} based on margins to the decision boundary, comparing them with our metric is subjective and out of the scope of this work.
We focus on using the difficulty metric as a tool to analyze the adversarial overfitting phenomenon.
In the following sections, we study how easy and hard training instances affect adversarial overfitting.
}

\begin{figure}[!t]
\centering
\begin{subfigure}{.23\columnwidth}
\includegraphics[width = \textwidth]{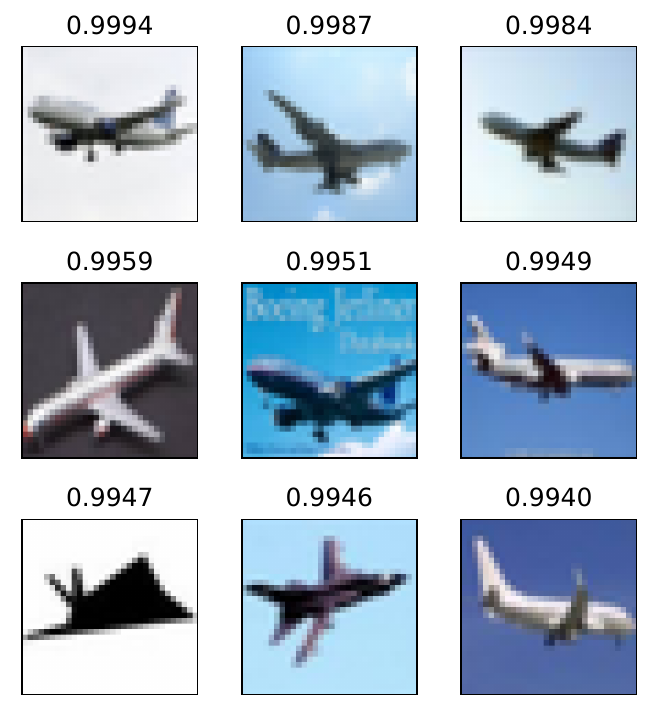}
\caption{Easy@CIFAR10.}
\end{subfigure}
\begin{subfigure}{.23\columnwidth}
\includegraphics[width = \textwidth]{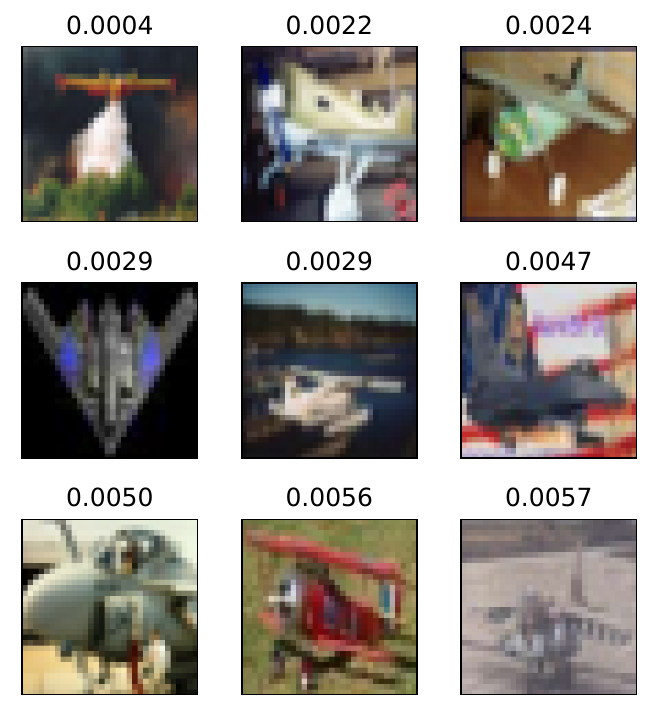}
\caption{Hard@CIFAR10.}
\end{subfigure}
~~
\begin{subfigure}{.23\columnwidth}
\includegraphics[width = \textwidth]{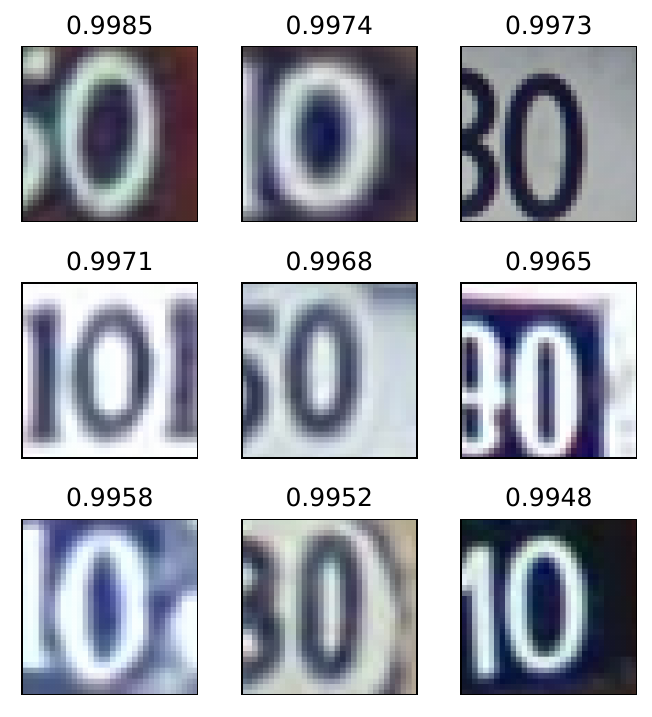}
\caption{Easy@SVHN.}
\end{subfigure}
\begin{subfigure}{.23\columnwidth}
\includegraphics[width = \textwidth]{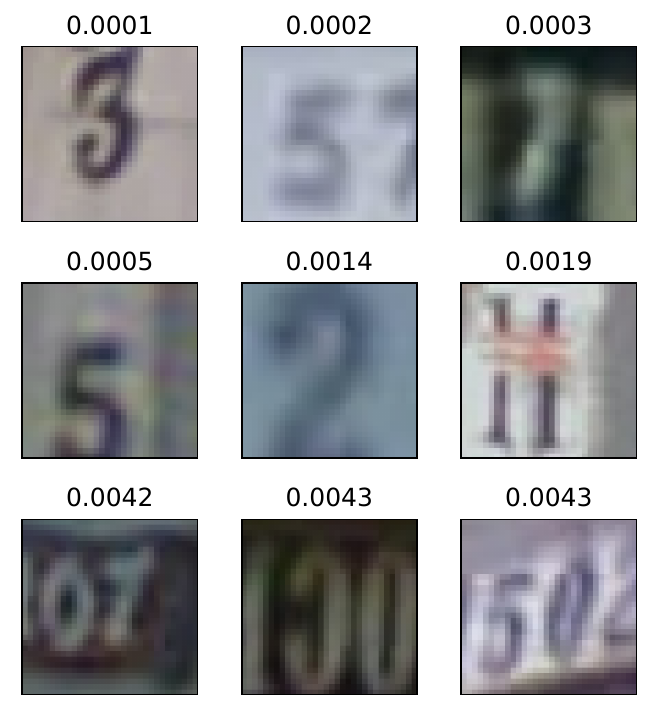}
\caption{Hard@SVHN.}
\end{subfigure}
\caption{Some examples of the easiest and the hardest instances in CIFAR10~\citep{krizhevsky2009learning} and SVHN~\citep{netzer2011reading} datasets. We pick some examples from the ``plane'' category in CIFAR10 and ``0'' category in SVHN. \edit{The number on top of each image indicates the corresponding value of the difficulty function}}
\label{fig:easy_hard_picture}
\end{figure}

\section{\change{Hard Instances Lead to Overfitting}} \label{sec:overfit}

We empirically study how easy and hard instances impact the performance of adversarial training, with a focus on the adversarial overfitting phenomenon.
Unless otherwise mentioned, we use the general experimental settings in Appendix~\ref{sec:app_exp_settings_general}.


\subsection{Using a Subset of Training Data} \label{subsec:subset}

We start by training RN18 models for 200 epochs using either the 10000 easiest, random or hardest instances of the CIFAR10 training set via either vanilla training, FGSM or PGD adversarial training.
For FGSM and PGD adversarial training, the adversarial budget is based on the $l_\infty$ norm and $\epsilon = 8 / 255$.
\revision{Note that the instance's difficulty is defined based on Equation~(\ref{eq:difficulty}) with the same perturbations as in training.
The perturbations of vanilla training are considered to be zero.
In addition, we enforce the training subsets to be class-balanced.}
For example, the easiest 10000 instances consist of the easiest 1000 instances in each class. We provide the learning curves under different perturbations in Figure~\ref{fig:overfit_main}.

\begin{figure}[!ht]
\centering
\begin{subfigure}{0.31\columnwidth}
\includegraphics[width = \textwidth, height = 4cm]{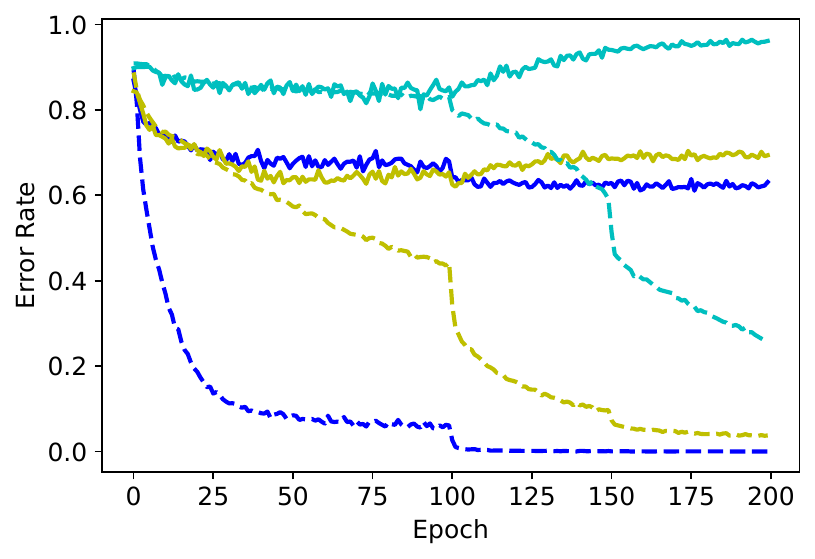}
\caption{PGD Adversarial Training. \label{subfig:overfit_main_pgd}}
\end{subfigure}
\begin{subfigure}{0.34\columnwidth}
\includegraphics[width = \textwidth, height = 4cm]{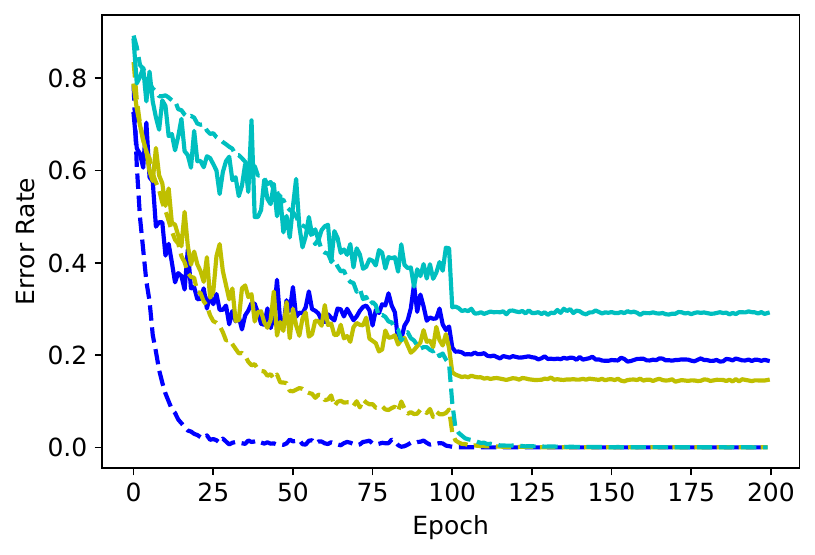}
\caption{FGSM Adversarial Training. \label{subfig:overfit_main_fgsm}}
\end{subfigure}
\begin{subfigure}{0.31\columnwidth}
\includegraphics[width = \textwidth, height = 4cm]{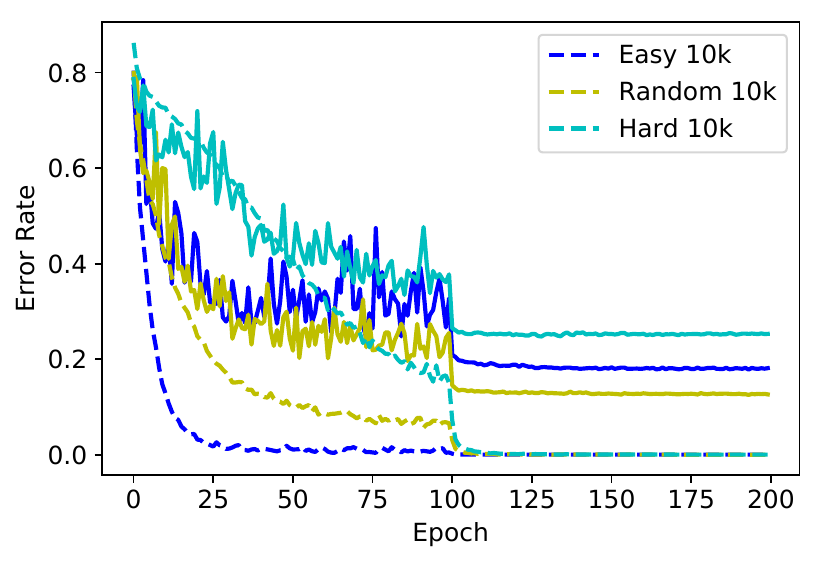}
\caption{Vanilla Training. \label{subfig:overfit_main_clean}}
\end{subfigure}
\caption{Learning curves obtained by training on the $10000$ easiest, random and hardest instances of CIFAR10 under different scenarios. The training error (dashed lines) is the error on the selected instances, and the test error (solid lines) is the error on the whole test set. \revision{The y-axis of each subfigure indicates the training or test error under the corresponding perturbation, so the error rates of different subfigures are not comparable.}} \label{fig:overfit_main}
\end{figure}

\begin{figure}
\centering
\begin{subfigure}{0.31\columnwidth}
\includegraphics[width = \textwidth, height = 4cm]{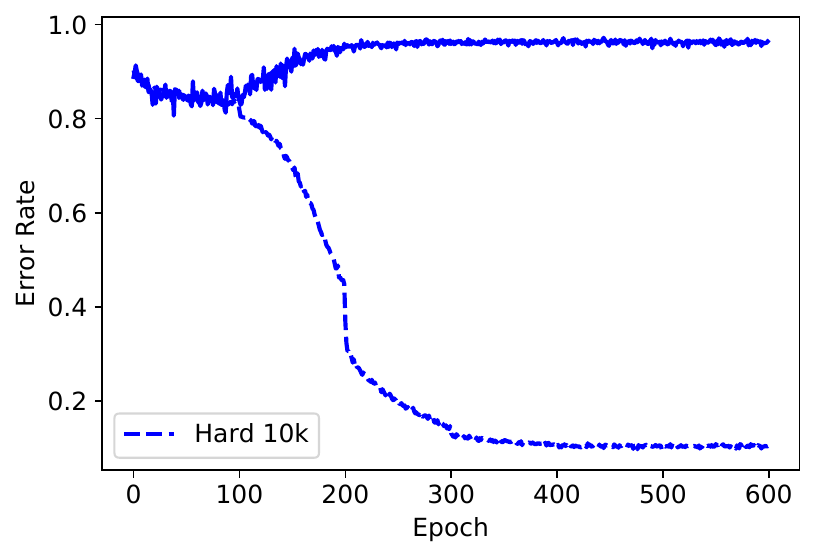}
\caption{Longer Duration. \label{fig:longerduration}}
\end{subfigure}
\begin{subfigure}{0.34\columnwidth}
\includegraphics[width = \textwidth, height = 4cm]{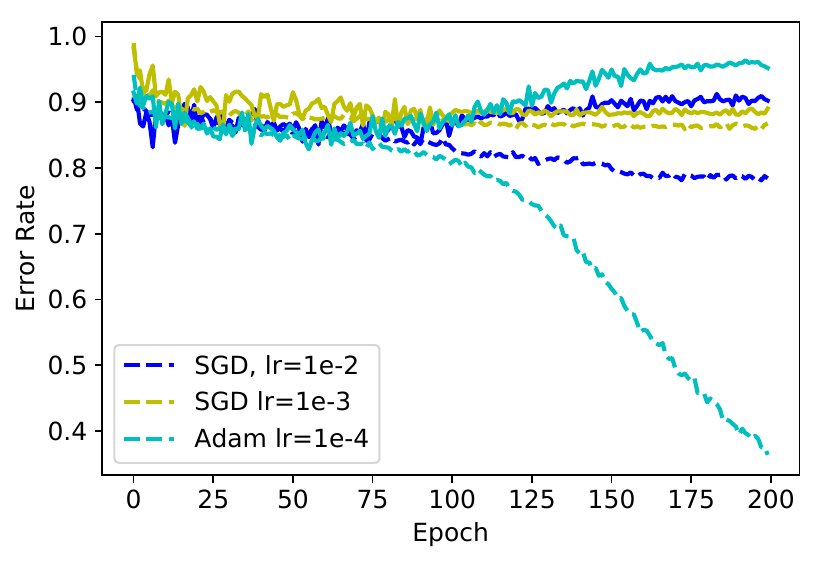}
\caption{Different Optimizers. \label{fig:diffopt}}
\end{subfigure}
\begin{subfigure}{0.31\columnwidth}
\includegraphics[width = \textwidth, height = 4cm]{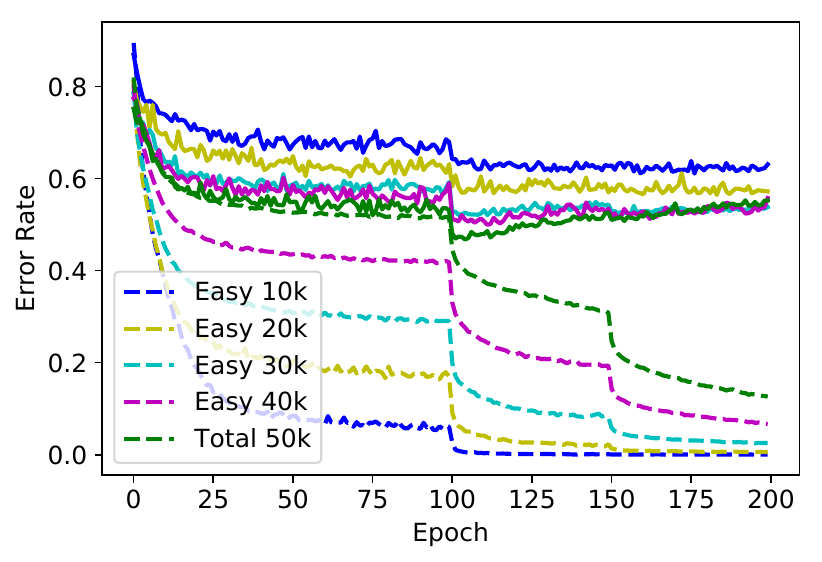}
\caption{Hard Instance Removal. \label{fig:hardremoval}}
\end{subfigure}
\caption{(a) The training error (dashed line) and the test error (solid line) when we conduct adversarial training on the $10000$ hardest training instances for more epochs until convergence. (b) The learning curves of training on the $10000$ hardest training instances when we use a different optimizer, including different learning rates and a different algorithm. (c) The learning curves on the training (dash lines) and the test (solid lines) sets when we remove the hardest training instances.}
\end{figure}

For PGD adversarial training, in Figure~\ref{subfig:overfit_main_pgd}, while we observe adversarial overfitting as in~\cite{rice2020overfitting} when using the random instances, no such phenomenon occurs when using the easiest instances: the performance on the test set does not degrade during training.
However, PGD adversarial training fails and suffers more severe overfitting when using the hardest instances.
Note that this failure is not due to improper optimization.
In Figure~\ref{fig:longerduration} and~\ref{fig:diffopt}, we use longer training duration and different optimizers to conduct PGD adversarial training on the hardest training instances, but the models' performance on the test set are always near trivial.
All these phenomena indicate that the cause of overfitting is fitting the hard adversarial instances generated by PGD.

By contrast, FGSM adversarial training and vanilla training (Figure~\ref{subfig:overfit_main_fgsm},~\ref{subfig:overfit_main_clean}) do not suffer from severe adversarial overfitting.
In these cases, the models trained with the hardest instances also achieve non-trivial test accuracy.
Furthermore, the gaps in robust test accuracy between the models trained by easy instances and by hard ones are much smaller.
\revision{Since vanilla training can be considered as PGD adversarial training with $\epsilon = 0$, FGSM adversarial training does not yield truly robust models~\citep{madry2017towards}; the observations in Figure~\ref{fig:overfit_main} indicate that adversarial overfitting happens when we aim to obtain models robust against an adversarial budget of a large size $\epsilon$.}

In Appendix~\ref{sec:app_exp_subset}, we perform additional and comprehensive experiments, evidencing that our conclusions hold for \revision{various difficulty metrics, datasets and values of $\epsilon$, and for an adversarial budget based on the $l_2$ norm}.
Specifically, we show that more severe adversarial overfitting happens when the size of the adversarial budget $\epsilon$ increases.
That is to say, in term of model generalization, fitting hard training instances is more harmful when we are training against stronger perturbations.

Despite harmful, the experiments discussed below show that simply removing hard instances is not the optimal choice.
In Figure~\ref{fig:hardremoval}, we conduct PGD adversarial training using increasingly more training instances, starting with the easiest ones.
The learning curves on the test set indicate that the models can still benefit from more data, but only when combined with early stopping  used in~\citep{rice2020overfitting}.
It means that the hard instances can still benefit adversarial training, but need to be utilized in an adaptive manner.

\subsection{Using the Whole Training Set} \label{subsec:hardoverfit}

\begin{figure}[!ht]
\centering
\begin{subfigure}{0.42\columnwidth}
\includegraphics[width = \textwidth]{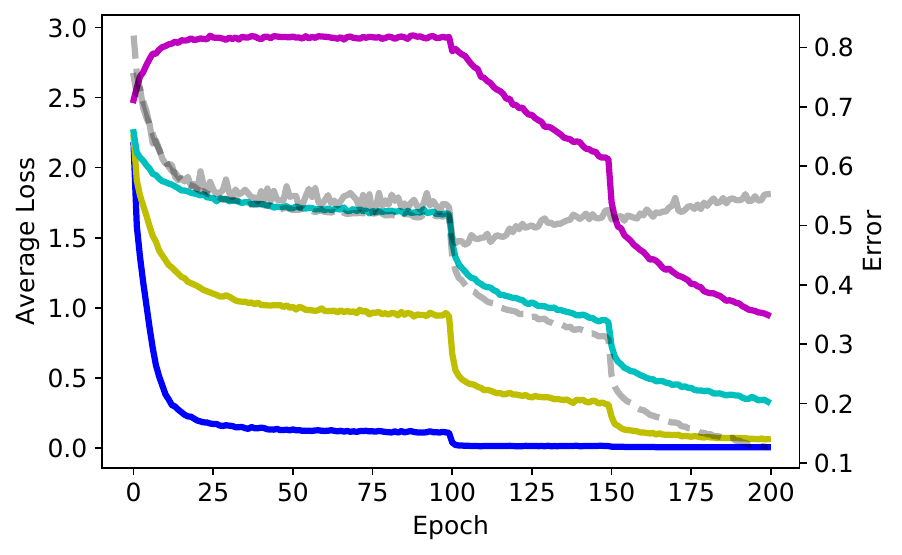}
\caption{\label{subfig:loss_by_group}}
\end{subfigure}
~~~~~~~~~~~~~
\begin{subfigure}{0.42\columnwidth}
\includegraphics[width = \textwidth]{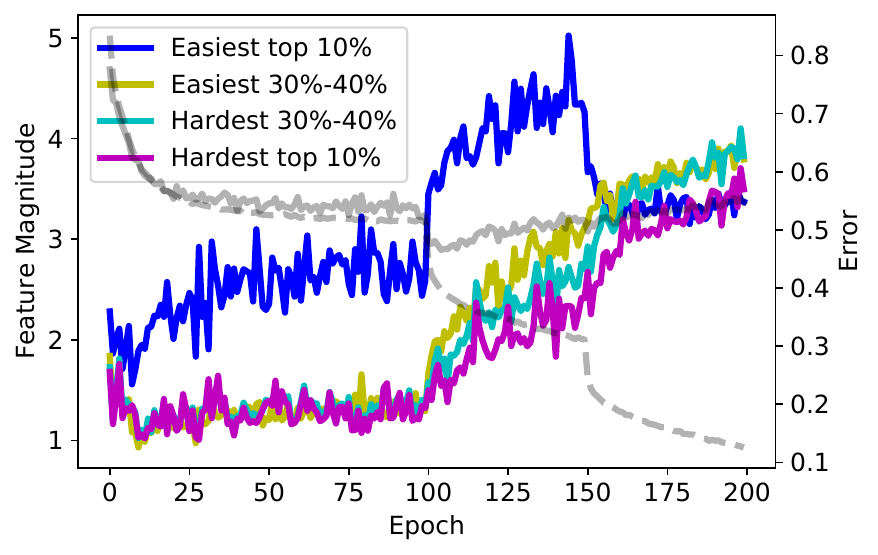}
\caption{\label{subfig:feature_by_group}}
\end{subfigure}
\caption{Analysis on the groups $\gG_0$, $\gG_3$, $\gG_6$ and $\gG_9$ in the training set. The right vertical axis corresponds to the training (dashed grey line) and test (solid grey line) error under adversarial attacks for both plots. \textbf{Left plot:} The left vertical axis represents the average loss of different groups. \textbf{Right plot:} The left vertical axis represents the average $l_2$ norm of features extracted during training for different groups.} \label{fig:loss_by_group}
\end{figure}

Let us now turn to the more standard setting where we train the model with the entire training set.
To nonetheless analyze the influence of instance difficulty in this scenario, we divide the training set $\gD$ into $10$ non-overlapping groups $\{\gG_i\}_{i = 0}^9$, with $\gG_i = \{\vx \in \gD | 0.1 \times i \leq d(\vx, \gA_{PGD}) < 0.1 \times (i + 1)\}$, \revision{where $d(\vx, \gA_{PGD})$ is the difficulty of $\vx$ based on PGD attacks}.
That is, $\gG_0$ is the easiest group, whereas $\gG_9$ is the hardest one. We then train a RN18 model on the entire CIFAR10 training set by PGD adversarial training and monitor the training behavior of the different groups.
In particular, in Figure~\ref{subfig:loss_by_group}, we plot the average loss of the instances in the groups $\gG_0$, $\gG_3$, $\gG_6$ and $\gG_9$.
The results show that, in the early training stages, the model first fits the easy instances, as evidenced by the average loss of group $\gG_0$ decreasing much faster than that of the other groups.
By contrast, in the late training phase, the model tries to fit the more difficult instances, with the average loss of groups $\gG_9$ and $\gG_6$ decreasing much faster than that of the other groups.
In this period, however, the robust test error (solid grey line) increases, which indicates that adversarial overfitting arises from the model's attempt to fit the hard adversarial instances.

In addition to average losses, inspired by~\cite{ilyas2019adversarial}, which showed that the penultimate layer's activations of a robust model correspond to its \textit{robust features} that cannot be misaligned by adversarial attacks, we monitor the group-wise average magnitudes of the penultimate layer's activations.
As shown in Figure~\ref{subfig:feature_by_group}, the model first focuses on extracting robust features for the easy instances, as evidenced by the comparatively large activations of the instances in $\gG_0$.
In the late phase of training, the norm of the activations of the hard instances increases significantly, bridging the gap between easy and hard instances.
This further indicates that the model focuses more on the hard instances in the later phase, at which point it starts overfitting.

\section{Theoretical Analysis} \label{sec:thm}

The empirical study in Section~\ref{sec:overfit} indicates that adversarial overfitting arises from fitting hard adversarial training instances.
We now study this relationship from a theoretical viewpoint.
We start with a linear model: the logistic regression model on a Gaussian Mixture Model. In this toy example, the adversarial examples and the corresponding loss function have analytical expressions, allowing us to precisely draw the relationship between the instance difficulty and the generalization performance.
We then generalize our analysis to general nonlinear models and use the models' Lipschitz constant as a proxy for their robustness on the test set.
Our conclusions are consistent with the empirical study.

We use $\{\vx_i, y_i\}_{i=1}^n$ to represent the training data, and $(\rmX, \vy)$ as its matrix form.
$\{\vx'_i, y_i\}_{i=1}^n$ and $(\rmX', \vy)$ are their adversarial counterparts.
Here, $\vx_i \in \sR^m$, $y_i \in \{-1, +1\}$, $\rmX \in \sR^{n \times m}$ and $\vy \in \{-1, +1\}^n$.
\revision{Note that these adversarial examples are generated based on the model parameters $\vw$ to maximize the loss objective, so they depend on the model parameters $\vw$ and are generated on the fly during training, which is consistent with adversarial training in practice.
For simplicity, we do not explicitly represent this dependence in the notation.}

The notation is summarized in Table~\ref{tbl:notation} of Appendix~\ref{sec:notation}.

\subsection{Linear Models} \label{sec:linear}

We study the logistic regression model under an $l_2$ norm based adversarial budget.
In this case, the model is parameterized by $\vw \in \sR^m$ and outputs $sign(\vw^T\vx'_i)$ given the adversarial example $\vx'_i$ of the input $\vx_i$.
The loss function for this instance is $\frac{1}{1 + e^{y_i \vw^T\vx'_i}}$.
We assume over-parameterization, which means $n < m$.

The following theorem shows that, under mild assumptions, the parameters of the adversarially trained model converge to the $l_2$ max-margin direction of the training data.

\begin{theorem} \label{thm:converge}
For a dataset $\{\vx_i, y_i\}_{i = 1}^n$ that is linearly separable under the adversarial budget $\gS^{(2)}(\epsilon)$, any initial point $\vw_0$ and step size $\alpha \leq 2\|\rmX\|^{-2}$, the gradient descent $\vw_{u + 1} = \vw_u - \alpha \triangledown_{\vw} \gL_{\vw_u}(\rmX')$ converges asymptotically to the $l_2$ max-margin vector of the training data. That is,
\begin{equation}
\begin{aligned}
\lim_{u \to \infty} \frac{\vw_u}{\|\vw_u\|} &= \frac{\widehat{\vw}}{\|\widehat{\vw}\|},\ \mathrm{where}\ \ \widehat{\vw} = \argmin_{\vw} \|\vw\| \\
&s.t. \ \ \forall i \in \{1, 2, ..., n\}, \ \vw^T \vx_i \geq 1\;.
\end{aligned} \label{eq:max_margin}
\end{equation}
\end{theorem}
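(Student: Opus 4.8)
The plan is to eliminate the inner maximization to obtain a closed-form training objective, to observe that adversarial training with an $l_2$ budget has exactly the same implicit bias as clean training, and then to run the classical implicit-bias-of-gradient-descent argument on the resulting objective. First I would solve the inner problem explicitly: for the linear model and the budget $\gS^{(2)}(\epsilon)$, the worst-case perturbation of $\vx_i$ against $\ell(z) = 1/(1+e^{z})$ is, for every $\vw \neq \vzero$, the unique vector $\Delta_i^\star = -\epsilon\, y_i\, \vw/\|\vw\|$, so $y_i\vw^T\vx_i' = y_i\vw^T\vx_i - \epsilon\|\vw\|$. By Danskin's theorem, $\gL_\vw(\rmX') = \tfrac1n\sum_{i=1}^n \ell\big(m_i(\vw)\big)$ with $m_i(\vw) \defeq y_i\vw^T\vx_i - \epsilon\|\vw\|$, a function that is differentiable away from the origin and whose ordinary gradient equals $\triangledown_\vw\gL_\vw(\rmX')$. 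Thus the recursion in the statement is plain gradient descent $\vw_{u+1} = \vw_u - \alpha\,\triangledown F(\vw_u)$ on $F(\vw)\defeq \tfrac1n\sum_i\ell(m_i(\vw))$, and the remainder of the proof studies this single objective.

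Next I would record the structural facts that drive the limit. (i) Each $m_i$ is positively $1$-homogeneous and $\ell$ has the exponential tail $\ell(z) = e^{-z}(1+o(1))$, so for large $\|\vw\|$ the value $F(\vw)$ is governed by the indices attaining $\min_i m_i(\vw)$. (ii) Linear separability under $\gS^{(2)}(\epsilon)$ gives a unit direction with $\min_i y_i\vw^T\vx_i > \epsilon$, hence $\inf F = 0$ is not attained; since the step-size bound $\alpha\le 2\|\rmX\|^{-2}$ makes the iteration a descent method (the smoothness of $F$ away from the origin is controlled by $\|\rmX\|^2$, the $\epsilon\|\vw\|$ term adding only curvature of order $\epsilon/\|\vw\|$), we get $F(\vw_u)\downarrow 0$ and therefore $\|\vw_u\|\to\infty$. (iii) The direction giving the fastest asymptotic decay, $\argmax_{\|\vu\|=1}\min_i m_i(\vu) = \argmax_{\|\vu\|=1}\big(\min_i y_i\vu^T\vx_i - \epsilon\big)$, does not depend on $\epsilon$ (the additive constant is irrelevant to the $\argmax$) and equals the normalized clean $l_2$ max-margin vector $\widehat{\vw}/\|\widehat{\vw}\|$; this is the key point, that the $l_2$-robust and the clean max-margin directions coincide.

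Finally I would carry out the implicit-bias analysis of Soudry et al.\ on $F$: writing $\vw_u = \widehat{\vw}\,\ln u + \bm{\rho}_u$ (after normalizing $\widehat{\vw}$), in $-\triangledown F(\vw_u) = \tfrac1n\sum_i\big(-\ell'(m_i(\vw_u))\big)\big(y_i\vx_i - \epsilon\,\vw_u/\|\vw_u\|\big)$ the weights $-\ell'(m_i(\vw_u))\approx e^{-m_i(\vw_u)}$ concentrate on the robust support vectors as $u\to\infty$, while the correction $\epsilon\,\vw_u/\|\vw_u\|$ contributes only an $O(1/\ln u)$ drift; summing the recursion and bounding $\bm{\rho}_u$ yields $\vw_u/\|\vw_u\|\to\widehat{\vw}/\|\widehat{\vw}\|$, which is the claim.

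The hard part will be the non-convexity of $\ell(z)=1/(1+e^{z})$, which is convex only for $z>0$: unlike the ordinary logistic loss one cannot simply invoke convex descent lemmas, so proving $F(\vw_u)\to 0$ under the stated step size (rather than merely closing the Soudry-type decomposition once the loss already vanishes) needs a dedicated Lyapunov/monotonicity argument that exploits the iterates' direction staying positively correlated with a robust separator. A secondary technical issue is the non-smoothness of $\|\vw\|$ at the origin: one must argue that the separability-driven descent keeps $\|\vw_u\|$ bounded away from $0$ (or take $\vw_0\neq\vzero$ and track $\|\vw_u\|$ directly), and one should confirm that ``$\triangledown_\vw\gL_{\vw_u}(\rmX')$'' in the statement indeed means the perturbation is recomputed at $\vw_u$, so that the Danskin identity above is the intended gradient.
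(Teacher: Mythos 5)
Your reduction and your key structural fact are the same as the paper's, but you handle the dynamics differently. Like the paper, you first eliminate the inner maximization to get the penalized objective $\sum_i l\bigl(y_i\vw^T\vx_i - \epsilon\|\vw\|\bigr)$, and your observation that $\argmax_{\|\vu\|=1}\bigl(\min_i y_i\vu^T\vx_i - \epsilon\bigr)$ is unchanged by the additive constant $\epsilon$ is exactly the content of the paper's Lemma~\ref{lem:equiv} (there proved by a scaling/contradiction argument showing $\widehat{\vw}$ and the adversarial max-margin vector $\widehat{\vw'}$ are collinear); your unit-sphere formulation is arguably the cleaner way to see it. Where you diverge is the optimization part: the paper does not redo the implicit-bias analysis — it asserts convexity and $\|\rmX\|^2$-smoothness of the penalized objective, cites the directional-convergence results of \cite{soudry2018implicit,ji2019implicit} to claim $\vw_u/\|\vw_u\|$ has a limit $\vw_\infty$, treats the perturbation as the fixed vector $\epsilon\vw_\infty$ for large $u$, and then applies Soudry et al.'s theorem to the shifted data $\{\vx_i - \epsilon\vw_\infty\}$, finishing with Lemma~\ref{lem:equiv}. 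You instead propose to rerun the Soudry-style decomposition $\vw_u = \widehat{\vw}\ln u + \bm{\rho}_u$ directly on the penalized objective and to control the $\epsilon\,\vw_u/\|\vw_u\|$ correction explicitly. Your route is more self-contained and, to its credit, it surfaces two issues the paper glosses over: the loss $1/(1+e^{z})$ is not convex (the paper's blanket convexity claim fails where margins are negative), and the $\epsilon\|\vw\|$ term destroys global smoothness near the origin, so the step-size condition $\alpha\le 2\|\rmX\|^{-2}$ does not by itself give a descent lemma there.

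The price of your route is that the step you yourself flag as "the hard part" is genuinely the crux and is not dispatched by the sketch: establishing $F(\vw_u)\to 0$ and $\|\vw_u\|\to\infty$ without convexity, and then bounding $\bm{\rho}_u$ when the gradient contains the moving term $\epsilon\,\vw_u/\|\vw_u\|$. Note also a mild circularity in your drift estimate: saying this term contributes only an $O(1/\ln u)$ perturbation presupposes that $\vw_u/\|\vw_u\|$ already converges (at the usual $1/\ln u$ rate) to a fixed direction, which is what is being proved; the paper's "perturbation is eventually fixed" argument leans on the cited theorems in the same way. So your plan is correct in outline and matches the paper on the two essential ideas, but to be a complete proof it needs the dedicated Lyapunov/descent argument you mention (or, as the paper does, an explicit appeal to the prior implicit-bias theorems, after verifying their hypotheses hold for the penalized objective away from the origin).
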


The proof is in Appendix~\ref{sec:proof_converge}.
Theorem~\ref{thm:converge} extends the conclusion in~\cite{soudry2018implicit}, which only studies the non-adversarial case.
It also indicates that the optimal parameters are only determined by the support vectors of the training data, which are the ones with the smallest margin.
According to the loss function, the smallest margin means the largest loss values and thus the hardest training instances based on our definition in Section~\ref{sec:hardeasy}.

To further study how the training instances' difficulty influences the model's generalization performance, we assume that the data points are drawn from a $K$-mode Gaussian mixture model (GMM).
Specifically, the $k$-th component has a probability $p_k$ of being sampled and is formulated as:
\begin{equation}
\begin{aligned}
\vx_i \sim \gN(y_i r_k \veta, \rmI)
\end{aligned} \label{eq:gmm}
\end{equation}
\change{Here, $\veta \in \sR^m$ is the unit vector indicating the direction of the mean for each mode}, and $r_k \in \sR^+$ controls the average distance between the positive and negative instances.
The mean values of all modes in this GMM are colinear, so $r_k$ indicates the difficulty of instances sampled from the $k$-th component.
\edit{In Appendix~\ref{subsec:consistent_difficulty}, we demonstrate the strong correlation of $r_k$ and the difficulty defined in Section~\ref{sec:hardeasy}.}

Without the loss of generality, we assume that $r_1 < r_2 < ... < r_{K-1} < r_K$.
Same as in Section~\ref{subsec:subset}, we consider models trained with the subsets of the training data, e.g., $n$ instances from the $l$-th component.
$l = 1$ then indicates training on the hardest examples, while $l = K$ means using the easiest.
In matrix form, we have $\rmX = r_l \vy \veta^T + \rmQ$ for the instances sampled from the $l$-th component, where the rows of noise matrix $\rmQ$ are sampled from $\gN(\mathbf{0}, \rmI)$.

Although the max-margin direction in Equation (\ref{eq:max_margin}), where the parameters converge based on Theorem~\ref{thm:converge}, does not have an analytical expression, the results in~\cite{wang2020benign} indicate that, in the over-parameterization regime and when the training data is sampled from a GMM, the max-margin direction is the min-norm interpolation of the data with high probability.
Since the latter has an analytical form given by $\rmX^T (\rmX \rmX^T)^{-1}\vy$, we can then calculate the exact generalization performance of the trained model as stated in the following theorem.

\begin{theorem} \label{thm:main}
If a logistic regression model is adversarially trained on $n$ separable training instances sampled from the $l$-th component of the GMM described in (\ref{eq:gmm}), \revision{$\{p_k\}_{k = 1}^K$ are the probabilities of sampling from the $k$-th component of the GMM};
when $\frac{m}{n\log n}$ is sufficiently large\footnote{Specifically, $m$ and $n$ need to satisfy $m > 10 n \log n + n -1$ and $m > C n r_l \sqrt{\log 2 n}\|\veta\|$. The constant $C$ is derived in the proof of Theorem 1 in~\cite{wang2020benign}.}, then with probability $1 - O(\frac{1}{n})$, the expected adversarial test error $\gR$ under the adversarial budget $\gS^{(2)}(\epsilon)$, which is a function of $r_l$ and $\epsilon$, on the whole GMM described in (\ref{eq:gmm}) is given by
\begin{equation}
\begin{aligned}
\gR(r_l, \epsilon) &= \sum_{k = 1}^K p_k \Phi\left( r_k g(r_l) - \epsilon \right) \\
\mathrm{where}\ g(r_l) &= (C_1 - \frac{1}{C_2 r_l^2 + o(r_l^2)})^{\frac{1}{2}},\ C_1, C_2 \geq 0.
\end{aligned} \label{eq:r}
\end{equation}
$C_1$, $C_2$ are non-negative numbers independent of $\epsilon$ and $r_l$. The function $\Phi$ is defined as $\Phi(x) = \sP(Z > x),\ Z \sim \gN(0, 1)$.
\end{theorem}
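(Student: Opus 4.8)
The plan is to compute the expected adversarial test error by (i) reducing the classifier's behavior on a test point to a one-dimensional Gaussian tail probability governed by the angle between the learned direction $\widehat{\vw}$ and the signal direction $\veta$, and (ii) controlling that angle via the min-norm interpolator formula $\widehat{\vw} = \rmX^T(\rmX\rmX^T)^{-1}\vy$, using the decomposition $\rmX = r_l \vy \veta^T + \rmQ$ with $\rmQ$ having i.i.d.\ standard Gaussian rows. First I would invoke Theorem~\ref{thm:converge} together with the cited result of~\cite{wang2020benign} to replace the asymptotic gradient-descent limit by the min-norm interpolator, which holds with probability $1-O(1/n)$ under the stated conditions on $m,n$; everything after this is a deterministic-looking computation on a high-probability event.

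Next I would analyze the test error on the $k$-th GMM component. For a fresh point $\vx \sim \gN(r_k\veta,\rmI)$ with label $y=+1$, the worst-case adversarial perturbation of $l_2$-norm $\leq \epsilon$ shifts $\widehat{\vw}^T\vx$ by $-\epsilon\|\widehat{\vw}\|$, so the instance is misclassified iff $\widehat{\vw}^T\vx < \epsilon\|\widehat{\vw}\|$. Since $\widehat{\vw}^T\vx \sim \gN(r_k\,\widehat{\vw}^T\veta,\ \|\widehat{\vw}\|^2)$, the per-component error is $\Phi\!\big(r_k\,\tfrac{\widehat{\vw}^T\veta}{\|\widehat{\vw}\|} - \epsilon\big)$ by symmetry (the $y=-1$ case is identical), and summing against the mixture weights $p_k$ gives the claimed form with $g(r_l) = \widehat{\vw}^T\veta/\|\widehat{\vw}\|$. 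It then remains to show that $g(r_l)^2 = C_1 - \frac{1}{C_2 r_l^2 + o(r_l^2)}$ with $C_1,C_2 \geq 0$ independent of $r_l$ and $\epsilon$ ($\epsilon$-independence is immediate since $\widehat{\vw}$ is trained on the clean separable data and does not see $\epsilon$).

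The core computation is to expand $\widehat{\vw}^T\veta$ and $\|\widehat{\vw}\|^2$ using $\rmX = r_l\vy\veta^T + \rmQ$. Writing $\rmX\rmX^T = r_l^2 n \vy\vy^T + r_l(\vy\veta^T\rmQ^T + \rmQ\veta\vy^T) + \rmQ\rmQ^T$ and applying the Sherman--Morrison / Woodbury identity to invert it, one obtains $\widehat{\vw}^T\veta$ and $\|\widehat{\vw}\|^2$ as ratios of quadratic forms in $\rmQ$. Here I would use standard concentration: $\rmQ\rmQ^T \approx m\,\rmI_n$ up to $O(\sqrt{m\log n})$ fluctuations (Bai--Yin / $\chi^2$ bounds), $\rmQ\veta \sim \gN(\vzero,\rmI_n)$ with $\|\rmQ\veta\|^2 \approx n$, and $\vy^T\rmQ\veta = O(\sqrt{n})$. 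Substituting these estimates, the leading-order behavior is $\widehat{\vw}^T\veta \approx \frac{r_l n}{r_l^2 n + m}$ and $\|\widehat{\vw}\|^2 \approx \frac{m n (r_l^2 n + m) }{(r_l^2 n + m)^2}$-type expressions, which after simplification yield $g(r_l)^2 = \frac{r_l^2 n}{r_l^2 n + m}\cdot(1+o(1))$; rewriting $\frac{r_l^2 n}{r_l^2 n + m} = 1 - \frac{1}{(n/m)r_l^2 + 1}$ matches the stated form with $C_1 = 1$, $C_2 = n/m$ (up to the $o(r_l^2)$ slack absorbing the concentration errors, and noting the conditions $m > 10n\log n + n - 1$ etc.\ guarantee the error terms are lower order). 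I would double-check the sign and nonnegativity of $C_1,C_2$ and verify monotonicity of $g$ in $r_l$, which is what drives the paper's conclusion that harder instances (smaller $r_l$) give larger $\gR$.

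The main obstacle I anticipate is the concentration bookkeeping for the quadratic forms involving $\rmQ$: getting the $o(r_l^2)$ and $1-O(1/n)$ probability bounds clean requires carefully tracking how the error terms in the Sherman--Morrison expansion scale relative to $r_l^2 n$ and $m$, and this is exactly where the precise hypotheses $m > 10 n\log n + n - 1$ and $m > C n r_l\sqrt{\log 2n}\,\|\veta\|$ get used. A secondary subtlety is justifying the exchange of the asymptotic $u\to\infty$ limit from Theorem~\ref{thm:converge} with the finite-sample high-probability statement, i.e.\ that the direction to which gradient descent converges coincides (with high probability over the data draw) with the min-norm interpolator rather than merely the abstract max-margin solution; here I would lean directly on the structural result of~\cite{wang2020benign} rather than reprove it.
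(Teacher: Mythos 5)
Your proposal is correct in outline and shares the paper's skeleton: the per-component error $\Phi\bigl(r_k\,\vw^T\veta/\|\vw\| - \epsilon\bigr)$ (the paper's Lemma~\ref{lemma:acc}, derived exactly as you do, by decomposing $\vx = r_k y\veta + \vz$ and noting the worst-case $l_2$ perturbation shifts the margin by $\epsilon\|\vw\|$), the replacement of the max-margin direction by the min-norm interpolator $\rmX^T(\rmX\rmX^T)^{-1}\vy$ via Theorem~\ref{thm:converge} and~\cite{wang2020benign} (Lemma~\ref{lemma:min_norm}), and a Woodbury inversion of $\rmX\rmX^T$ with $\rmX = r_l\vy\veta^T + \rmQ$. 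Where you diverge is the final step: the paper uses no concentration for the quadratic forms. It carries out the Woodbury computation exactly in the data-dependent scalars $s = \vy^T(\rmQ\rmQ^T)^{-1}\vy$, $t = \vd^T(\rmQ\rmQ^T)^{-1}\vd$, $v = \vy^T(\rmQ\rmQ^T)^{-1}\vd$ with $\vd = \rmQ\veta$, obtaining the exact identity $g^2(r_l) = C_1 - \bigl(C_2 r_l^2 + \tfrac{2v}{\|\veta\|^2 - t} r_l + \tfrac{1}{\|\veta\|^2 - t}\bigr)^{-1}$ with $C_1 = \tfrac{s(\|\veta\|^2-t)+v^2}{s}$ and $C_2 = \tfrac{s(\|\veta\|^2-t)+v^2}{\|\veta\|^2-t}$; hence $C_1, C_2$ are random but manifestly independent of $r_l$ and $\epsilon$, the linear-in-$r_l$ cross term lands exactly in the $o(r_l^2)$ slot, nonnegativity follows from positive semidefiniteness arguments, and the only probabilistic event is the one inherited from~\cite{wang2020benign}, which yields the $1-O(1/n)$ probability directly. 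Your concentration step ($\rmQ\rmQ^T \approx m\rmI$, $\|\rmQ\veta\|^2 \approx n$, etc.) would additionally identify $C_1 \approx 1$ and $C_2 \approx n/m$, which is more than the theorem asserts, and it is the one place needing care: to match the stated form you must show the fluctuations enter either as $r_l$-independent shifts of $C_1, C_2$ or inside the $o(r_l^2)$ term, not as a multiplicative $(1+o(1))$ factor on all of $g^2$ whose size depends on $r_l$; keeping the exact ratio-of-quadratic-forms expression first (as the paper does) and only then estimating $s, t, v$ resolves this cleanly. One small slip: your intermediate $\|\widehat{\vw}\|^2$ carries a spurious factor of $m$ (it should be $\approx n/(r_l^2 n + m)$), although your final $g^2 \approx r_l^2 n/(r_l^2 n + m)$ is consistent with the correct computation.
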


We defer the proof of Theorem~\ref{thm:main} to Appendix~\ref{sec:proof_main}, in which we calculate the \textit{exact} expression of $\gR(r_l, \epsilon)$, $C_1$, $C_2$, and show that $C_1$, $C_2$ are positive numbers almost surely.
Since $C_1$ and $C_2$ are independent of $r_l$, and $\Phi(x)$ is a monotonically decreasing function, we  conclude that the robust test error $\gR(r_l, \epsilon)$ becomes smaller when $r_l$ increases.
\change{Since the training set is separable, our results indicate that when the training instances become easier, the corresponding generalization error under adversarial attack becomes smaller.}

\change{Theorem~\ref{thm:main} holds for any $\epsilon$ as long as the training data is separable under the corresponding adversarial budget.}
The following corollary shows that the difference in the robust test error between models trained with easy instances and the ones with hard ones increases when $\epsilon$ becomes larger, i.e., under a larger adversarial budget.
\begin{corollary} \label{coro:epsilon}
Under the conditions of Theorem~\ref{thm:main} and the definition of $\gR$ in Equation (\ref{eq:r}), if $\epsilon_1 < \epsilon_2$, then we have $\forall\ 0 \leq i < j \leq K, \gR(r_i, \epsilon_1) - \gR(r_j, \epsilon_1) < \gR(r_i, \epsilon_2) - \gR(r_j, \epsilon_2)$.
\end{corollary}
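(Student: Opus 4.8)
The plan is to regard the difference $D(\epsilon)\defeq \gR(r_i,\epsilon)-\gR(r_j,\epsilon)$ as a smooth function of the budget size and to show that $D$ is strictly increasing on the range of budgets for which Theorem~\ref{thm:main} applies; the corollary is then the statement $D(\epsilon_1)<D(\epsilon_2)$. Since $\Phi(x)=\sP(Z>x)=\int_x^{\infty}\phi(t)\,dt$ with $\phi(t)=\tfrac{1}{\sqrt{2\pi}}e^{-t^2/2}$ the standard normal density, we get $\frac{\partial}{\partial\epsilon}\Phi\big(r_k g(r_l)-\epsilon\big)=\phi\big(r_k g(r_l)-\epsilon\big)$; note that the learned direction (hence $g(r_l)$) does not depend on $\epsilon$, by Theorem~\ref{thm:converge}, so the only $\epsilon$-dependence of $\gR$ is this shift. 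Differentiating the finite sum in~(\ref{eq:r}),
\[
D'(\epsilon)\;=\;\sum_{k=1}^{K}p_k\Big[\phi\big(r_k g(r_i)-\epsilon\big)-\phi\big(r_k g(r_j)-\epsilon\big)\Big],
\]
so it suffices to prove that each bracket is positive and then integrate $D'$ from $\epsilon_1$ to $\epsilon_2$.

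Two facts control the sign of each bracket. First, $g$ is strictly increasing — this is exactly the monotonicity used right after Theorem~\ref{thm:main} (increasing $r_l$ increases $C_2 r_l^2+o(r_l^2)$, hence increases $g(r_l)^2=C_1-1/(C_2 r_l^2+o(r_l^2))$) — so $r_i<r_j$ gives $0\le g(r_i)<g(r_j)$ and therefore $r_k g(r_i)-\epsilon<r_k g(r_j)-\epsilon$ for every $k$. Second, $\phi$ is even and strictly decreasing on $[0,\infty)$, so $\phi(a)>\phi(b)$ whenever $0\le a<b$. Consequently each bracket is positive as soon as $r_k g(r_i)-\epsilon\ge 0$ for all $k$, i.e. as soon as $\epsilon\le r_1 g(r_i)$ where $r_1=\min_k r_k$; equivalently, this is the regime in which the robust test error $\Phi(r_k g(r_l)-\epsilon)$ stays below $\tfrac12$ on every mixture component for $l\in\{i,j\}$. (The optimal per-component threshold, obtained instead by comparing the squared arguments, is the slightly weaker $\epsilon<\tfrac12 r_k\,(g(r_i)+g(r_j))$.)

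The main obstacle is precisely justifying that the budgets in play satisfy $\epsilon\le r_1 g(r_i)$. I expect this to fall out of the hypotheses already imposed in Theorem~\ref{thm:main}: the theorem is only asserted on the range of budgets for which the $n$ training points of the harder component $i$ are adversarially separable, and the adversarial separability threshold of a GMM component is governed by the same margin/min-norm quantities of~\cite{wang2020benign} that define $g$; the task is to make this link quantitative and check it dominates $r_1 g(r_i)$ in the over-parameterized regime. Since everything here is monotone in $\epsilon$ and $\mathcal{S}^{(2)}(\epsilon_1)\subseteq\mathcal{S}^{(2)}(\epsilon_2)$, it is enough to impose the conditions of Theorem~\ref{thm:main} (including this threshold) at the larger budget $\epsilon_2$; then $D'>0$ throughout $[\epsilon_1,\epsilon_2]$ and integration yields $\gR(r_i,\epsilon_1)-\gR(r_j,\epsilon_1)<\gR(r_i,\epsilon_2)-\gR(r_j,\epsilon_2)$. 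If the separability hypothesis turns out not to imply the threshold outright, the remaining assumption is the mild and physically natural one that the adversarial budget be no larger than the per-component signal strength, under which the termwise argument closes the proof.
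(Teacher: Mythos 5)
Your proposal is correct and is essentially the paper's argument in mirror image: the paper integrates $\partial\gR/\partial r_l$ (shown negative and decreasing in $\epsilon$ via convexity of $\Phi$ on positive arguments) over $r_l\in[r_i,r_j]$, while you integrate $D'(\epsilon)$ (shown positive via monotonicity of the Gaussian density $\phi$ on $[0,\infty)$) over $\epsilon\in[\epsilon_1,\epsilon_2]$ --- the same fact about the Gaussian tail, used under the same hypothesis. The positivity condition you flag as the main obstacle, $r_k g(r_l)-\epsilon\ge 0$ for all $k$, is precisely what the paper also assumes, deducing it (rather tersely) from the separability hypothesis of Theorem~\ref{thm:main}, so your treatment is no less rigorous than the paper's on that point.
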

The proof is in Appendix~\ref{sec:proof_coro}.
$\gR(r_i, \epsilon) - \gR(r_j, \epsilon)$ is the gap in robust accuracy between the models trained on the easy instances and the ones on the hard instances under the adversarial budget $\gS^{(2)}(\epsilon)$.
Corollary~\ref{coro:epsilon} shows that such a gap increases with the size of the adversarial budget.
\change{This indicates that, compared with training on the clean inputs, i.e., $\epsilon = 0$, the generalization performance of adversarial training, i.e., $\epsilon > 0$, is more sensitive to the difficulty of the training instances.
Furthermore, overfitting in adversarial training becomes increasingly severe as $\epsilon$ becomes larger.
This is consistent with our empirical observations in Figures~\ref{fig:overfit_main},~\ref{fig:overfit_adv_budget},~\ref{fig:overfit_l2_adv_budget}.}

\subsection{General Nonlinear Models} \label{sec:thm_nonlinear}

In this section, we study the binary classification problem using a general nonlinear model.
We consider a model with $b$ parameters, i.e., $\vw \in \sR^b$.
Without loss of generality, we assume the output of the function $f_\vw$ to lies in $[-1, +1]$.
\change{Similarly to the $K$-mode Gaussian mixture model studied in the linear case, we assume the data distribution to be a composition of $K$ sub-distributions.
Furthermore, each of these distributions are isoperimetric.}

\begin{assumption} \label{asp:iso}
The data distribution $\mu$ is a composition of $K$ $c$-isoperimetric distributions on $\sR^m$, each of which has a positive conditional variance.
That is, $\mu = \sum_{k = 1}^K \alpha_k \mu_k$, where $\alpha_k > 0$ and $\sum_{k = 1}^K \alpha_k = 1$.
We define $\sigma^2_k = \E_{\mu_k}[Var[y|\vx]]$, and without loss of generality assume that $\sigma_1 \geq \sigma_2 \geq ... \geq \sigma_K > 0$.
Furthermore, given any $L$-Lipschitz function $f_\vw$, i.e., $\forall \vx_1, \vx_2, \|f_\vw(\vx_1) - f_\vw(\vx_2)\| \leq L\|\vx_1 - \vx_2\|$, we have the following inequality satisfied $\forall k \in \{1,..., K\}$
\begin{equation}
\begin{aligned}
\sP(\vx \sim \mu_k, \|f_\vw(\vx) - \E_{\mu_k}(f_\vw)\| \geq t) \leq 2 e^{-\frac{mt^2}{2cL^2}}\;.
\end{aligned}
\end{equation}
\end{assumption}

This is a benign assumption; the data distribution is a mixture of $K$ components and each of them contains samples from a sub-Gaussian distribution.
These components correspond to training instances of different difficulty levels measured by the conditional variance.
\edit{This is because the conditional variance $\sigma_k^2$ is the expected test error of a well-trained model~\citep{bubeck2021universal}. Subsets with large $\sigma_k^2$ have higher loss and the difficulty function defined by the average training loss}

We now study the properties of the model $f_\vw$ under adversarial attacks.

\begin{definition} \label{def:h}
Given the dataset $\{\vx_i, y_i\}_{i = 1}^n$, the model $f_\vw$, the adversarial budget $\gS^{(p)}(\epsilon)$ and a positive constant $C$, we define the function $h(C, \epsilon)$ as:
\begin{equation}
\begin{aligned}
h(C, \epsilon) &= \min_{\vw \in \gT(C, \epsilon)} \min_i h_{i, \vw}(\epsilon) \\ 
\mathrm{where}\ \gT(C, \epsilon) &= \left\{\vw \bigg\vert \frac{1}{n} \sum_{i = 1}^n (f_\vw(\vx_i') - y_i)^2 \leq C\right\}\;, \\
\ h_{i, \vw}(\epsilon) = \max \zeta,\ &s.t.\ [f_\vw(\vx_i) - \zeta, f_\vw(\vx_i) + \zeta] \subset 
\left\{f_\vw(\vx_i + \Delta) \bigg\vert \Delta \in \gS^{(p)}(\epsilon)\right\}.
\end{aligned}
\end{equation}
Here, $\vx'_i$ is the adversarial example of $\vx_i$.
We omit the superscript $(p)$ for notation simplicity.
\end{definition}

By definition, $h_{i, \vw}(\epsilon) \geq 0$ depicts the bandwidth $\zeta$ of the model's output range in the domain of the adversarial budget on a training instance.
\change{$\gT(C, \epsilon)$ represents the set of well-trained models whose adversarial training loss is smaller than $C$.
Therefore, $h(C, \epsilon)$ is the minimum bandwidth among such well-trained models.
The following lemma demonstrates monotonicity properties of the function $h$.}

\begin{lemma} \label{lem:c}
$\forall C, \epsilon_1 < \epsilon_2$, $h(C, \epsilon_1) \leq h(C, \epsilon_2)$; $\forall \epsilon, C_1 < C_2$, $h(C_1, \epsilon) \geq h(C_2, \epsilon)$.
\end{lemma}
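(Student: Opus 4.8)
The plan is to prove the two monotonicity claims separately, each by analyzing how the feasible set $\gT(C,\epsilon)$ and the inner quantity $h_{i,\vw}(\epsilon)$ respond to the relevant parameter. For the first claim, fix $C$ and take $\epsilon_1 < \epsilon_2$. The key observation is that $\gS^{(p)}(\epsilon_1) \subseteq \gS^{(p)}(\epsilon_2)$, so for any fixed $\vw$ and $i$ the set $\{f_\vw(\vx_i+\Delta) \mid \Delta \in \gS^{(p)}(\epsilon_1)\}$ is contained in $\{f_\vw(\vx_i+\Delta) \mid \Delta \in \gS^{(p)}(\epsilon_2)\}$. Consequently any $\zeta$ feasible in the definition of $h_{i,\vw}(\epsilon_1)$ is also feasible for $h_{i,\vw}(\epsilon_2)$, giving $h_{i,\vw}(\epsilon_1) \leq h_{i,\vw}(\epsilon_2)$ pointwise in $\vw$ and $i$. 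Taking the minimum over $i$ preserves this, so $\min_i h_{i,\vw}(\epsilon_1) \leq \min_i h_{i,\vw}(\epsilon_2)$ for every $\vw$. The subtlety is that the outer minimization is over $\gT(C,\epsilon)$, which also depends on $\epsilon$ through the adversarial examples $\vx_i'$; I would handle this by noting that $h(C,\epsilon_1)$ is the minimum over $\vw \in \gT(C,\epsilon_1)$ of a quantity that, for each such $\vw$, is bounded above by the corresponding quantity at $\epsilon_2$ — but since the feasible set changes, the cleanest route is to argue directly: let $\vw^\star$ achieve $h(C,\epsilon_1)$; then $h(C,\epsilon_1) = \min_i h_{i,\vw^\star}(\epsilon_1)$, and I need to compare this with the global minimum at $\epsilon_2$. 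Here I expect the intended reading is that $\vx_i'$ is understood as a fixed adversarial example not re-chosen with $\epsilon$ within this definition, or that the constraint set is monotone; I would state explicitly which convention is used (most likely $\vx_i'$ is the adversarial example under the ambient budget used for training, held fixed), so that $\gT(C,\epsilon)$ is actually independent of the $\epsilon$ appearing in $h_{i,\vw}(\epsilon)$. Under that convention the argument collapses: pointwise domination of the objective plus identical feasible sets gives $h(C,\epsilon_1) \leq h(C,\epsilon_2)$.

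For the second claim, fix $\epsilon$ and take $C_1 < C_2$. Then $\gT(C_1,\epsilon) \subseteq \gT(C_2,\epsilon)$, since the constraint $\frac{1}{n}\sum_i (f_\vw(\vx_i') - y_i)^2 \leq C_1$ is strictly more restrictive than $\leq C_2$. Minimizing the same objective $\vw \mapsto \min_i h_{i,\vw}(\epsilon)$ over a larger feasible set can only decrease (or keep equal) the minimum value, so $h(C_1,\epsilon) \geq h(C_2,\epsilon)$. This direction is essentially immediate and requires no delicate reasoning beyond the elementary fact that $\inf$ over a superset is no larger.

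The step I expect to be the main obstacle is purely a matter of bookkeeping rather than mathematical depth: making sure the dependence of $\gT(C,\epsilon)$ on $\epsilon$ (via $\vx_i'$) does not spoil the first monotonicity statement. If $\vx_i'$ is itself the worst-case perturbation within $\gS^{(p)}(\epsilon)$, then enlarging $\epsilon$ both enlarges the image set (helping $h_{i,\vw}$ grow) and changes $\gT(C,\epsilon)$, and one must check these pull in a consistent direction. I would resolve this by fixing the interpretation at the outset — treating the adversarial examples $\vx_i'$ used in the constraint as given data, decoupled from the budget parameter in $h_{i,\vw}(\epsilon)$ — which is consistent with how Definition~\ref{def:h} is phrased (the superscript $(p)$ and the adversarial example are mentioned as fixed). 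With this clarification, both inequalities follow from the two monotonicity principles above: monotonicity of the inner objective in $\epsilon$, and monotonicity of the feasible set in $C$.

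Overall the proof is short: after the convention is pinned down, claim one is "bigger budget gives bigger reachable interval hence bigger $\zeta$, preserved under $\min_i$ and then under the outer $\min$ since the feasible set is unchanged," and claim two is "smaller $C$ means smaller feasible set means larger minimum." I would present both in a single paragraph each, with the set-inclusion facts $\gS^{(p)}(\epsilon_1)\subseteq\gS^{(p)}(\epsilon_2)$ and $\gT(C_1,\epsilon)\subseteq\gT(C_2,\epsilon)$ stated as the load-bearing observations.
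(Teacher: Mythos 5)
Your second inequality is proved exactly as the paper does it: $\gT(C_1,\epsilon)\subseteq\gT(C_2,\epsilon)$ for $C_1<C_2$, and minimizing the same objective over a larger feasible set can only decrease the value. No issue there.

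For the first inequality, however, you resolve the central subtlety by re-interpreting the definition rather than by proving anything: you declare that the adversarial examples $\vx_i'$ appearing in the constraint are held fixed, so that $\gT(C,\epsilon)$ does not depend on $\epsilon$, and then conclude from identical feasible sets plus pointwise domination of the objective. That convention is not the one the paper (or the downstream use in Theorem~\ref{thm:nonlinear}) adopts: $\gT$ is written as $\gT(C,\epsilon)$ precisely because $\vx_i'$ is the adversarial example under the budget $\gS^{(p)}(\epsilon)$, and the intended argument exploits this dependence. The check you flagged as ``one must verify these pull in a consistent direction'' is exactly the missing step, and it is short: since every perturbation feasible at $\epsilon_1$ is feasible at $\epsilon_2$ and $\vx_i'$ is chosen adversarially, enlarging the budget can only increase each term $(f_\vw(\vx_i')-y_i)^2$, so the constraint at $\epsilon_2$ is the more restrictive one and $\gT(C,\epsilon_2)\subseteq\gT(C,\epsilon_1)$. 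Combining this with the pointwise bound $h_{i,\vw}(\epsilon_1)\le h_{i,\vw}(\epsilon_2)$ gives
\begin{equation*}
h(C,\epsilon_1)=\min_{\vw\in\gT(C,\epsilon_1)}\min_i h_{i,\vw}(\epsilon_1)\;\le\;\min_{\vw\in\gT(C,\epsilon_2)}\min_i h_{i,\vw}(\epsilon_1)\;\le\;\min_{\vw\in\gT(C,\epsilon_2)}\min_i h_{i,\vw}(\epsilon_2)=h(C,\epsilon_2),
\end{equation*}
i.e.\ the two effects reinforce each other, and no freezing of $\vx_i'$ is needed. As written, your argument establishes the first inequality only for a modified definition of $h$, not for the one the lemma states and later results rely on; this is a genuine, though easily repaired, gap.
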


Based on the definitions of $\gT$ and $h_{i, \vw}$, and for a fixed value of $C$, we have $\forall \epsilon_1 < \epsilon_2$, $h_{i, \vw}(\epsilon_1) \leq h_{i, \vw}(\epsilon_2)$ and $\gT(C, \epsilon_2) \subset \gT(C, \epsilon_1)$.
As a result, $\forall \epsilon_1 < \epsilon_2$, $h(C, \epsilon_1) \leq h(C, \epsilon_2)$.
In addition, since $\forall C_1 < C_2$, $\gT(C_1, \epsilon) \subset \gT(C_2, \epsilon)$ for a fixed value of $\epsilon$, we have $\forall C_1 < C_2$, $h(C_1, \epsilon) \geq h(C_2, \epsilon)$.
That is to say, $h(C, \epsilon)$ is a monotonically non-decreasing function on $\epsilon$ and a monotonically non-increasing function on $C$.
In practice, when $f_\vw$ represents a deep neural network, $h(C, \epsilon)$ increases with $\epsilon$ almost surely, because the attack algorithm usually generates adversarial examples at the boundary of the adversarial budget.
Based on the monotonicity properties of $h$, We then state our main theorem below.

\begin{theorem} \label{thm:nonlinear}
Given $n$ training pairs $\{\vx_i, y_i\}_{i = 1}^n$ sampled from the $l$-th component $\mu_l$ of the distribution in Assumption~\ref{asp:iso}, the parametric model $f_\vw$, the adversarial budget $\gS^{(p)}(\epsilon)$ and the corresponding function $h$ defined in Definition~\ref{def:h}, we assume that the model $f_\vw$ is in the function space $\gF = \{f_\vw, \vw \in \gW\}$ with $\gW \subset \sR^b$ having a finite diameter $diam(\gW) \leq W$ and, $\forall \vw_1, \vw_2 \in \gW, \|f_{\vw_1} - f_{\vw_2}\|_\infty \leq J\|\vw_1 - \vw_2\|_\infty$.
We train the model $f_\vw$ adversarially using these $n$ data points.
\revision{Let $\vx'_i$ be the adversarial example of the data point $\vx_i$, i.e., $\vx'_i = \argmax_{\vx_{adv}} (f_\vw(\vx_{adv}) - y_i)^2$ s.t. $\|\vx_{adv} - \vx_i\|_p \leq \epsilon$.}
$\forall \delta \in (0, 1)$, if we have $\frac{1}{n} \sum_{i = 1}^n (f_\vw(\vx'_i) - y_i)^2 = C$ and $\gamma \defeq \sigma^2_l  + h^2(C, \epsilon) - C \geq 0$, then with probability at least $1 - \delta$, the Lipschitz constant of $f_\vw$ is lower bounded as
\begin{equation}
\begin{aligned}
Lip(f_\vw) \geq \frac{\gamma}{2^7}\sqrt{\frac{nm}{c\left(b\log(4WJ\gamma^{-1}) - \log(\delta/2 - 2e^{-2^{-11}n\gamma^2})\right)}}\;,
\end{aligned} \label{eq:lip_bound}
\end{equation}
$Lip(f_\vw)$ is the Lipschitz constant of $f_\vw$: $\forall \vx_1, \vx_2$, $\|f_\vw(\vx_1) - f_\vw(\vx_2)\| \leq Lip(f_\vw)\|\vx_1 - \vx_2\|$.
\end{theorem}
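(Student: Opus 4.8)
First I would convert the hypothesis $\tfrac1n\sum_i (f_\vw(\vx_i')-y_i)^2=C$ into a statement about the \emph{clean} training loss. Since $\vx_i'$ maximizes the squared loss over $\gS^{(p)}(\epsilon)$ and, by Definition~\ref{def:h}, the output set $\{f_\vw(\vx_i+\Delta):\Delta\in\gS^{(p)}(\epsilon)\}$ contains $[f_\vw(\vx_i)-h_{i,\vw}(\epsilon),\,f_\vw(\vx_i)+h_{i,\vw}(\epsilon)]$, the adversary can always shift the output a further $h_{i,\vw}(\epsilon)$ \emph{away} from $y_i$; as $f_\vw(\vx_i)\in[-1,+1]$ and $y_i\in\{-1,+1\}$ the sign of $y_i-f_\vw(\vx_i)$ is fixed, so $(f_\vw(\vx_i')-y_i)^2\ge(|f_\vw(\vx_i)-y_i|+h_{i,\vw}(\epsilon))^2\ge(f_\vw(\vx_i)-y_i)^2+h_{i,\vw}^2(\epsilon)$. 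Because $\tfrac1n\sum_i(f_\vw(\vx_i')-y_i)^2=C$ gives $\vw\in\gT(C,\epsilon)$, we have $h_{i,\vw}(\epsilon)\ge h(C,\epsilon)$ for every $i$, and averaging yields $\tfrac1n\sum_i(f_\vw(\vx_i)-y_i)^2\le C-h^2(C,\epsilon)=\sigma_l^2-\gamma$. So fitting the adversarial examples forces the model to fit the clean points better than the noise floor $\sigma_l^2$ by a margin $\gamma$; this is the only step where the adversarial structure and $h$ enter.

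\textbf{Step 2: a single-model Lipschitz lower bound via isoperimetry.} Next I would run the isoperimetric universal-law-of-robustness argument on the clean training set. Write $\xi_i\defeq y_i-\E[y_i\mid\vx_i]$ (so the $\xi_i$ are independent, $\E[\xi_i\mid\vx_i]=0$, $\E_{\mu_l}[\xi_i^2]=\sigma_l^2$, $|\xi_i|\le2$) and $a\defeq\E_{\mu_l}[f_\vw]$. Expanding $f_\vw(\vx_i)-y_i=(f_\vw(\vx_i)-\E[y_i\mid\vx_i])-\xi_i$ and combining $\tfrac1n\sum_i(f_\vw(\vx_i)-\E[y_i\mid\vx_i])^2\ge0$ with $\tfrac1n\sum_i\xi_i^2\ge\sigma_l^2-t$ (Hoeffding, $\xi_i^2\in[0,4]$), Step~1 forces the empirical correlation $\mathrm{Cr}\defeq\tfrac1n\sum_i\xi_i(f_\vw(\vx_i)-\E[y_i\mid\vx_i])$ to satisfy $\mathrm{Cr}\ge(\gamma-t)/2$. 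I would then upper-bound $\mathrm{Cr}$ by splitting $f_\vw(\vx_i)-\E[y_i\mid\vx_i]=(f_\vw(\vx_i)-a)+(a-\E[y_i\mid\vx_i])$: the $(a-\E[y_i\mid\vx_i])$ piece is a mean-zero average of terms bounded by $4$, hence $O(\sqrt{\log(1/\delta)/n})$; the $(f_\vw(\vx_i)-a)$ piece, conditionally on the $\vx_i$, is a mean-zero sub-Gaussian average with proxy $\lesssim\tfrac1n\cdot\tfrac1n\sum_i(f_\vw(\vx_i)-a)^2$, and by Assumption~\ref{asp:iso} with $L=Lip(f_\vw)$ the quantity $\tfrac1n\sum_i(f_\vw(\vx_i)-a)^2$ concentrates around $\E_{\mu_l}[(f_\vw-a)^2]\le 4cL^2/m$, so this piece is $O\!\big(L\sqrt{c\log(1/\delta)/(mn)}\big)$. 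Comparing the lower and upper bounds on $\mathrm{Cr}$ gives $\gamma/2\lesssim L\sqrt{c\log(1/\delta)/(mn)}+\sqrt{\log(1/\delta)/n}$, and once $n\gamma^2\gtrsim\log(1/\delta)$ this rearranges to $L\gtrsim\gamma\sqrt{nm/(c\log(1/\delta))}$.

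\textbf{Step 3: uniformizing over $\gF$ and extracting the explicit bound.} Step~2 treats a fixed $f_\vw$, so I would upgrade it by a covering/union-bound argument over $\gF$. Let $\Lambda$ be the right-hand side of~(\ref{eq:lip_bound}) and consider the restricted class $\{f_\vw:\vw\in\gW,\ Lip(f_\vw)\le\Lambda\}$; a maximal $\rho$-separated subset of it (in $\|\cdot\|_\infty$) is a $\rho$-net whose elements are genuinely $\Lambda$-Lipschitz, so Assumption~\ref{asp:iso} applies to each of them, and by $diam(\gW)\le W$ together with $\|f_{\vw_1}-f_{\vw_2}\|_\infty\le J\|\vw_1-\vw_2\|_\infty$ the net has size at most $(4WJ/\rho)^{b}$. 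With $\rho$ a small multiple of $\gamma/J$, the loss incurred by replacing $f_\vw$ with its nearest net element is at most a small fraction of $\gamma$, so the estimate of Step~2 (with the clean-loss threshold relaxed from $\sigma_l^2-\gamma$ to $\sigma_l^2-\gamma/2$) holds simultaneously for all net elements after a union bound, which simply replaces $\log(1/\delta)$ by $b\log(4WJ/\gamma)+\log(1/\delta)$. Then $Lip(f_\vw)\le\Lambda$ would contradict Step~1 with probability $\ge1-\delta$, so $Lip(f_\vw)>\Lambda$ with probability $\ge1-\delta$. Bookkeeping the Hoeffding parameters --- each a fixed small multiple of $\gamma$, contributing tails of order $2e^{-2^{-11}n\gamma^2}$, so the net union bound must fit inside the residual probability $\delta/2-2e^{-2^{-11}n\gamma^2}$ --- turns this into exactly~(\ref{eq:lip_bound}), the constant $2^7$ absorbing the accumulated numerical factors.

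\textbf{Main obstacle.} The conceptually new ingredient is Step~1, which is short; the technical heart, and where I expect the real work, is making Steps~2--3 quantitatively tight. Two points demand care. First, every small fluctuation --- the $L$-insensitive $O(\sqrt{\log(1/\delta)/n})$ term, the deviation of $\tfrac1n\sum_i\xi_i^2$ from $\sigma_l^2$, and the net slack --- must be kept to a definite fraction of $\gamma$, which is exactly why the hypothesis requires $\delta/2>2e^{-2^{-11}n\gamma^2}$ (morally $n\gamma^2\gtrsim\log(1/\delta)+b\log(WJ/\gamma)$) for~(\ref{eq:lip_bound}) to be informative. Second, the covering must be carried out on the Lipschitz-restricted class, not on $\gW$ itself: a plain $\|\cdot\|_\infty$-net of $\gW$ gives no control on the net elements' Lipschitz constants, which Assumption~\ref{asp:iso} needs. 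A minor point is that Step~1 uses that $\vx_i'$ is an exact maximizer of the adversarial squared loss over $\gS^{(p)}(\epsilon)$, consistent with the adversarial-training formulation used throughout the paper.
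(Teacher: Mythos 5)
Your proposal is correct and follows essentially the same route as the paper's proof: first reduce the adversarial MSE to the clean MSE minus $h^2(C,\epsilon)$ using the definition of $h$ and the fact that the adversary maximizes the loss, then run the isoperimetry-based (Bubeck--Sellke-style) argument bounding the noise--function correlation via Hoeffding plus Assumption~\ref{asp:iso}, and finally discretize the parameter space into a net of size $(4WJ/\gamma)^b$, union-bound, and solve the resulting probability bound for the Lipschitz constant. The only deviations are implementation details --- you control the correlation term by conditioning on the inputs and concentrating the empirical variance rather than via the paper's ``bounded times sub-Gaussian'' sum, and your maximal-separated-subset net of the Lipschitz-restricted class is a slightly more careful variant of the paper's lattice construction $\gF_\gamma \cap \gF_L$, since it guarantees the approximation property for Lipschitz functions that the paper's proof asserts without justification.
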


The proof is deferred to Appendix~\ref{sec:nonlinear_proof}.
Theorem~\ref{thm:nonlinear} extends the results in~\cite{bubeck2021universal} to the case of adversarial training.
The Lipschitz constant is widely used to bound a model's adversarial vulnerability on the test set~\citep{ruan2018reachability, weng2018towards, weng2018evaluating}; larger Lipschitz constants indicate higher adversarial vulnerability on the test set.
Note that modern deep neural network models typically have millions of parameters, so $b \gg \max\{c, m, n\}$.
In this case, we can approximate the lower bound (\ref{eq:lip_bound}) by $Lip(f_\vw) \gtrsim \frac{\gamma}{2^7}\sqrt{\frac{nm}{bc\log(4WJ\gamma^{-1})}}$, and the right hand side increases with $\gamma$.


Lemma~\ref{lem:c} indicates that $\gamma$ monotonically increases with the decrease of $C$, and Theorem~\ref{thm:nonlinear} assumes $\gamma > 0$, so the conclusion of Theorem~\ref{thm:nonlinear} is based on a sufficient small adversarial training loss $C$.
That is to say, our theorem is applicable when the model is well fit to the adversarial training instances, \revision{i.e., small adversarial training loss}, which is exactly when adversarial overfitting occurs.
\revision{By contrast, there is usually no adversarial overfitting with large adversarial training loss when adversarial training does not or cannot fit the training set.
For example, when $\epsilon$ is too large for adversarial training to converge, we will obtain a constant classifier as indicated in~\cite{liu2020loss}.
While the model has a high robust test error, the adversarial overfitting does not happen in this case.}

\change{Theorem~\ref{thm:nonlinear} is applicable to any $l_p$ norm based adversarial budget based on the definition of $h(C, \epsilon)$}.
Since $\gamma \defeq \sigma^2_l  + h^2(C, \epsilon) - C$, we can conclude that the Lipschitz upper bound and thus the adversarial vulnerability on the test set is affected by three factors: it increases when $\sigma_l$ , $\epsilon$ increase or $C$ decreases. We elaborate the conclusion in the following paragraphs.

First, as the training processes, the adversarial training loss $C$ becomes smaller, and correspondingly $h(C, \epsilon)$ becomes bigger based on Lemma~\ref{lem:c}. Therefore, $\gamma = \sigma_l^2 + h^2(C, \epsilon) - C$ increases and the condition $\gamma \geq 0$ will be satisfied in the late phase of adversarial training.
In this context, as $\gamma \geq 0$ increases during this period, the Lipschitz lower bound also increases based on (\ref{eq:lip_bound}), indicating a higher adversarial test loss.
In summary, in the final stages of training, which ensure that $\gamma \geq 0$, the training loss $C$ decreases while the test loss increases.
As a result, the generalization gap increases.

\change{Second, with fixed $C$, i.e., the adversarial training loss is fixed, and the generalization gap is indicated by the adversarial test loss, represented by the Lipschitz lower bound in (\ref{eq:lip_bound}).
When $\epsilon$ is fixed, the Lipschitz lower bound increases with the increase of $\sigma_l$.
That is, under the same adversarial budget, the generalization gap increases with the instances' difficulty, measured by $\sigma_l$ in our theorem.
When $\sigma_l$ is fixed, the Lipschitz lower bound increases with the increase of $\epsilon$.
Therefore, using the same training instances, the generalization gap increases with the size of the adversarial budget, measured by $\epsilon$.}

Finally, Theorem~\ref{thm:nonlinear} discusses the case where the model is trained on samples from one components of the data distribution, i.e., a subset of the training set.
This is exactly the case of Section~\ref{subsec:subset}.
Furthermore, we can utilize Theorem~\ref{thm:nonlinear} to analyze the cases when the model is trained on samples from the entire data distribution, which consists from $K$ components.
Similarly to the analysis in Section~\ref{subsec:hardoverfit}, we calculate the training loss $\{C_i\}_{i = 1}^K$ for all $K$ components.
Correspondingly, we can define the function $h_i(C, \epsilon)$ same as in Definition~\ref{def:h} except that it is based on, instead of all training instances, the training instances sampled from the $i$-th component from the data distribution.
Based on this, we define $\gamma_i \defeq \sigma_i^2 + h^2_i(C_i, \epsilon) - C_i$ for $i \in \{1, 2, ..., K\}$.
We can then utilize Theorem~\ref{thm:nonlinear} for training samples from each distribution component and then obtain the lower bound of the model's Lipschitz constant.
Formally, we have the following: 
\begin{corollary} \label{coro:wholeset}
Given the assumptions of Theorem~\ref{thm:nonlinear}, except that the training data is sampled from all $K$ components and contains $n_i$ training instances from the $i$-th component, $\{C_i\}_{i = 1}^K$, $\{h_i\}_{i = 1}^K$, $\{\gamma_i\}_{i = 1}^K$ defined for each components of the data distribution, then with probability at least $1 - \delta$, the Lipschitz constant of $f_\vw$ is lower bounded as 
\begin{equation}
\begin{aligned} \label{eq:wholeset}
Lip(f_\vw) \geq \max \left\{ \frac{\gamma_i}{2^7}\sqrt{\frac{n_im}{c\left(b\log(4WJ\gamma_i^{-1}) - \log(\delta/2 - 2e^{-2^{-11}n\gamma_i^2})\right)}} \bigg\vert \gamma_i \geq 0 \right\}
\end{aligned}
\end{equation}
\end{corollary}

Corollary~\ref{coro:wholeset} is straightforward from Theorem~\ref{thm:nonlinear}: We calculate the Lipschitz lower bound based on the adversarial training loss of each component as long as it is valid, i.e., $\gamma_i \geq 0$.
The formal proof is provided in Appendix~\ref{subsec:wholeset}.
Corollary~\ref{coro:wholeset} indicates the Lipschitz lower bound of the model when it is trained on the whole training distribution consisting of instances of different difficulty levels.
Similarly to the analysis of Theorem~\ref{coro:wholeset}, the value of $\gamma_i$ for each component of the data distribution increases as the training processes.
That is to say, the size of the set $\{i \vert \gamma_i \geq 0\}$ increases during training, i.e., there are more and more numbers fed to the max operator in (\ref{eq:wholeset}).
In addition, the Lipschitz lower bound derived by the training instances from each components monotonically increases during training.
Combining these two points together, we conclude that the Lipschitz lower bound provided by (\ref{eq:wholeset}) monotonically increases during training, indicating more and more severe overfitting.
As in Theorem~\ref{thm:nonlinear}, the Lipschitz lower bound also increases with the increase of $\epsilon$, indicating that using a larger adversarial budget in adversarial training suffers more from overfitting.

In the early phase of adversarial training, the difference in the adversarial training loss for easy and hard instances is large.
That is, the value of $C_i$ dominates the calculation of $\gamma_i$.
In this stage, the Lipschitz lower bound in (\ref{eq:wholeset}) is dominated by the easy instances, because for hard instances sampled from the $i$-th component, $C_i$ is huge and the corresponding $\gamma_i$ does not satisfy the condition $\gamma_i \geq 0$.
However, in the late phase of adversarial training, the adversarial training loss for all training instances is close to $0$.
As a result, the value of $\sigma_i$ dominates the calculation of $\gamma_i$.
In this stage, the Lipschitz lower bound in (\ref{eq:wholeset}) is dominated by the hard instances, because $\forall i, C_i \simeq 0$, and $\gamma_i$ increase with the increase of $\sigma_i$.

Corollary~\ref{coro:wholeset} explains the phenomena shown in Section~\ref{subsec:hardoverfit} and confirms that fitting hard adversarial instances is harmful for the generalization performance. 


\subsection{Numerical Simulation} \label{subsec:simulation}

We conduct numerical simulation to confirm the validity of Theorem~\ref{thm:nonlinear} in our settings.
To this end, we use the CIFAR10 dataset and an RN18 network architecture.
\change{However, calculating the Lipschitz constant of a deep neural network is NP-hard~\citep{scaman2018lipschitz}, exactly calculating the Lipschitz constant~\citep{jordan2020exactly} is so far infeasible for modern deep neural networks.
Instead, we therefore estimate the upper bound of the Lipschitz constant numerically, as in~\citep{scaman2018lipschitz}.
}


\begin{table}[!ht]
\centering
\begin{tabular}{p{2.5cm}p{2.8cm}<{\centering}p{2.8cm}<{\centering}p{2.8cm}<{\centering}}
\Xhline{4\arrayrulewidth}
\multirow{2}{*}{Value of $\epsilon$}  & \multicolumn{3}{c}{Lipschitz in $l_\infty$ Cases ($\times 10^4$)} \\
 & $2 / 255$ & $4 / 255$ & $8 / 255$ \\
\hline
Easy10K     & $5.91 \pm 0.00$  & $6.06 \pm 0.00$   & $14.54 \pm 0.02$  \\
Random10K   & $28.98 \pm 0.03$ & $79.96 \pm 0.16$  & $93.63 \pm 0.34$  \\
Hard10K     & $72.42 \pm 0.48$ & $117.60 \pm 2.18$ & $567.24 \pm 0.59$ \\
\Xhline{4\arrayrulewidth}
\end{tabular}

\begin{tabular}{p{2.5cm}p{2.8cm}<{\centering}p{2.8cm}<{\centering}p{2.8cm}<{\centering}}
\Xhline{4\arrayrulewidth}
\multirow{2}{*}{Value of $\epsilon$}  & \multicolumn{3}{c}{Lipschitz in $l_2$ Cases ($\times 10^4$)} \\
 & $0.50$ & $0.75$ & $1.00$ \\
\hline
Easy10K    & $3.34 \pm 0.01$  & $3.67 \pm 0.00$  & $3.91 \pm 0.00$  \\
Random10K  & $30.01 \pm 0.08$ & $31.28 \pm 0.04$ & $39.34 \pm 0.08$ \\
Hard10K    & $60.62 \pm 0.07$ & $80.06 \pm 0.16$ & $77.55 \pm 0.61$ \\
\Xhline{4\arrayrulewidth}
\end{tabular}
\caption{Upper bound of the Lipschitz constant under different settings of the adversarial budget when the model is adversarially trained for the easiest, random, or the hardest $10000$ instances of the training set. \revision{The experiments are run $20$ times, and we report both the mean and the standard deviation in the form of ``mean $\pm$ standard deviation''.}} \label{tbl:upper_lip}
\end{table}

Table~\ref{tbl:upper_lip} provides the upper bound of the Lipschitz constant of models trained by different subsets of the training data and different adversarial budget.
Due to the stochasticity introduced by the algorithm of~\cite{scaman2018lipschitz}, we run it $20$ times and report the average \revision{and standard deviation; we observed that the standard deviation is negligible compared with the magnitude of the mean value}.
Based on the results in Table~\ref{tbl:upper_lip}, it is clear that the models adversarially trained on the hard training instances have a much larger Lipschitz constant than the ones trained on the easy instances.

\begin{figure}[h]
\centering
\includegraphics[width = 0.4\textwidth]{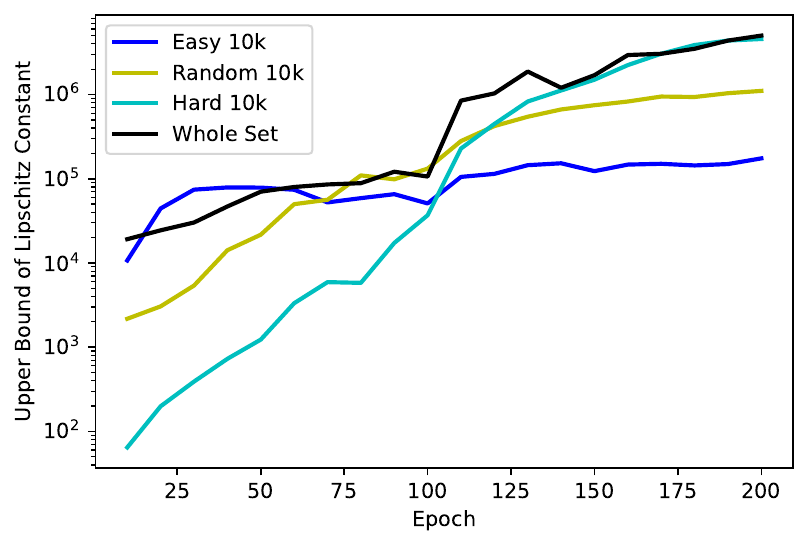}
~~~~~~
\includegraphics[width = 0.4\textwidth]{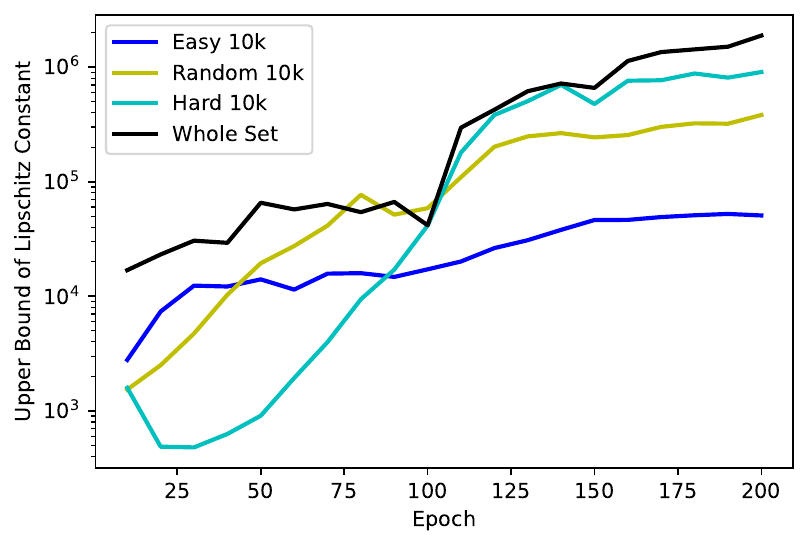}
\caption{Curves of the Lipschitz upper bound when the model is adversarially trained by the easiest, random, the hardest 10000 instances or the whole training set. The y-axis is in log-scale. \edit{Left: the adversarial budget is based on the $l_\infty$ norm with $\epsilon = 8 / 255$. Right: the adversarial budget is based on the $l_2$ norm with $\epsilon = 1$.}} \label{fig:lip_curve}
\end{figure}

Figure~\ref{fig:lip_curve} depicts the curves of the Lipschitz upper bound when the model is adversarially trained by the easiest, random, the hardest 10000 instances \edit{or the whole training set}.
The adversarial budget is based on the $l_\infty$ norm with $\epsilon = 8 / 255$.
We can clearly see that, as training progresses, the Lipschitz upper bound increases in all cases.
Furthermore, \edit{in the last phase of training, the Lipschitz estimation of the models adversarially trained on hard instances is bigger than the ones on easy instances.}
These results are consistent with Theorem~\ref{thm:nonlinear}.
\edit{In addition, when we conduct adversarial training on the whole training set, the Lipschitz bound is close to the one trained on the easy instances in the early phase of training, while it is close to the one trained on hard instances in the late phase.
This observations is consistent with what Corollary~\ref{coro:wholeset} indicates.}

\section{Case Study and Discussion} \label{sec:casestudy}

Our empirical and theoretical analyses indicate that fitting hard adversarial leads to adversarial overfitting.
\change{In this section, we first review existing approaches to mitigating overfitting in adversarial training.
Specifically, we show that they implicitly avoid fitting hard adversarial instances, which provides an explanation for their success.
We also show that the methods that encourage fitting hard adversarial instances fail to yield truly robust models.}

\change{We believe that our discovery is broadly applicable to different tasks aiming to achieve adversarial robustness against a norm-based adversarial budget.
In this regard, we study the cases of fast adversarial training and adversarial fine-tuning with additional training data.
Our results indicate that avoiding to fit hard adversarial instances also improves the performance in these cases.
More detailed discussions are deferred to Appendix~\ref{subsec:app_revisit}.} 


\subsection{\change{Standard Adversarial Training: A New Perspective on Existing Methods}} \label{subsec:standard}

Existing methods aiming to mitigate adversarial overfitting can be generally divided into two categories: those that use adaptive inputs, such as~\cite{balaji2019instance}, and those that rely on adaptive targets, such as~\cite{chen2021robust, huang2020self}.
We show below that both categories implicitly aim to prevent the model from fitting hard input-target pairs.

We use instance-wise adversarial training (IAT)~\citep{balaji2019instance} and self-adaptive training (SAT)~\citep{huang2020self} as examples of these two categories.
IAT uses an instance-adaptive adversarial budget during training.
It adaptively adjusts $\epsilon$, the size of the adversarial budget, for each training instance.
SAT uses self-supervised adaptive targets instead of the ground truth during training.
We run both algorithms using the settings in their original papers, except that we train the model for $200$ epochs for a consistent comparison.
The details are provided in Appendix~\ref{subsec:app_revisit}.

\begin{figure}[!ht]
\centering
\begin{subfigure}{0.3\columnwidth}
\includegraphics[width = \textwidth]{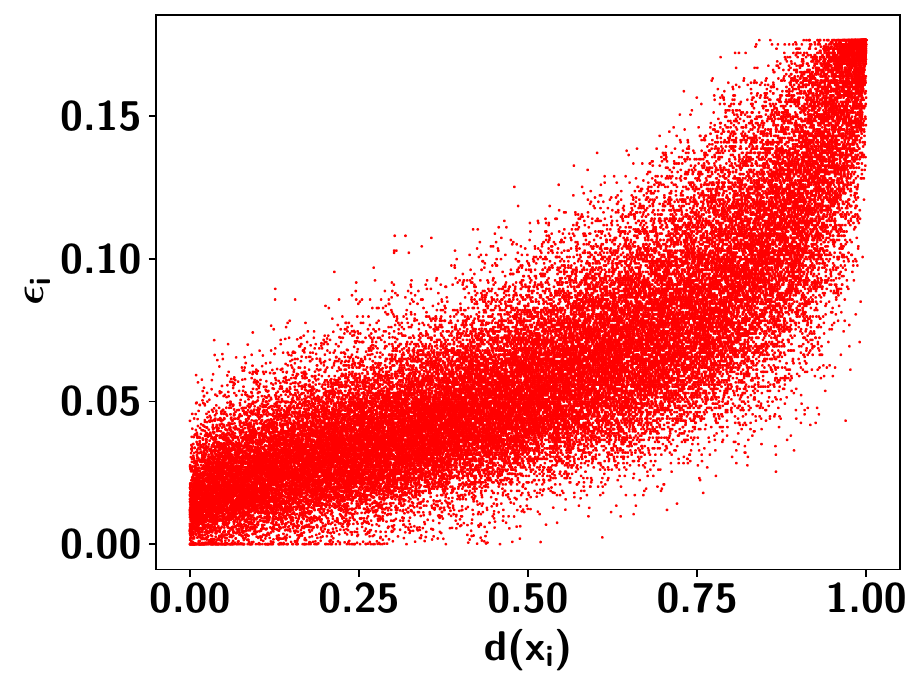}
\caption{\label{fig:instance_epsilon}}
\end{subfigure}
~~~~
\begin{subfigure}{0.3\columnwidth}
\includegraphics[width = \textwidth]{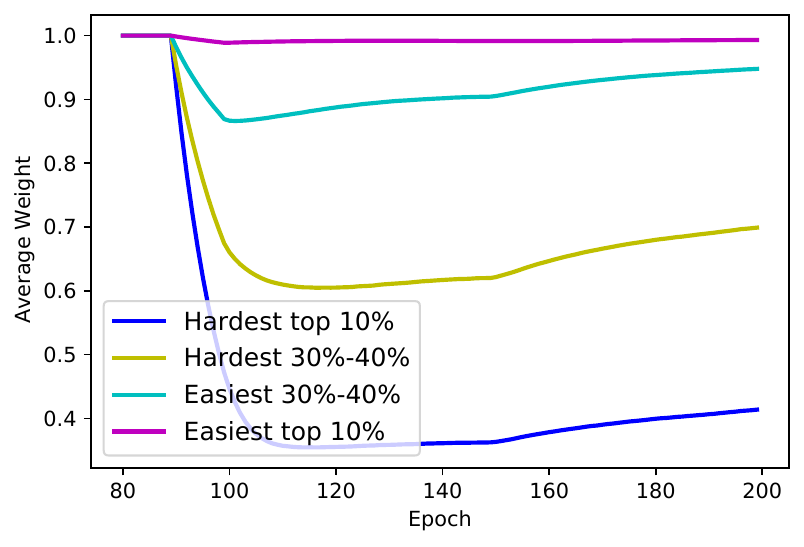}
\caption{\label{fig:movetarget_weight}}
\end{subfigure}
~~~~
\begin{subfigure}{0.3\columnwidth}
\includegraphics[width = \textwidth]{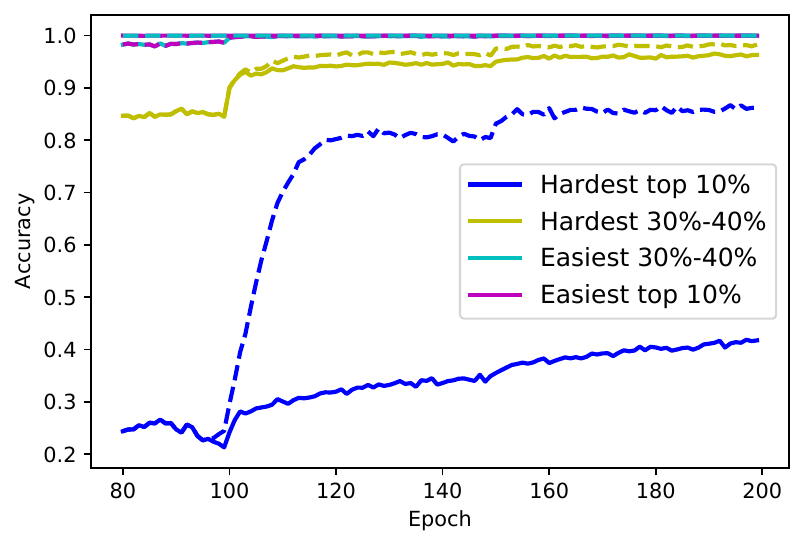}
\caption{\label{fig:movetarget_acc}}
\end{subfigure}
\caption{Results of our case study. The model is always an RN18 and the target adversarial budget's size $\epsilon = 8 /255$. (a) Relationship between instance difficulty $d(\vx_i)$ and its adversarial budget size in IAT for the CIFAR10 training set. (b) Average weights of different groups in the CIFAR10 training set during training in SAT. The warmup period is $90$ epochs, and SAT is enabled after that. (c) Training accuracy of different groups on the CIFAR10 training set during training in SAT. The solid lines and the dashed lines represent the accuracy on the ground truth and on the adaptive targets, respectively. The warmup period is $90$ epochs, and SAT is enabled after that.}
\end{figure}

Let us study how these algorithms adaptively use instances of different difficulty levels.
For IAT, we plot the relationship between the instance difficulty $d(\vx_i)$ and its adaptive adversarial budget's size $\epsilon_i$ in Figure~\ref{fig:instance_epsilon}, which shows a high correlation ($0.884$) between them.
Specifically, we find that the hard instances are assigned smaller adversarial budgets for training, which indicates that IAT prevents the model from fitting the hard adversarial instances.
For SAT, we show the average weights assigned to the instances in each group of $\{\gG_i\}_{i = 0}^9$ during training in Figure~\ref{fig:movetarget_weight}.
The hard instances are clearly assigned much smaller weights to calculate the loss, which indicates that they are downplayed during training.
We also provide the average accuracy of each group during training in Figure~\ref{fig:movetarget_acc}, given both the ground truth or the adaptive target.\footnote{For the adaptive target $\vt$, the prediction $\vo$ is considered correct if and only if $\argmax_i \vt_i = \argmax_i \vo_i$.}
We observe that the hard instances have much higher accuracy on their adaptive targets compared with the ground truth, while such a difference is much smaller for the easy instances.
Our results thus indicate that the adaptive targets used by SAT are much easier to fit, which avoids having to directly fit the hard adversarial input-target pairs.

In addition to IAT and SAT, other methods have introduced regularization terms to mitigate adversarial overfitting, such as~\cite{zhang2019theoretically} and~\cite{chen2021robust}.
These regularization terms calculate the distance between the adversarial output logits and their anchor points.
The anchor points are the adaptive targets, and can be the clean output logits in~\cite{zhang2019theoretically} or a teacher network's outputs in~\cite{chen2021robust}.
The regularizers used in these methods encourage the adversarial output logits to be closer to the anchor points other than to the ground truth for the hard instances.
In other words, these methods also use adaptive targets to avoid fitting the hard input-target pairs.

In contrast to the methods above, \cite{zhang2021geometryaware} proposed an instance-adaptive reweighting strategy which assigns larger weights to the training instances that PGD breaks in fewer iterations.
In other words, this approach assigns larger weights to the hard adversarial instances, which contrasts with what our analysis revealed.
As a matter of fact, this method was recently shown to be vulnerable to adaptive attacks~\citep{hitaj2021evaluating}. 

\subsection{Alternative Training Scenarios} \label{subsec:more_casestudy}

\change{We believe that our findings can be applied to improve the generalization performance of robust models in various situations.}
In this regard, we conduct preliminary analyses on two examples: fast adversarial training and fine-tuning a pre-trained model using additional data.
\change{In these examples, we show consistent observations with standard adversarial training.}
\revision{Our focus in this section is to showcase the general applicability of our findings rather than proposing entirely new algorithms.
Our results below demonstrate that avoiding fitting hard adversarial instances can consistently mitigate overfitting and improve models' robustness in various scenarios.}

\subsubsection{\change{Fast Adversarial Training}}

Adversarial training in~\cite{madry2017towards} introduces a significant computational overhead.
Thus it is desirable to accelerate this method.
\edit{This section studies how adaptive training based on the instances' difficulty mitigates overfitting and improves fast adversarial training.}
\edit{Specfically,} our experiments in this section are based on adversarial training with transferable adversarial examples (ATTA in~\cite{zheng2020efficient}), which stores the adversarial perturbation for each training instance as an initial point for the next epoch.

First, we use a reweighting scheme to assign lower weights to hard instances when calculating the loss objective: each training instance is assigned a weight equal to the adversarial output probability of the true label.
Then this weight is normalized to ensure that the weights in a mini-batch sum to $1$.
Note that our reweighting scheme is based on the adversarial output instead of the clean output, because the adversarial output probability will also be used to calculate the loss objective.
As a result, the computational overhead of the reweighting scheme is negligible.

\revision{In addition to reweighting, we adapt SAT~\citep{huang2020self} to fast adversarial training and quantitatively study how adaptive targets for hard adversarial training instances mitigate adversarial overfitting.}
For each training instance $(\vx, y)$, we maintain an adaptive moving average target $\widetilde{\vt}$. $\widetilde{\vt}$ is updated in an exponential averaging manner for each epoch: $\widetilde{\vt} \leftarrow \rho\widetilde{\vt} + (1 - \rho) \vo'$ where $\rho$ is the momentum factor and $\vo'$ is the logit of the adversarial input $\vx'$.
\edit{Like the reweighting scheme, compared with~\cite{huang2020self},} we use the adversarial output $\vo'$ instead of the clean output $\vo$ to avoid computational overhead.
The final adaptive target we use is $\vt = \beta \mathbf{1}_y + (1 - \beta)\widetilde{\vt}$ and thus the loss objective is $\gL_\vw(\vx', \vt)$.
The factor $\beta$ controls how ``adaptive'' our target is: $\beta = 0$ yields a fully adaptive moving average target $\widetilde{\vt}$ and $\beta = 1$ yields a one-hot target $\mathbf{1}_y$.
We provide the pseudocode as Algorithm~\ref{alg:fast}.

Our experiment is on CIFAR10 and use $l_\infty$ norm based adversarial budget with $\epsilon = 8 / 255$, the standard setting where most fast adversarial training algorithms are benchmarked~\cite{croce2020robustbench}.
\edit{Unless specified, we use the same settings as in~\cite{zheng2020efficient}.}
we train the model for $38$ epochs, the learning rate is $0.1$ on the first $30$ epochs, it decays to $0.01$ in the next $6$ epochs and further decays to $0.001$ in the last $2$ epochs.
We evaluate the model's robust accuracy on the test set by AutoAttack~\cite{croce2020reliable}, the popular and reliable attack for evaluation.
More hyper-parameter details are deferred to Appendix~\ref{subsec:settings_casestudy}

\begin{algorithm}[!ht]
\begin{algorithmic}
\STATE \textbf{Input:} training data $\gD$, model $f$, batch size $B$, PGD step size $\alpha$, adversarial budget $\gS^{(p)}(\epsilon)$, coefficient $\rho$, $\beta$.
\FOR {Sample a mini-batch $\{\vx_i, y_i\}_{i = 1}^B \sim \gD$}
    \STATE $\forall i$, obtain the initial perturbation $\Delta_i$ as in~\cite{zheng2020efficient}.
    \STATE $\forall i$, one step PGD update: $\Delta_i \leftarrow \Pi_{\gS^{(p)}(\epsilon)}\left[\Delta_i + \alpha sign(\triangledown_{\Delta_i} \gL_\theta(\vx_i + \Delta_i, y_i)\right])$.
    \STATE $\forall i$, update the cached adversarial perturbation $\Delta_i$ as in~\cite{zheng2020efficient}.
    \IF {use reweight}
        \STATE $\forall i$, weight $w_i = \mathrm{softmax}[f(\vx_i + \Delta_i)]_{y_i}$
    \ELSE
        \STATE $\forall i$, weight $w_i = 1$
    \ENDIF
    \STATE $\forall i$, query the adaptive target $\tilde{\vt}_i$ and update: $\tilde{\vt}_i \leftarrow \rho \tilde{\vt}_i + (1 - \rho) softmax[f(\vx_i + \Delta_i)]$.
    \STATE $\forall i$, the final adaptive target $\vt_i = \beta \mathbf{1}_{y_i} + (1 - \beta) \tilde{\vt_i}$
    \STATE Calculate the loss $\frac{1}{\sum_i^B w_i} \sum_i^B w_i \gL_\theta(\vx_i + \Delta_i, \vt_i)$ and update the parameters.
\ENDFOR
\end{algorithmic}
\caption{One epoch of the accelerated adversarial training.} \label{alg:fast}
\end{algorithm}

\begin{table}[!ht]
\centering
\begin{tabular}{p{4.5cm}p{1.4cm}p{1.4cm}<{\centering}p{1.5cm}<{\centering}p{3.2cm}<{\centering}}
\Xhline{4\arrayrulewidth}
Method & Model & Epochs & Complexity & AutoAttack(\%) \\
\Xhline{4\arrayrulewidth}
\cite{shafahi2019adversarial} & WRN34  & 200 & 2 & 41.17 \\
\cite{wong2020fast}           & RN18   & 15  & 4 & 43.21 \\
\cite{zheng2020efficient}     & WRN34  & 38  & 4 & 44.48 \\
\cite{zhang2019you}           & WRN34  & 105 & 3 & 44.83 \\
\cite{chen2021efficient}      & WRN34  & 100 & 7 & 51.12 \\
\hdashline
Reweighting (Ours)            & WRN34  & 38  & 4 & 46.15 \\
Adaptive Target (Ours)        & WRN34  & 38  & 4 & 51.17 \\
\Xhline{4\arrayrulewidth}
\end{tabular}
\caption{Comparison between different accelerated adversarial training methods in robust test accuracy against AutoAttack (AA). The baseline results are from RobustBench. \textit{Complexity} shows the total number of forward passes and backward passes in one mini-batch update.} \label{tbl:fast}
\end{table}

The results are provided in Table~\ref{tbl:fast}, where the results of the baseline methods are taken from RobustBench~\cite{croce2020robustbench}.
We also report the number of epochs and the number of forward and backward passes in a mini-batch update of each method.
The product of these two values indicates the training complexity.
We can clearly see that both reweighting and adaptive targets improve the performance on top of ATTA~\cite{zheng2020efficient}.
Note that our method based on adaptive targets achieve the best performance while needing only $1/4$ of the training time of~\cite{chen2021efficient}, the strongest baseline.
\cite{wong2020fast} is the only baseline consuming less training time than ours, but its performance is much worse than ours; it suffers from catastrophic overfitting when using a WRN34 model.

We also conduct ablation study in the context of fast adversarial training.
In Figure~\ref{fig:fast_adv_ablation}, we plot the learning curves for different values of $\beta$ in Algorithm~\ref{alg:fast}, we also compare the learning curves of ATTA with and without reweighting.
Lower the value of $\beta$ is, more weights assigned to the adaptive part of the target: $\beta = 0$ means we directly utilize the moving average target as the final target, $\beta = 1$ means we use the one-hot groundtruth label.
In the left part of Figure~\ref{fig:fast_adv_ablation}, the generalization gap decreases with the decrease in $\beta$.
That is to say, the adaptive target can indeed improve the generalization performance.
In addition, the right part of Figure~\ref{fig:fast_adv_ablation} confirm that the reweighting scheme can prevent adversarial overfitting and decrease the generalization gap.

\begin{figure}[!ht]
\centering
\includegraphics[width = 0.42\textwidth]{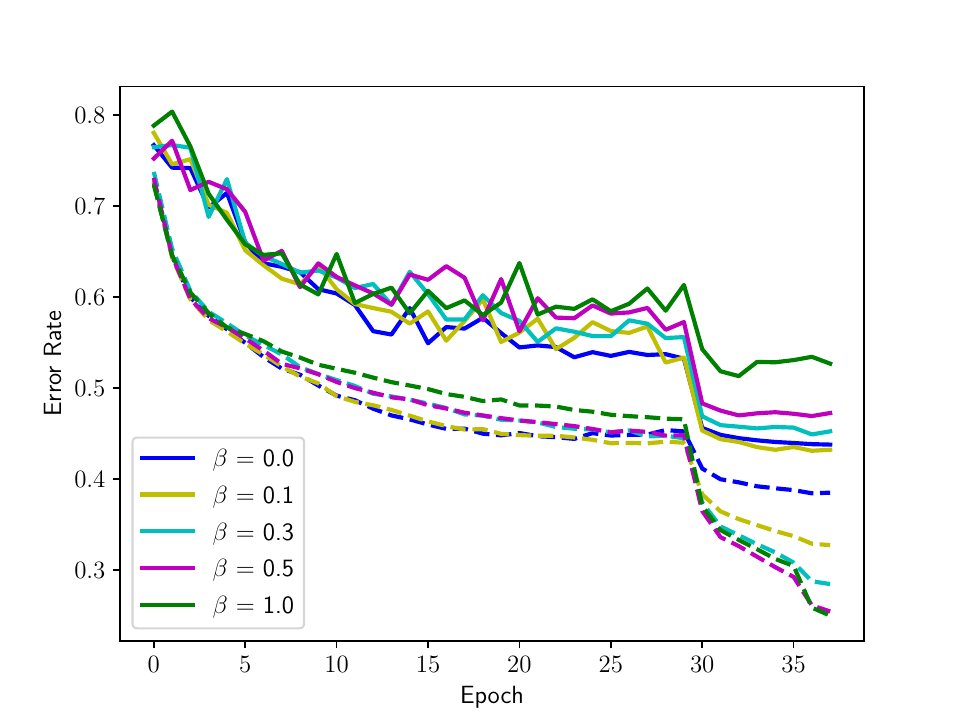}
~~~~~~
\includegraphics[width = 0.42\textwidth]{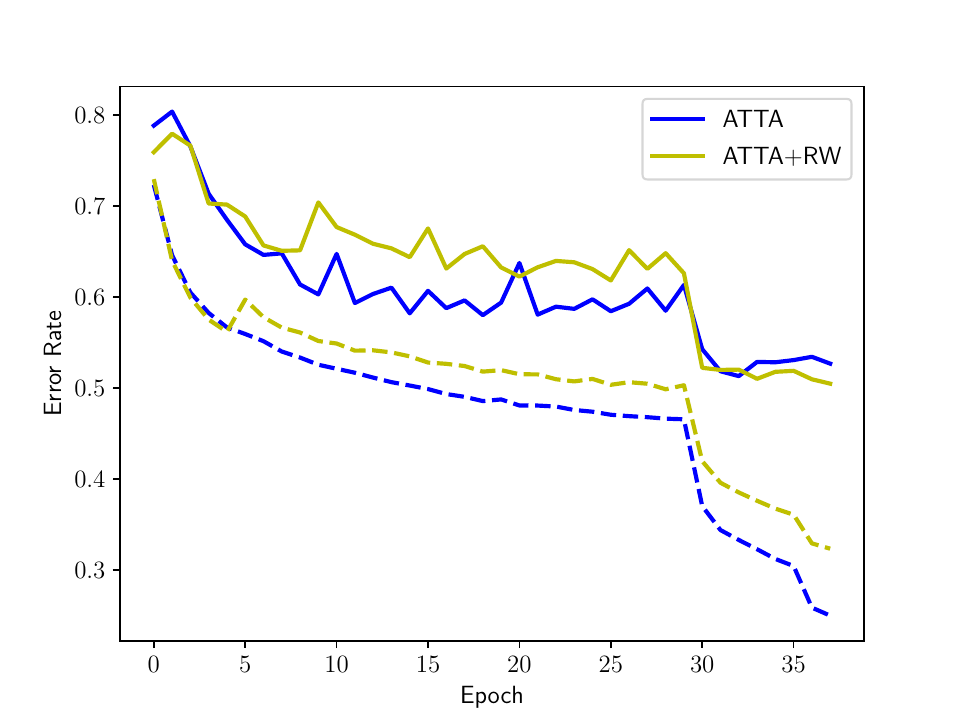}
\caption{The learning curves of Algorithm~\ref{alg:fast} when we use different values of $\beta$ (left) or compare the performance with and without reweighting (right). The solid curve and the dashed curve represent the robust test error and the robust training error, respectively.}
\label{fig:fast_adv_ablation}
\end{figure}


\subsubsection{\change{Adversarial Fine-tuning with Additional Data}}
\revision{In this section, we study fine-tuning an adversarially pretrained model using additional training data.
We observe that adversarial overfitting occurs when using a small learning rate in Section~\ref{sec:overfit}.
Since we also use a small learning rate to conduct adversarial fine-tuning with additional data, it is important to address the adversarial overfitting issue in this context.}
While additional training data was shown to be beneficial in~\cite{alayrac2019labels,carmon2019unlabeled}, we demonstrate that letting the model adaptively fit the easy and hard instances of the additional data further improve the performance.

We conduct experiments on both CIFAR10 and SVHN, using WRN34 and RN18 models, respectively.
The model is fine-tuned for either $1$ epoch or $5$ epochs, which means that each additional training instance is used either $5$ times or only once.
This is because we observed the performance of vanilla adversarial training to start decaying after $5$ epochs.
As such, methods requiring many epochs such as~\cite{balaji2019instance} and~\cite{huang2020self} are not applicable here.
More hyper-parameter details are deferred to Appendix~\ref{subsec:settings_casestudy}.

Our first technique, reweighting, is the same as in the previous section.
In addition to reweighting, we can also add a KL regularization term measuring the KL divergence between the output probability of the clean instance and of the adversarial instance.
The KL term encourages the adversarial output to be close to the clean one.
In other words, the clean output probability serves as the adaptive target.
For hard instances, the clean and adversarial inputs are usually both misclassified.
Therefore, the clean outputs of these instances constitute simpler targets compared with the ground-truth labels.
Ultimately, the loss objective of a mini-batch $\{\vx_i\}_{i = 1}^B$ used for fine-tuning is expressed as $\gL_{FT}(\{\vx_i\}_{i = 1}^B) = \sum_{i = 1}^B w_i \left[ \gL_\vw(\vx'_i) + \lambda KL(\vo_i || \vo'_i) \right]$
where $w_i$ is the adaptive weight when we use re-weighting, or $1 / B$ otherwise.
$\lambda$ is $6$ when using the regularization term and $0$ otherwise.

We use reweighting and KL regularization to fine-tune the model.
Results in Table~\ref{tbl:ft} clearly show that both techniques benefit the performance of the finetuned model.
This shows that avoiding fitting hard adversarial examples helps to improve the generalization performance in adversarial fine-tuning with additional training data.

\begin{table}[!ht]
\centering
\begin{tabular}{p{1.5cm}<{\centering}p{2.1cm}p{2.4cm}<{\centering}:p{1.5cm}<{\centering}p{2.1cm}p{2.4cm}<{\centering}}
\Xhline{4\arrayrulewidth}
Duration & Method & AutoAttack(\%) & Duration & Method & AutoAttack(\%) \\
\Xhline{4\arrayrulewidth}
\multicolumn{3}{c:}{\textbf{WRN34 on CIFAR10, $\epsilon = 8 /255$}} & \multicolumn{3}{c}{\textbf{RN18 on SVHN, $\epsilon = 0.02$}} \\
\hline
\multicolumn{2}{l}{No Fine Tuning} & 52.01 & \multicolumn{2}{l}{No Fine Tuning} & 67.77 \\
\hdashline
\multirow{4}{*}{1 Epoch} & Vanilla AT & 54.11 & \multirow{4}{*}{1 Epoch} & Vanilla AT & 70.81 \\
                         & RW         & 54.69 &                          & RW         & 70.83 \\
                         & KL         & 54.73 &                          & KL         & 72.29 \\
                         & RW + KL    & 54.69 &                          & RW + KL    & 72.53 \\
\hdashline
\multirow{4}{*}{5 Epoch} & Vanilla AT & 55.49 & \multirow{4}{*}{5 Epoch} & Vanilla AT & 72.18 \\
                         & RW         & 56.41 &                          & RW         & 72.72 \\
                         & KL         & 56.55 &                          & KL         & 73.17 \\
                         & RW + KL    & 56.99 &                          & RW + KL    & 73.35 \\
\Xhline{4\arrayrulewidth}
\end{tabular}
\caption{Robust accuracy of fine-tuned models against AutoAttack(AA). We conduct ablation study on both reweighting (RW) and KL regularization (KL).} \label{tbl:ft}
\end{table}

\section{Conclusion}
We have investigated \textit{adversarial overfitting} from the perspective of training instances' difficulty.
By introducing a quantitative metric to measure the instance difficulty, we have shown that a model's generalization performance under adversarial attacks degrades during the later phase of training as the model fits the hard adversarial instances.
We have conducted theoretical analyses on both linear and nonlinear models.
On an over-parameterized logistic regression model, we have shown that training on harder adversarial instances leads to poorer generalization performance.
We have also proven that the performance of adversarial training is more sensitive to hard instances than vanilla training.
On general nonlinear models, we have shown that the lower bound of a well-trained model's Lipschitz constant increases when trained with more difficult instances.
Finally, we have shown that existing approaches to mitigating adversarial overfitting implicitly avoid fitting hard adversarial instances.
We believe that our findings shed some light on adversarial training, and will allow the community to design new algorithms and improve robustness in diverse applications.

\section*{Acknowledgment}

Part of this work is supported by National Natural Science Foundation of China (NSFC Project No. 62306250) and CityU APRC Project (Project No. 9610614).




\newpage

\appendix

\section{Notation} \label{sec:notation}

\begin{table}[!ht]
\small
\centering
\begin{tabular}{p{0.07\textwidth}|p{0.3\textwidth}|p{0.58\textwidth}}
\Xhline{4\arrayrulewidth}
$\gA$ & Section~\ref{sec:hardeasy} & Perturbation method. \\
$b$ & Section~\ref{sec:thm_nonlinear} & The number of parameters in a general nonlinear model. \\
$c$ & Assumption~\ref{asp:iso}, Section~\ref{sec:thm_nonlinear} & The coefficient in isoperimetry. \\
$C$ & Section~\ref{sec:thm_nonlinear} & The mean squared error on the adversarial training set. \\
$d$ & Equation~\ref{eq:difficulty}, Section~\ref{sec:hardeasy} &  The function representing the difficulty metric. \\
$\gD$ & Section~\ref{sec:hardeasy} & The data set. \\
$f_\vw$ & Section~\ref{sec:intro} & The model parameterized by $\vw$. \\
$\gF$ & Theorem~\ref{thm:nonlinear}, Section~\ref{sec:thm_nonlinear} & The function space of the model. \\
$\gG$ & Section~\ref{subsec:hardoverfit} & Groups of the training set divided by instance difficulty. \\
$h$ & Definition~\ref{def:h}, Section~\ref{sec:thm_nonlinear} & The bandwidth of the model's output range. \\
$J$ & Theorem~\ref{thm:nonlinear}, Section~\ref{sec:thm_nonlinear} & The Lipschitz constant of $f_\vw$ w.r.t $\vw$. \\
$K$ & Section~\ref{sec:thm} & The number of components in the data distribution. \\
$l$ & Section~\ref{sec:thm} & The component index where the training data is sampled. \\
$L$ & Assumption~\ref{asp:iso}, Section~\ref{sec:thm_nonlinear} & The Lipschitz constant of $f_\vw$ w.r.t the input. \\
$\gL$ & Section~\ref{sec:intro} & The loss function. \\
$m$ & Section~\ref{sec:thm} & Dimension of the input data. \\
$M$ & Section~\ref{sec:hardeasy} & The number of total training epochs. \\
$n$ & Section~\ref{sec:thm} & The number of training instances. \\
$\vo$, $\vo'$ & Section~\ref{sec:casestudy} & Model's output of the clean and the adversarial input. \\
$p$ & Section~\ref{sec:intro} & Shape of the adversarial budget. \\
$p_k$ & Section~\ref{sec:linear} & The probability of $k$-th component in the GMM model.\\
$r$ & Equation~\ref{eq:gmm}, Section~\ref{sec:linear} & The coefficient in the GMM model. \\
$\gR$ & Theorem~\ref{thm:main}, Section~\ref{sec:linear} & The robust test error. \\
$\vt$, $\widetilde{\vt}$ & Section~\ref{subsec:more_casestudy} & The adaptive target and the moving average target. \\
$\vw$ & Section~\ref{sec:thm} & Model parameters. \\
$W$ & Theorem~\ref{thm:nonlinear}, Section~\ref{sec:thm_nonlinear} & The diameter upper bound of the parameter space. \\
$\gW$ & Theorem~\ref{thm:nonlinear}, Section~\ref{sec:thm_nonlinear} & The space of model parameters. \\
$\vx, \vx', \rmX$ & Section~\ref{sec:intro} \& Section~\ref{sec:thm} & Clean input, adversarial input and its matrix form. \\
$y$, $\vy$ & Section~\ref{sec:intro} \& Section~\ref{sec:thm} & Label and its vector form. \\
$\alpha$ & Algorithm~\ref{alg:fast} & The step size of the adversarial attacks. \\
$\beta$ & Section~\ref{subsec:more_casestudy} & The coefficient controlling how adaptive the target is. \\
$\gamma$ & Theorem~\ref{thm:nonlinear}, Section~\ref{sec:thm_nonlinear} & The non-negative variable introduced in Theorem~\ref{thm:nonlinear}. \\
$\delta$ & Theorem~\ref{thm:nonlinear}, Section~\ref{sec:thm_nonlinear} & The probability introduced in Theorem~\ref{thm:nonlinear}. \\
$\epsilon$ & Section~\ref{sec:intro} & The size of the adversarial budget. \\
$\veta$ & Equation~\ref{eq:gmm}, Section~\ref{sec:linear} & The direction of the mean of each GMM's component. \\
$\rho$ & Section~\ref{subsec:more_casestudy} & The momentum calculating the moving average target. \\
$\mu_l$, $\mu_l$ & Assumption~\ref{asp:iso}, Section~\ref{sec:thm_nonlinear} & Data distribution and its $l$-th component. \\
$\sigma$ & Assumption~\ref{asp:iso}, Section~\ref{sec:thm_nonlinear} & The conditional variance of the data distribution. \\

\hline
\Xhline{4\arrayrulewidth}
\end{tabular}
\caption{The notation in this paper. In addition to what they represent, we provide the section of their definition or first appearance.}
\label{tbl:notation}
\end{table}


\section{Proofs in Theoretical Analysis}

\subsection{Proof of Theorem~\ref{thm:converge}} \label{sec:proof_converge}

Similar to~\cite{soudry2018implicit}, we can assume all instances are positive without the loss of generality, this is because we can always redefine $y_i\vx_i$ as the input.
In this regard, the loss to optimize in a logistic regression model under the adversarial budget $\gS^{(2)}(\epsilon)$ is:

\begin{equation}
\begin{aligned}
\gL_\vw(\rmX) = \sum_{i = 1}^n l(\vw^T \vx_i - \epsilon \|\vw\|)
\end{aligned}
\end{equation}

Here $l(\cdot)$ is the logistic function: $l(x) = \frac{1}{1 + e^{-x}}$.
We use $\rmX \in \sR^{n \times m}$ to represent the training set as said in Section~\ref{sec:thm}, then the loss function $\gL(\vw)$ is $\|\rmX\|^2$-smooth, where $\|\rmX\|^2$ is the maximal singular value of $\rmX$.
Since function $\gL_\vw$ is convex on $\vw$, so gradient descent of step size smaller than $2\|\rmX\|^{-2}$ will asymptotically converge to the global infimum of the function $\gL_\vw$ on $\vw$.

Before proving Theorem~\ref{thm:converge}, we first introduce the following lemma:

\begin{lemma} \label{lem:equiv}
Consider the max-margin vector $\widehat{\vw}$ of the vanilla case defined in Equation~(\ref{eq:max_margin}), we then introduce the max margin vector $\widehat{\vw'}$ defined under the adversarial attack of budget $\gS^{(2)}(\epsilon)$ as follows:
\begin{equation}
\begin{aligned}
\widehat{\vw'} = \argmin_{\vw} \|\vw\| \ \ &s.t. \ \forall i \in \{1, 2, ..., n\}, \ \vw^T \vx_i - \epsilon \|\vw\| \geq 1
\end{aligned} \label{eq:adv_max_margin}
\end{equation}
Then we have $\widehat{\vw'}$ is collinear with $\widehat{\vw}$, i.e., $\frac{\widehat{\vw'}}{\|\widehat{\vw'}\|} = \frac{\widehat{\vw}}{\|\widehat{\vw}\|}$
\end{lemma}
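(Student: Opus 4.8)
\textbf{Proof plan for Lemma~\ref{lem:equiv}.}
The plan is to show that the optimization problem defining $\widehat{\vw'}$ in~(\ref{eq:adv_max_margin}) has its optimum in the same direction as the vanilla max-margin problem~(\ref{eq:max_margin}), with only the scale of the vector changed. The key observation is that both objective and constraints are positively homogeneous in $\vw$ in a compatible way: if we write $\vw = t\vu$ with $\|\vu\| = 1$ and $t = \|\vw\| > 0$, then the adversarial constraint $\vw^T\vx_i - \epsilon\|\vw\| \geq 1$ becomes $t(\vu^T\vx_i - \epsilon) \geq 1$, i.e. $\vu^T\vx_i \geq \epsilon + 1/t$ for all $i$, while the objective to minimize is just $t$. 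So the problem decouples: for a fixed direction $\vu$, feasibility requires $\vu^T\vx_i > \epsilon$ for all $i$ (so that some finite $t$ works), and the best (smallest) $t$ achievable with direction $\vu$ is $t(\vu) = 1/(\min_i \vu^T\vx_i - \epsilon)$, which is minimized exactly when $\min_i \vu^T\vx_i$ is maximized over unit vectors $\vu$.

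First I would make this reduction precise: minimizing $\|\widehat{\vw'}\|$ is equivalent to finding $\vu^\star = \argmax_{\|\vu\|=1} \min_i \vu^T\vx_i$ (the maximizer of the normalized margin), and then setting $\widehat{\vw'} = \vu^\star/(\min_i (\vu^\star)^T\vx_i - \epsilon)$. Next I would observe that the vanilla max-margin vector $\widehat{\vw}$ from~(\ref{eq:max_margin}) also has direction $\widehat{\vw}/\|\widehat{\vw}\| = \vu^\star$, because the vanilla problem $\min\|\vw\|$ s.t. $\vw^T\vx_i \geq 1$ similarly reduces, via $\vw = t\vu$, to minimizing $t = 1/\min_i \vu^T\vx_i$, i.e. to the same direction-optimization $\max_{\|\vu\|=1}\min_i \vu^T\vx_i$; this is the standard equivalence between the norm-minimization and margin-maximization forms of the SVM. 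Since the direction-selection subproblem is identical for both, $\widehat{\vw'}$ and $\widehat{\vw}$ are collinear, which is the claim. I would also note that the assumption of linear separability under $\gS^{(2)}(\epsilon)$ guarantees $\max_{\|\vu\|=1}\min_i \vu^T\vx_i > \epsilon$, so the feasible set in~(\ref{eq:adv_max_margin}) is nonempty and $\widehat{\vw'}$ is well-defined.

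The main obstacle — really the only delicate point — is handling uniqueness of the optimal direction $\vu^\star$ cleanly, so that ``collinear'' is unambiguous. Because $\vw\mapsto\|\vw\|$ is strictly convex on $\sR^m\setminus\{0\}$ and the constraint set $\{\vw : \vw^T\vx_i \geq 1\ \forall i\}$ is convex, the vanilla problem has a unique minimizer $\widehat{\vw}$; I would argue the same strict-convexity/convexity structure (the adversarial feasible set $\{\vw : \vw^T\vx_i - \epsilon\|\vw\|\geq 1\}$ is also convex, being an intersection of the convex sets $\{\vw : \vw^T\vx_i \geq 1 + \epsilon\|\vw\|\}$ — each of which is convex since $\vw^T\vx_i$ is linear and $\epsilon\|\vw\|$ is convex, so the set is a sublevel-type region that one checks is convex) gives uniqueness of $\widehat{\vw'}$ as well. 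Then collinearity of the two unique minimizers follows from the shared direction subproblem. I expect the write-up to be short: set $\vw=t\vu$, decouple, match the direction problems, invoke separability for nonemptiness and strict convexity for uniqueness.
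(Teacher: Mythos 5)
Your proposal is correct, but it takes a different route from the paper. The paper fixes the adversarial max-margin vector $\widehat{\vw'}$ and shows the explicit scaling identity $\widehat{\vw} = \frac{1}{1 + \epsilon \|\widehat{\vw'}\|} \widehat{\vw'}$ by contradiction: if some $\vv$ satisfied the vanilla constraints with $\|\vv\| < \|\widehat{\vw'}\|/(1+\epsilon\|\widehat{\vw'}\|)$, then $\vv' = (1+\epsilon\|\widehat{\vw'}\|)\vv$ would satisfy the adversarial constraints with norm strictly smaller than $\|\widehat{\vw'}\|$, contradicting optimality; conversely the rescaled vector is feasible for the vanilla problem, so it is the vanilla minimizer. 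You instead use the polar decomposition $\vw = t\vu$, $\|\vu\|=1$, and observe that both problems decouple into the \emph{same} direction subproblem $\max_{\|\vu\|=1}\min_i \vu^T\vx_i$ with only the optimal scale differing ($t = 1/\min_i \vu^T\vx_i$ versus $t = 1/(\min_i \vu^T\vx_i - \epsilon)$). Both arguments are elementary and short; yours additionally makes existence (via separability, so the optimal direction clears the threshold $\epsilon$) and uniqueness explicit, and yields closed-form expressions for the optimizers in terms of $\vu^\star$, while the paper's version directly produces the norm relation between $\widehat{\vw}$ and $\widehat{\vw'}$. One small imprecision in your write-up: the Euclidean norm is \emph{not} strictly convex on $\sR^m\setminus\{0\}$ (it is linear along rays), so you should instead justify uniqueness of each minimizer by the standard fact that the minimum-norm point of a nonempty closed convex set is unique (strict convexity of the Euclidean ball, or equivalently minimize $\|\vw\|^2$, which is strictly convex); with that fix, uniqueness of the common optimal direction follows as you argue, since two distinct optimal directions would give two distinct minimum-norm feasible points.
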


\begin{proof}
We show that $\widehat{\vw} = \frac{1}{1 + \epsilon \|\widehat{\vw'}\|} \widehat{\vw'}$ and prove it by contraction.

Let's assume $\exists \vv,\ s.t.\ \|\vv\| < \frac{\|\widehat{\vw'}\|}{1 + \epsilon \|\widehat{\vw'}\|} \mathrm{and} \ \forall i \in \{1, 2, ..., n\}, \ \vv^T \vx_i \geq 1$, then we can consider $\vv' = (1 + \|\widehat{\vw'}\|)\vv$.
The $l_2$ norm of $\vv'$ is smaller than that of $\widehat{\vw'}$, and we have
\begin{equation}
\begin{aligned}
\forall i \in \{1, 2, ..., n\}, \vv'^T\vx_i - \epsilon \|\vv'\| = (1 + \epsilon\|\widehat{\vw'}\|)\vv^T\vx_i - \epsilon\|\vv'\| > (1 + \epsilon \|\widehat{\vw'}\|) - \epsilon \|\widehat{\vw'}\| = 1
\end{aligned} \label{eq:contraction}
\end{equation}

Inequality~\ref{eq:contraction} shows we can construct a vector $\vv'$ whose $l_2$ norm is smaller than $\widehat{\vw'}$ and satisfying the condition~(\ref{eq:adv_max_margin}), this contracts with the optimality of $\widehat{\vw'}$.
Therefore, there is no solution of condition~(\ref{eq:max_margin}) whose norm is smaller than $\frac{\|\widehat{\vw'}\|}{1 + \epsilon \|\widehat{\vw'}\|}$.

On the other hand, $\frac{1}{1 + \epsilon \|\widehat{\vw'}\|} \widehat{\vw'}$ satisfies the condition~(\ref{eq:max_margin}) and its $l_2$ norm is $\frac{\|\widehat{\vw'}\|}{1 + \epsilon \|\widehat{\vw'}\|}$.
As a result, we have $\widehat{\vw} = \frac{1}{1 + \epsilon \|\widehat{\vw'}\|} \widehat{\vw'}$.
That means $\widehat{\vw}$ and $\widehat{\vw'}$ are collinear.
\end{proof}

With Lemma~\ref{lem:equiv}, Theorem~\ref{thm:converge} is more straightforward, whose proof is shown below.
Regarding the convergence analysis of the logistic regression model in non-adversarial cases, we encourage the readers to find more details in~\cite{ji2019implicit, soudry2018implicit}.

\begin{proof}
Theorem 1 in~\cite{ji2019implicit} and Theorem 3 in~\cite{soudry2018implicit} proves the convergence of the direction of the logistic regression parameters in different cases.
In this regard, we can let $\vw_\infty = \lim_{u \to \infty} \frac{\vw(u)}{\|\vw(u)\|}$.
That is to say, for sufficiently large $u$, the direction of the parameter $\vw(u)$ can be considered fixed.
As a result, the adversarial perturbations of each data instance $\vx_i$ is fixed, i.e., $\epsilon \vw_\infty$.

We can then apply the conclusion of Theorem 3 in~\cite{soudry2018implicit} here, the only difference is the data points are $\{\vx_i - \epsilon \vw_\infty\}_{i = 1}^n$.
Therefore, the parameter $\vw(u)$ will converge to the $l_2$ max margin of the dataset $\{\vx_i -\epsilon \vw_\infty\}_{i = 1}^n$.
When $t \to \infty$, we have $\vw(u)^T (\vx_i - \epsilon \vw_\infty) = \vw(u)^T\vx_i - \epsilon \|\vw(u)\|$.
This is exactly the adversarial max margin condition in (\ref{eq:adv_max_margin}).
Based on Lemma~\ref{lem:equiv}, we have $\lim_{u \to \infty} \frac{\vw(u)}{\|\vw(u)\|} = \frac{\widehat{\vw'}}{\|\widehat{\vw'}\|} = \frac{\widehat{\vw}}{\|\widehat{\vw}\|}$
\end{proof}

\subsection{Proof of Theorem~\ref{thm:main}} \label{sec:proof_main}




Given the parameter $\vw$ of the logistic regression model, we can first calculate the robust error for the $k$-th component of the GMM model defined in~(\ref{eq:gmm}).


\begin{lemma} \label{lemma:acc}
The 0-1 classification error of a linear classifier $\vw$ under the adversarial attack of the budget $\gS^{(2)}(\epsilon)$ for the $k$-th component of the GMM model defined in~(\ref{eq:gmm}) is:
\begin{equation}
\begin{aligned}
\widehat{\gR}_k(\epsilon) = \Phi(\frac{r_k \vw^T \veta}{\|\vw\|} - \epsilon)
\end{aligned}
\end{equation}
where $\Phi(x) = \sP(Z > x), Z \sim \gN(0, 1)$.
\end{lemma}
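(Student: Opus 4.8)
The plan is to reduce the worst-case $0$-$1$ error over the $k$-th Gaussian component to a one-dimensional Gaussian tail probability. First I would exploit the symmetry of the setup: the model $\vx \mapsto \sign(\vw^T\vx)$ and the component of the GMM in~(\ref{eq:gmm}) are invariant under the simultaneous sign flip $\vx \mapsto -\vx$, $y \mapsto -y$, so it suffices to treat the case $y = +1$, where $\vx \sim \gN(r_k\veta, \rmI)$; the conditional error for $y = -1$ is identical. A point $\vx$ with label $+1$ is robustly misclassified exactly when there is a perturbation $\Delta$ with $\|\Delta\|_2 \le \epsilon$ making $\sign(\vw^T(\vx + \Delta)) \ne +1$, i.e.\ when $\min_{\Delta \in \gS^{(2)}(\epsilon)} \vw^T(\vx + \Delta) \le 0$.

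Second, I would evaluate the inner minimization. Since $\vw^T(\vx + \Delta) = \vw^T\vx + \vw^T\Delta$ is affine in $\Delta$ and the $l_2$ ball is symmetric, the minimum over $\|\Delta\|_2 \le \epsilon$ is attained at $\Delta^\star = -\epsilon\,\vw/\|\vw\|$ and equals $\vw^T\vx - \epsilon\|\vw\|$. Hence the conditional robust error for $y = +1$ is $\sP\big(\vw^T\vx - \epsilon\|\vw\| \le 0\big) = \sP\big(\vw^T\vx \le \epsilon\|\vw\|\big)$, where the boundary event $\vw^T\vx = \epsilon\|\vw\|$ has probability zero (assuming $\vw \ne \vzero$) and may be neglected.

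Third, I would standardize. Under $\vx \sim \gN(r_k\veta, \rmI)$ we have $\vw^T\vx \sim \gN(r_k\vw^T\veta,\, \|\vw\|^2)$, so $Z \defeq (\vw^T\vx - r_k\vw^T\veta)/\|\vw\| \sim \gN(0,1)$, and the error becomes $\sP\big(Z \le \epsilon - r_k\vw^T\veta/\|\vw\|\big)$. Using the symmetry of the standard normal together with the paper's convention $\Phi(x) = \sP(Z > x)$, this equals $\sP\big(Z \ge r_k\vw^T\veta/\|\vw\| - \epsilon\big) = \Phi\big(r_k\vw^T\veta/\|\vw\| - \epsilon\big)$, which is precisely $\widehat{\gR}_k(\epsilon)$. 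Averaging the (identical) conditional errors over $y \in \{-1,+1\}$ leaves the expression unchanged.

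I do not expect a genuine obstacle here: the only points requiring care are (i) identifying the optimal perturbation direction $\Delta^\star = -\epsilon\,\vw/\|\vw\|$ and its sign, and (ii) keeping track that $\Phi$ is the Gaussian \emph{survival} function rather than the CDF, so that the standardization step produces $\Phi(r_k\vw^T\veta/\|\vw\| - \epsilon)$ and not $\Phi(\epsilon - r_k\vw^T\veta/\|\vw\|)$. It is also worth noting that the argument is exact and pointwise over the component, so no union bound or concentration inequality is needed at this stage; those enter only later when $\vw$ is replaced by the min-norm interpolator to obtain Theorem~\ref{thm:main}.
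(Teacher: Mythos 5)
Your proposal is correct and follows essentially the same route as the paper: identify the worst-case $l_2$ perturbation $-y\epsilon\,\vw/\|\vw\|$, reduce the robust error to $\sP(y\vw^T\vx < \epsilon\|\vw\|)$, and standardize the Gaussian projection $\vw^T\vx$ to obtain $\Phi(r_k\vw^T\veta/\|\vw\| - \epsilon)$. The only cosmetic difference is that you split by label and invoke sign-flip symmetry, whereas the paper handles both labels at once via the decomposition $\vx = r_k y\veta + \vz$; your added justifications (optimality of $\Delta^\star$, negligibility of the boundary event, the survival-function convention) are correct and harmless.
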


\begin{proof}
For a random drawn data instance $(\vx, y)$, the adversarial perturbation is $-y \epsilon \frac{\vw}{\|\vw\|}$.
Let's decompose $\vx$ as $r_k y\veta + \vz$, where $\vz \sim \gN(0, \rmI)$.
Then, we have
\begin{equation}
\begin{aligned}
\widehat{\gR}_k(\epsilon) &= \sP(y\vw^T(\vx - y\epsilon\frac{\vw}{\|\vw\|}) < 0) = \sP(y\vw^T(r_k y\veta + \vz - y\epsilon\frac{\vw}{\|\vw\|}) < 0) \\
         &= \sP(-y\vw^T\vz > r_k \vw^T\veta - \epsilon\|\vw\|)
\end{aligned}
\end{equation}
Since $\vz \sim \gN(0, \rmI)$, we have $-y\vw^T\vz \sim \gN(0, (-y\vw^T)^T(-y\vw^T)) = \gN(0, \vw^T\vw)$.
Furthermore $\frac{-y\vw^T\vz}{\|\vw\|} \sim \gN(0, 1)$, and we can further simplify $\widehat{\gR}_k(\epsilon)$ as follows:

\begin{equation}
\begin{aligned}
\widehat{\gR}_k(\epsilon) = \sP(\frac{-y\vw^T\vz}{\|\vw\|} > \frac{r_k \vw^T\veta}{\|\vw\|} - \epsilon) = \Phi(\frac{r_k \vw^T \veta}{\|\vw\|} - \epsilon)
\end{aligned}
\end{equation}
\end{proof}

With Lemma~\ref{lemma:acc}, we can straightforwardly calculate the robust error for all components of the GMM model defined in~(\ref{eq:gmm}):

\begin{equation}
\begin{aligned}
\widehat{\gR}(\epsilon) = \sum_{k = 1}^K p_k \Phi(\frac{r_k \vw^T \veta}{\|\vw\|} - \epsilon)
\end{aligned} \label{eq:testacc}
\end{equation}

On the other hand, Theorem~\ref{thm:converge} indicates the parameter $\vw$ will converge to the $l_2$ max margin.
However, for arbitrary training set, we do not have the closed form of $\vw$, which is a barrier for the further analysis.
Nevertheless, results from~\cite{wang2020benign} indicates in the over-parameterization regime, the parameter $\vw$ will converge to min-norm interpolation of the data with high probability.

\begin{lemma}{(Directly from Theorem 1 in~\cite{wang2020benign})} \label{lemma:min_norm}
Assume $n$ training instances drawn from the $l$-th mode of the described distribution in~(\ref{eq:gmm}) and each of them is a $m$-dimensional vector. If $\frac{m}{n\log n}$ is sufficiently large\footnote{Specifically, $m$ and $n$ need to satisfy $m > 10 n \log n + n -1$ and $m > C n r_l \sqrt{\log 2 n}\|\veta\|$. The constant $C$ is derived in the proof of Theorem 1 in~\cite{wang2020benign}.}, then the $l_2$ max margin vector in Equation (\ref{eq:max_margin}) will be the same as the solution of the min-norm interpolation described below with probability at least $(1 - O(\frac{1}{n}))$.
\begin{equation}
\begin{aligned}
\bar{\vw} = \argmin_{\vw} \|\vw\| \ \ s.t. \ \forall i \in \{1, 2, ..., n\}, \ y_i = \vw^T \vx_i
\end{aligned} \label{eq:min_norm}
\end{equation}
\end{lemma}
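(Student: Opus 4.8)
The plan is to recast the claimed equivalence as a statement about \emph{support vector proliferation}: the $l_2$ max-margin vector $\widehat{\vw}$ of (\ref{eq:max_margin}) coincides with the min-norm interpolator $\bar{\vw}$ of (\ref{eq:min_norm}) precisely when every training point is a support vector, i.e.\ when all margin constraints are active at the hard-margin SVM optimum. First I would reuse the label-absorption reduction from the proof of Theorem~\ref{thm:converge}, replacing each $\vx_i$ by $y_i\vx_i$ so that the constraints in (\ref{eq:max_margin}) all read $\vw^T\vx_i\geq 1$ and, under (\ref{eq:gmm}), the absorbed inputs become i.i.d.\ $\gN(r_l\veta,\rmI)$. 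In matrix form this gives $\rmX = r_l\vone\veta^T + \rmQ$ with $\rmQ$ having i.i.d.\ standard Gaussian entries, and the interpolation constraint $y_i=\vw^T\vx_i$ in (\ref{eq:min_norm}) becomes $\rmX\vw=\vone$, whose unique least-norm solution is $\bar{\vw}=\rmX^T(\rmX\rmX^T)^{-1}\vone$.

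The core of the argument is a KKT comparison. Since $\bar{\vw}$ already satisfies every constraint with equality, it is feasible for (\ref{eq:max_margin}), and writing $\bar{\vw}=\rmX^T\bm{\lambda}$ with $\bm{\lambda}=(\rmX\rmX^T)^{-1}\vone$ shows that the only missing KKT condition for the SVM is dual feasibility $\bm{\lambda}\geq\vzero$. Conversely, the equality-constrained feasible set sits inside the inequality-constrained one, so $\|\bar{\vw}\|\geq\|\widehat{\vw}\|$ automatically; hence once I verify $\bm{\lambda}>\vzero$ entrywise, complementary slackness holds trivially, $\bar{\vw}$ solves (\ref{eq:max_margin}), and uniqueness of the norm minimizer forces $\widehat{\vw}=\bar{\vw}$, after which undoing the absorption recovers the statement. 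Everything therefore reduces to showing $(\rmX\rmX^T)^{-1}\vone>\vzero$ with probability $1-O(1/n)$.

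To establish this I would expand the Gram matrix $G=\rmX\rmX^T=\rmQ\rmQ^T + r_l^2\vone\vone^T + r_l(\vone\veta^T\rmQ^T+\rmQ\veta\vone^T)$ and peel off the rank-one signal $r_l^2\vone\vone^T$ exactly via Sherman--Morrison; because that term is positive semidefinite, $(\rmX\rmX^T)^{-1}\vone$ is a strictly positive multiple of $G_0^{-1}\vone$ with $G_0=\rmQ\rmQ^T+r_l(\vone\veta^T\rmQ^T+\rmQ\veta\vone^T)$, so the sign pattern is unchanged and it suffices to prove $G_0^{-1}\vone>\vzero$. Treating $\rmQ\rmQ^T$ as a perturbation of $m\rmI$ and writing $G_0^{-1}\vone\approx\tfrac1m\vone-\tfrac1{m^2}(G_0-m\rmI)\vone$, coordinatewise positivity requires $|((G_0-m\rmI)\vone)_i|<m$, where the $i$-th coordinate equals $(\|\vq_i\|^2-m)+\vq_i^T\!\sum_{j\neq i}\vq_j + nr_l\,\vq_i^T\veta$ plus a coordinate-independent shift of order $r_l\sqrt{n}$.

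The main obstacle, and the place where the precise hypotheses enter, is bounding this coordinate at the sharp $n\log n$ rate rather than a crude $n^2$ rate. A naive operator-norm estimate $\|(G_0-m\rmI)\vone\|_2\leq\|G_0-m\rmI\|_{\mathrm{op}}\sqrt n$ loses a factor of $\sqrt n$ and would force $m\gtrsim n^2$; instead I would bound the displayed coordinate \emph{directly}, using chi-square concentration for $\|\vq_i\|^2-m=O(\sqrt m)$, a conditional-Gaussian tail for $\vq_i^T\sum_{j\neq i}\vq_j=O(\sqrt{nm\log n})$, and a Gaussian tail for $nr_l\,\vq_i^T\veta=O(nr_l\sqrt{\log n}\,\|\veta\|)$, each followed by a union bound over the $n$ coordinates. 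Requiring all three to be $o(m)$ is exactly the pair of conditions $m>10n\log n+n-1$ and $m>Cnr_l\sqrt{\log 2n}\,\|\veta\|$ quoted in the footnote, which is where invoking Theorem~1 of~\cite{wang2020benign} supplies the sharp constants and the $1-O(1/n)$ probability.
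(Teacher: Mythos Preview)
The paper does not prove this lemma: as the parenthetical in the statement indicates, it is imported wholesale from Theorem~1 of \cite{wang2020benign}, and the appendix simply invokes it to replace $\widehat{\vw}$ by the closed-form min-norm interpolator $\bar{\vw}=\rmX^T(\rmX\rmX^T)^{-1}\vy$ when computing the robust test error. There is no in-paper argument to compare against.

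Your sketch is a faithful outline of how the cited result is actually established. The KKT reduction to entrywise positivity of the dual vector $\bm{\lambda}=(\rmX\rmX^T)^{-1}\vone$ is exactly the right framing (this is the ``support vector proliferation'' phenomenon), and the coordinatewise concentration bounds you describe are precisely what produce the two conditions $m>10n\log n+n-1$ and $m>Cnr_l\sqrt{\log 2n}\,\|\veta\|$ quoted in the footnote. Two small points of care. First, your justification that the Sherman--Morrison scalar is strictly positive ``because that term is positive semidefinite'' is not the right reason: the scalar is $1/(1+r_l^2\vone^T G_0^{-1}\vone)$, and its positivity comes from $G_0$ being positive definite in the regime $m\gg n$ (since $G_0$ is a bounded perturbation of $\rmQ\rmQ^T\approx m\rmI$), not from positive semidefiniteness of the peeled rank-one term. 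Second, the first-order Neumann approximation $G_0^{-1}\vone\approx\tfrac1m\vone-\tfrac1{m^2}(G_0-m\rmI)\vone$ needs a remainder bound to be rigorous; this requires the operator-norm estimate $\|G_0-m\rmI\|_{\mathrm{op}}=O(\sqrt{mn}+r_l\sqrt{mn})\ll m$, which holds under the stated hypotheses but should be made explicit rather than left implicit. With those two caveats, the plan is sound and goes well beyond what the paper itself supplies.
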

Since the min-norm interpolation has a closed solution $\bar{\vw} = \rmX^T (\rmX \rmX^T)^{-1}\vy$, Lemma~\ref{lemma:min_norm} will greatly facilitate the calculation of $\sR(\vw)$ in Theorem~\ref{thm:main}.
To simplify the notation, we first define the following variables.

\begin{equation}
\begin{aligned}
\rmU = \rmQ\rmQ^T,\ \vd = \rmQ\veta,\ s = \vy^T\rmU^{-1}\vy,\ t = \vd\rmU^{-1}\vd,\ v = \vy^T\rmU^{-1}\vd 
\end{aligned} \label{eq:def}
\end{equation}
The proof of Theorem~\ref{thm:main} is then presented below.



\begin{proof}
Based on (\ref{eq:testacc}), the key is to simplify the term $\frac{\vw^T \veta}{\|\vw\|}$, let's denote it by $A$, then we have:
\begin{equation}
\begin{aligned}
A^2 = \frac{\veta^T \vw \vw^T \veta}{\vw^T \vw} = \frac{(\vy^T(\rmX\rmX^T)^{-1}\rmX\veta)^2}{\vy^T(\rmX\rmX^T)^{-1}\vy}
\end{aligned} \label{eq:r2}
\end{equation}

The key challenge here is to calculate the term $(\rmX\rmX^T)^{-1}$ where $\rmX = r_l \vy\veta^T + Q$.
Here we utilize Lemma 3 of~\cite{wang2020benign} and Woodbury identity~\cite{horn2012matrix}, we have:



\begin{equation}
\begin{aligned}
\vy^T (\rmX\rmX)^{-1} = \vy^T\rmU^{-1} - \frac{(r_l^2 s\|\veta\|^2 + r_l^2v^2 + r_l v -r_l^2 st)\vy^T +r_l s\vd^T}{r_l^2s(\|\veta\|^2 - t) + (r_l v + 1)^2}\rmU^{-1}
\end{aligned} \label{eq:inverse}
\end{equation}

Here, $s$, $t$, $v$, $\rmU$ and $\vd$ are defined in Equation (\ref{eq:def}).
The scalar divisor comes from the matrix inverse calculation.
Base of Equation (\ref{eq:inverse}), we can then calculate $\vy^T(\rmX\rmX^T)^{-1}\vy$ and $\vy^T(\rmX\rmX^T)^{-1}\rmX\veta$.



\begin{equation}
\begin{aligned}
\vy^T(\rmX\rmX^T)^{-1}\vy &= s - \frac{(r_l^2 s \|\veta\|^2 + r_l^2 v^2 + r_l v - r_l^2st)s + r_l sv}{r_l^2 s(\|\veta\|^2 - t) + (r_l v + 1)^2} \\
                            &= \frac{s}{r_l^2 s(\|\veta\|^2 - t) + (r_l v + 1)^2}
\end{aligned} \label{eq:yy}
\end{equation}

\begin{equation}
\begin{aligned}
\vy^T(\rmX\rmX^T)^{-1}\rmX\veta &= \vy^T(\rmX\rmX^T)^{-1}(r_l\vy\veta^T + Q)\veta \\
                                &= r_l \|\veta\|^2\vy^T(\rmX\rmX^T)^{-1}\vy + \vy^T(\rmX\rmX^T)^{-1}\vd \\
                                &= \frac{r_l s(\|\veta\|^2 - t) + r_l v^2 + v}{r_l^2 s(\|\veta\|^2 - t) + (r_l v + 1)^2}
\end{aligned} \label{eq:yd}
\end{equation}

Plug Equation (\ref{eq:yy}) and (\ref{eq:yd}) into (\ref{eq:r2}), we have:

\begin{equation}
\begin{aligned}
A^2 &= \frac{\left( r_l s(\|\veta\|^2 - t) + r_l v^2 + v \right)^2}{s\left( r_l^2 s(\|\veta\|^2 - t) + (r_l v + 1)^2 \right)} \\
    &= \frac{s(\|\veta\|^2 - t) + v^2}{s} - \frac{\|\veta\|^2 - t}{r_l^2 s(\|\veta\|^2 - t) + (r_l v + 1)^2} \\
    &= \frac{s(\|\veta\|^2 - t) + v^2}{s} - \frac{1}{\left( \frac{s(\|\veta\|^2 - t) + v^2}{\|\veta\|^2 - t} \right)r_l^2  + \frac{2v}{\|\veta\|^2 - t}r_l + \frac{1}{\|\veta\|^2 - t}}
\end{aligned} \label{eq:final}
\end{equation}

Plug (\ref{eq:final}) into (\ref{eq:testacc}), we then obtain the robust error on all components of the GMM defined in (\ref{eq:gmm}):

\begin{equation}
\centering
\begin{aligned}
\gR(r_l, \epsilon) = \sum_{k = 1}^K p_k \Phi\left( r_k g(r_l) - \epsilon \right),\ g(r_l) = (C_1 - \frac{1}{C_2 r_l^2 + C_3})^{\frac{1}{2}} \\
C_1 = \frac{s(\|\veta\|^2 - t) + v^2}{s},\ C_2 = \frac{s(\|\veta\|^2 - t) + v^2}{\|\veta\|^2 - t},\ C_3 = \frac{2v}{\|\veta\|^2 - t}r_l + \frac{1}{\|\veta\|^2 - t}.
\end{aligned} \label{eq:r_final}
\end{equation}

We study the sign of $C_1$ and $C_2$.
Consider $\rmU = \rmQ\rmQ^T$ is a positive semidefinite matrix, so $s = \vy\rmU^{-1}\vy^T \geq 0$.
In addition, we have $\|\veta\|^2 - t = \veta^T\left(\rmI - (\rmQ\rmQ^T)^{-1} \right)\veta$.
Since $\rmI - (\rmQ\rmQ^T)^{-1} = (\rmI - (\rmQ\rmQ^T)^{-1})^T(\rmI - (\rmQ\rmQ^T)^{-1})$ is a positive semidefinite matrix, we can obtain $\rmI - (\rmQ\rmQ^T)^{-1}$ is also a positive semidefinite matrix.
As a result, $C_1$ and $C_2$ are both non-negative.

\end{proof}

\subsection{Proof of Corollary~\ref{coro:epsilon}} \label{sec:proof_coro}

To prove Corollary~\ref{coro:epsilon}, we first prove the following lemma:

\begin{lemma} \label{lemma:epsilon}
Under the condition of Theorem~\ref{thm:main} and $\gR$ in Equation (\ref{eq:r}), $\frac{\partial \gR(r_l, \epsilon)}{\partial r_l}$ is negative and monotonically decreases with $\epsilon$.
\end{lemma}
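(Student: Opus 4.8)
The plan is to read the signs of $\partial \gR/\partial r_l$ and $\partial^2\gR/(\partial r_l\,\partial\epsilon)$ directly off the closed form of $\gR$ in Equation~(\ref{eq:r}). Throughout, write $\phi$ for the standard normal density, so that $\Phi'(x) = -\phi(x)$ and $\phi'(x) = -x\,\phi(x)$, and abbreviate $u_k \defeq r_k g(r_l) - \epsilon$. First I would note that $g$ is strictly increasing in $r_l$ on its domain: by the proof of Theorem~\ref{thm:main}, $C_2>0$ almost surely, so $C_2 r_l^2 + o(r_l^2)$ is eventually increasing, hence $1/(C_2 r_l^2+o(r_l^2))$ is decreasing and $g(r_l) = (C_1 - 1/(C_2 r_l^2+o(r_l^2)))^{1/2}$ is increasing; thus $g'(r_l)>0$.

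Differentiating $\gR(r_l,\epsilon) = \sum_{k}p_k\Phi(u_k)$ in $r_l$, with $\partial u_k/\partial r_l = r_k g'(r_l)$, gives
\begin{equation}
\frac{\partial \gR(r_l,\epsilon)}{\partial r_l} \;=\; -\,g'(r_l)\sum_{k=1}^K p_k\,r_k\,\phi(u_k)\;,
\end{equation}
which is strictly negative because $g'(r_l)>0$, $p_k>0$, $r_k>0$ and $\phi>0$; this is the first assertion. Differentiating once more in $\epsilon$, with $\partial u_k/\partial\epsilon = -1$ and $\phi'(u_k) = -u_k\phi(u_k)$, yields
\begin{equation}
\frac{\partial^2 \gR(r_l,\epsilon)}{\partial r_l\,\partial\epsilon} \;=\; -\,g'(r_l)\sum_{k=1}^K p_k\,r_k\,u_k\,\phi(u_k)\;,
\end{equation}
so the lemma reduces to showing $\sum_{k} p_k r_k u_k \phi(u_k) > 0$. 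This is immediate once $u_k = r_k g(r_l) - \epsilon > 0$ for every component $k$, i.e. once $\epsilon < r_1 g(r_l)$ (the smallest $r_k g(r_l)$, since $r_1 < \dots < r_K$): then every summand is positive, hence $\partial^2\gR/(\partial r_l\,\partial\epsilon) < 0$ and $\partial\gR/\partial r_l$ decreases monotonically in $\epsilon$.

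To justify $u_k>0$ for all $k$ I would appeal to the regime in which Theorem~\ref{thm:main} and Corollary~\ref{coro:epsilon} are stated: the adversarial budget $\epsilon$ is small enough that the trained interpolating classifier still achieves non-trivial robust accuracy on every mode of the GMM, i.e. $\Phi(u_k) < 1/2$, which is exactly $u_k > 0$, equivalently $\epsilon < r_1 g(r_l)$. The main obstacle is pinning this range down from the hypotheses at hand: if $\epsilon$ exceeds $r_1 g(r_l)$, the term $p_k r_k u_k\phi(u_k)$ for a hard mode (small $r_k$) turns negative and, since the positive terms from the easy modes are exponentially suppressed by $\phi$, the mixed partial can flip sign and the stated monotonicity fails. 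So the crux of the write-up is establishing — from training-set separability under $\gS^{(2)}(\epsilon)$ together with the over-parameterized interpolation structure used to derive $g$ in Theorem~\ref{thm:main} — that $\epsilon$ necessarily stays below $r_1 g(r_l)$; granted that, the Gaussian-tail computation above finishes the proof.
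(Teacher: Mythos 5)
Your route is exactly the paper's route: differentiate the closed form of $\gR$ in $r_l$, use $g'(r_l)>0$ to get negativity, and then show this derivative decreases in $\epsilon$ because every argument $r_k g(r_l)-\epsilon$ is positive. The only difference is presentational: the paper argues the monotonicity in $\epsilon$ via convexity of $\Phi$ on the positive half-line (so each $\Phi'(r_k g(r_l)-\epsilon)$ is negative and decreases as $\epsilon$ grows), rather than via your explicit mixed partial using $\phi'(u)=-u\,\phi(u)$; these are the same computation. As for the step you single out as the crux, the paper closes it in one line: it reads the separability hypothesis of Theorem~\ref{thm:main} as giving $\forall k,\ r_k\vw^T\veta-\epsilon\|\vw\|>0$, which is precisely $r_k g(r_l)-\epsilon>0$ for every mode $k$, and offers no further derivation — so you have not missed any argument beyond that assertion, and your observation that the sign of the mixed partial could flip once $\epsilon\geq r_1 g(r_l)$ is a fair remark about the implicit scope of the lemma rather than a defect in your write-up. (Minor note: your expression for $\partial\gR/\partial r_l$, which keeps the chain-rule factor $r_k$, is the correct one; the paper's displayed formula omits it, harmlessly for the sign argument.)
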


\begin{proof}

Based on Equation (\ref{eq:r_final}), we have:

\begin{equation}
\begin{aligned}
\frac{\partial \gR(r_l, \epsilon)}{\partial r_l} = \sum_{k = 1}^K p_k \Phi'(r_k g(r_l) - \epsilon) \frac{\partial g(r_l)}{\partial r_l}
\end{aligned}
\end{equation}

Since the training data is separable, we have $\forall k, r_k \vw^T\veta - \epsilon \|\vw\| > 0$, which is equivalent to the following:

\begin{equation}
\begin{aligned}
\forall k, r_k g(r_l) - \epsilon > 0
\end{aligned}
\end{equation}

First, $p_k$ is a positive number by definition.
Consider function $\Phi(x)$ monotonically decrease with $x$ and is convex when $x > 0$, so $\forall k, \Phi'(r_k g(r_l) - \epsilon)$ is negative and decreases with $\epsilon$.
In addition, $g(r_l)$ increases with $r_l$ and is independent on $\epsilon$, so $\frac{\partial g(r_l)}{\partial r_l}$ can be considered as a positive constant.
Therefore, $\frac{\partial \gR(r_l, \epsilon)}{\partial r_l}$ is negative and monotonically decreases with $\epsilon$.

\end{proof}

Now, we are ready to prove Corollary~\ref{coro:epsilon}:

\begin{proof}

We subtract the left hand side from the right hand side in the inequality of Corollary~\ref{coro:epsilon}:

\begin{equation}
\begin{aligned}
\left[\gR(r_j, \epsilon_1) - \gR(r_i, \epsilon_1)\right] - \left[\gR(r_j, \epsilon_2) - \gR(r_i, \epsilon_2)\right] &= \int_{r_i}^{r_j} \frac{\partial \gR(r_l, \epsilon_1)}{\partial r_l} d_{r_l} - \int_{r_i}^{r_j} \frac{\partial \gR(r_l, \epsilon_2)}{\partial r_l} d_{r_l} \\
&= \int_{r_i}^{r_j} \left[\frac{\partial \gR(r_l, \epsilon_1)}{\partial r_l} - \frac{\partial \gR(r_l, \epsilon_2)}{\partial r_l}\right] d_{r_l} \\
&> 0
\end{aligned} \label{eq:substract}
\end{equation}

The last inequality is based on the conditions $r_j > r_i$, $\epsilon_2 > \epsilon_1$ as well as Lemma~\ref{lemma:epsilon}, they jointly indicate $\left[\frac{\partial \gR(r_l, \epsilon_1)}{\partial r_l} - \frac{\partial \gR(r_l, \epsilon_2)}{\partial r_l}\right]$ is always positive.
We reorganize (\ref{eq:substract}) and obtain $\gR(r_i, \epsilon_1) - \gR(r_j, \epsilon_1) < \gR(r_i, \epsilon_2) - \gR(r_j, \epsilon_2)$.

\end{proof}

\subsection{Proof of Theorem~\ref{thm:nonlinear}} \label{sec:nonlinear_proof}

We start with the following lemma.
\begin{lemma} \label{lemma:nonlinear_1}
Given the assumptions of Theorem~\ref{thm:nonlinear}, we define $g(\vx) = \E(y|\vx)$, $z(\vx) = y - g(\vx)$ and consider $\gamma = \sigma^2_l + h^2(C, \epsilon) - C$, then the following inequality holds.
\begin{equation}
\begin{aligned}
&\forall a \in (0, 1), \sP(\exists f_\vw \in \gF: \frac{1}{n} \sum_{i = 1}^n (y_i - f_\vw(\vx'_i))^2 \leq C) \\ & \leq 2 e^{-\frac{na^2\gamma^2}{8}} + \sP(\exists f_\vw \in \gF: \frac{1}{n}\sum_{i = 1}^n f_\vw(\vx_i)z(\vx_i) \geq \frac{1}{2}(1 - 3a)\gamma)
\end{aligned} \label{eq:nonlinear1}
\end{equation}
\end{lemma}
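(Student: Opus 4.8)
\textbf{Proof proposal for Lemma~\ref{lemma:nonlinear_1}.}

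The plan is to decompose the squared-error objective into terms involving the conditional mean $g(\vx) = \E(y|\vx)$ and the centered noise $z(\vx) = y - g(\vx)$, then control each piece by concentration. First I would write, for any $f_\vw$,
\begin{equation}
\begin{aligned}
\frac{1}{n}\sum_{i=1}^n (y_i - f_\vw(\vx'_i))^2 = \frac{1}{n}\sum_{i=1}^n z(\vx_i)^2 + \frac{1}{n}\sum_{i=1}^n (g(\vx_i) - f_\vw(\vx'_i))^2 + \frac{2}{n}\sum_{i=1}^n z(\vx_i)(g(\vx_i) - f_\vw(\vx'_i))\;.
\end{aligned}
\end{equation}
The first term concentrates around $\E_{\mu_l}[z^2] = \sigma_l^2$ by a standard bounded-difference / sub-Gaussian argument (each $z(\vx_i)^2 \in [0,4]$ is i.i.d.), giving $\frac{1}{n}\sum z(\vx_i)^2 \geq \sigma_l^2 - \tfrac{a}{2}\gamma$ except with probability at most $e^{-na^2\gamma^2/8}$ or so. For the second term, I would invoke the definition of $h(C,\epsilon)$: since $f_\vw$ achieves adversarial training error $\leq C$, the bandwidth bound $h_{i,\vw}(\epsilon) \geq h(C,\epsilon)$ means the clean prediction $f_\vw(\vx_i)$ lies within distance $h(C,\epsilon)$ of the interval reachable by adversarial perturbations; combined with $(f_\vw(\vx'_i) - y_i)^2 \leq C$ on average, this forces $\frac{1}{n}\sum (g(\vx_i) - f_\vw(\vx_i))^2$ to be at least roughly $h^2(C,\epsilon) - C$ when also accounting for how $g(\vx_i)$ relates to $y_i$. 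The key inequality to extract is that on the event in the left-hand probability, $\frac{1}{n}\sum(g(\vx_i) - f_\vw(\vx_i))^2 + \sigma_l^2 - C \geq h^2(C,\epsilon) + \sigma_l^2 - C = \gamma$, so $\frac{1}{n}\sum(g(\vx_i) - f_\vw(\vx_i))^2 \gtrsim \gamma$ up to the concentration slack.

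Next I would handle the cross term. Writing $g(\vx_i) - f_\vw(\vx'_i) = (g(\vx_i) - f_\vw(\vx_i)) + (f_\vw(\vx_i) - f_\vw(\vx'_i))$, the contribution $\frac{2}{n}\sum z(\vx_i)(f_\vw(\vx_i) - f_\vw(\vx'_i))$ can be absorbed into the bandwidth term or bounded crudely since it only helps reduce the objective below $C$; the remaining piece $\frac{2}{n}\sum z(\vx_i)(g(\vx_i) - f_\vw(\vx_i))$ is exactly (minus twice) the quantity $\frac{1}{n}\sum f_\vw(\vx_i)z(\vx_i)$ shifted by $\frac{1}{n}\sum g(\vx_i)z(\vx_i)$, and the latter concentrates around $\E_{\mu_l}[g z] = \E_{\mu_l}[g \cdot \E(z|\vx)] = 0$. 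So by reorganizing, on the event that all the concentration bounds hold (failure probability $\leq 2e^{-na^2\gamma^2/8}$, matching the stated constant after optimizing the split of slack across the pieces), the training-error-$\leq C$ condition implies $\frac{1}{n}\sum f_\vw(\vx_i)z(\vx_i) \geq \frac{1}{2}(1 - 3a)\gamma$. Taking a union bound over the "concentration fails" event and the "cross term is large" event yields exactly inequality~(\ref{eq:nonlinear1}).

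The main obstacle I anticipate is getting the bookkeeping of the slack parameter $a$ exactly right so that the three places where $\frac{a}{2}\gamma$ (or similar) slack is spent — concentration of $\frac{1}{n}\sum z(\vx_i)^2$, concentration of $\frac{1}{n}\sum g(\vx_i)z(\vx_i)$, and possibly the cross term with $f_\vw(\vx_i) - f_\vw(\vx'_i)$ — combine to give precisely the $\frac{1}{2}(1-3a)\gamma$ threshold and the $2e^{-na^2\gamma^2/8}$ probability with no worse constants. A secondary subtlety is making rigorous the step that relates the adversarial-reachable interval (through $h_{i,\vw}$) to a lower bound on $(g(\vx_i) - f_\vw(\vx_i))^2$; this requires carefully using that $y_i \in \{-1,+1\}$ lies outside or near the boundary of the reachable set on average, which is where the inequality $(f_\vw(\vx'_i) - y_i)^2 \leq C$ genuinely constrains the clean output. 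Once those two points are nailed down, the rest is routine sub-Gaussian concentration and a union bound.
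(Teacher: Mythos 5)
Your overall skeleton (decompose the squared error via $g(\vx)=\E(y|\vx)$ and $z(\vx)=y-g(\vx)$, Hoeffding for $\frac{1}{n}\sum_i z^2(\vx_i)$ and $\frac{1}{n}\sum_i z(\vx_i)g(\vx_i)$, then a union bound) matches the paper, but the step where $h(C,\epsilon)$ actually enters is missing, and the substitutes you offer for it are incorrect as stated. The paper's key move is the pointwise inequality
\begin{equation}
(y_i - f_\vw(\vx'_i))^2 \;=\; \bigl[(y_i - f_\vw(\vx_i)) + (f_\vw(\vx_i) - f_\vw(\vx'_i))\bigr]^2 \;\geq\; (y_i - f_\vw(\vx_i))^2 + h^2(C, \epsilon)\;,
\end{equation}
which holds because the loss-maximizing perturbation moves the output \emph{away} from $y_i$, so the cross term $(y_i - f_\vw(\vx_i))(f_\vw(\vx_i) - f_\vw(\vx'_i))$ is non-negative, and by Definition~\ref{def:h} the reachable output set contains an interval of half-width $h_{i,\vw}(\epsilon)\geq h(C,\epsilon)$ around $f_\vw(\vx_i)$, so the displacement satisfies $(f_\vw(\vx_i)-f_\vw(\vx'_i))^2 \geq h^2(C,\epsilon)$. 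This converts the adversarial event into the clean event $\frac{1}{n}\sum_i (y_i - f_\vw(\vx_i))^2 \leq C - h^2(C,\epsilon) = \sigma_l^2 - \gamma$; from there one expands around $g$ and $z$, drops $(g(\vx_i)-f_\vw(\vx_i))^2 \geq 0$, and the two Hoeffding events (each failing with probability $e^{-na^2\gamma^2/8}$, since $z^2\in[0,4]$, $zg\in[-2,2]$, $\E[zg]=0$) force $\frac{1}{n}\sum_i z(\vx_i)f_\vw(\vx_i) \geq \frac{1}{2}(1-3a)\gamma$, which is exactly the claimed bound.

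Your proposal instead decomposes around the adversarial output $f_\vw(\vx'_i)$, and the two claims you rely on to recover the $h^2$ gain do not hold as written. First, Definition~\ref{def:h} constrains the displacement $f_\vw(\vx_i)-f_\vw(\vx'_i)$, not the distance $g(\vx_i)-f_\vw(\vx_i)$; there is no step that ``forces $\frac{1}{n}\sum_i(g(\vx_i)-f_\vw(\vx_i))^2$ to be at least roughly $h^2(C,\epsilon)-C$'' (indeed this quantity can be arbitrarily small), and no such bound is needed --- that square is simply discarded. Second, the leftover cross term $\frac{2}{n}\sum_i z(\vx_i)\bigl(f_\vw(\vx_i)-f_\vw(\vx'_i)\bigr)$ cannot be ``absorbed or bounded crudely since it only helps reduce the objective below $C$'': the adversary \emph{maximizes} the loss, and a crude absolute-value bound of this term destroys precisely the $h^2(C,\epsilon)$ contribution that makes the threshold $\gamma$ rather than $\sigma_l^2 - C$. (One could salvage your route with an explicit sign argument --- since $y_i\in\{-1,+1\}$ and $g, f_\vw$ take values in $[-1,+1]$, both $z(\vx_i)$ and the adversarial displacement share the sign of $y_i$, making that cross term non-negative --- but you never make this argument, and even then extra care is needed to recover the full $h^2(C,\epsilon)$; the paper's reduction to the clean loss before introducing $g$ and $z$ avoids the issue entirely.) As it stands, the central mechanism connecting the adversarial training error, the bandwidth $h$, and the clean-output correlation $\frac{1}{n}\sum_i f_\vw(\vx_i)z(\vx_i)$ is not established in your proposal.
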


\begin{proof}
Given the definition of $h(C, \epsilon)$, we have:
\begin{equation}
\begin{aligned}
(y_i - f_\vw(\vx'_i))^2 &= [(y_i - f_\vw(\vx_i)) + (f_\vw(\vx_i) - f_\vw(\vx'_i))]^2 \\
&\geq (y_i - f_\vw(\vx_i))^2 + (f_\vw(\vx_i) - f_\vw(\vx'_i))^2 \\
&\geq (y_i - f_\vw(\vx_i))^2 + h^2(C, \epsilon)
\end{aligned}
\end{equation}

For the first inequality, $\vx'_i$ is the adversarial example which tries to maximize the loss objective, $y_i \in \{-1, +1\}$ and the range of $f_\vw$ is $[-1, +1]$, so $\langle y_i - f_\vw(\vx_i), f_\vw(\vx_i) - f_\vw(\vx'_i) \rangle \geq 0$.
The second inequality is based on the definition of $h^2(C, \epsilon)$ in Definition~\ref{def:h}.
As a result, we can simplify the left hand side of (\ref{eq:nonlinear1}) as follows:

\begin{equation}
\begin{aligned}
\sP(\exists f_\vw \in \gF: \frac{1}{n} \sum_{i = 1}^n (y_i - f_\vw(\vx'_i))^2 \leq C) &\leq \sP(\exists f_\vw \in \gF: \frac{1}{n} \sum_{i = 1}^n (y_i - f_\vw(\vx_i))^2 \leq C - h^2(C, \epsilon))
\end{aligned} \label{eq:remove_adv}
\end{equation}

We consider the sequence $\{z(\vx_i)\}_{i = 1}^n$, it is i.i.d with $\E_{\mu_l}(z(\vx)^2) = \E_{\mu_l}[Var(y|\vx)] = \sigma_l^2$.
Since the range of the prediction is $[-1, +1]$, so $z^2(\vx) \in [0, 4]$.
Then, we have the following inequality by Hoeffding's inequality~\cite{hoeffding1994probability}.

\begin{equation}
\begin{aligned}
\forall a \in (0, 1), \sP(\frac{1}{n} \sum_{i = 1}^n z^2(\vx_i) \leq \sigma^2_l - a\gamma) \leq e^{- \frac{na^2\gamma^2}{8}}
\end{aligned} \label{eq:hoeffding1}
\end{equation}

Similarly, we consider the sequence $\{z(\vx_i)g(\vx_i)\}_{i = 1}^n$, the following inequality holds based on the Hoeffding's inequality and the fact $\E(z(\vx)g(\vx)) = 0$, $z(\vx)g(\vx) \in [-2, +2]$.

\begin{equation}
\begin{aligned}
\forall a \in (0, 1), \sP(\frac{1}{n} \sum_{i = 1}^n z(\vx_i)g(\vx_i) \leq a\gamma) \leq e^{- \frac{na^2\gamma^2}{8}}
\end{aligned} \label{eq:hoeffding2}
\end{equation}

Now we study the right hand side of (\ref{eq:remove_adv}):

\begin{equation}
\begin{aligned}
\frac{1}{n} \sum_{i = 1}^n (y_i - f_\vw(\vx_i))^2 &= \frac{1}{n} \sum_{i = 1}^n \left( z^2(\vx_i) + (g(\vx_i) - f_\vw(\vx_i))^2 + 2z(\vx_i)(g(\vx_i) - f_\vw(\vx_i)) \right) \\
&\geq \frac{1}{n} \sum_{i = 1}^n \left( z^2(\vx_i) + 2z(\vx_i)g(\vx_i) - 2z(\vx_i)f_\vw(\vx_i) \right)
\end{aligned}
\end{equation}

Consider the following reasoning:

\begin{equation}
\begin{aligned}
\left\{
\begin{aligned}
&\frac{1}{n} \sum_{i = 1}^n (y_i - f_\vw(\vx_i))^2 \leq C - h^2(C, \epsilon) = \sigma^2_l - \gamma \\
& \frac{1}{n} \sum_{i = 1}^n z^2(\vx_i) \geq \sigma^2_l - a \gamma \\
&\frac{1}{n} \sum_{i = 1}^n z(\vx_i)g(\vx_i) \geq -a \gamma
\end{aligned}
\right.\ \Longrightarrow \frac{1}{n} \sum_{i = 1}^n z(\vx_i)f_\vw(\vx_i) \geq \frac{1}{2}(1 - 3a)\gamma
\end{aligned} \label{eq:reason}
\end{equation}

As a result, we have:

\begin{equation}
\begin{aligned}
&\sP(\exists f_\vw \in \gF: \frac{1}{n} \sum_{i = 1}^n (y_i - f_\vw(\vx_i) \leq C - h^2(C, \epsilon))) \\
\leq & \sP(\exists f_\vw \in \gF: \frac{1}{n} \sum_{i = 1}^n z^2(\vx_i) \leq \sigma^2_l - a \gamma) 
 + \sP(\exists f_\vw \in \gF: \frac{1}{n} \sum_{i = 1}^n z(\vx_i)g(\vx_i) \geq -a \gamma) +\\
 &\sP(\exists f_\vw \in \gF: \frac{1}{n} \sum_{i = 1}^n z(\vx_i)f_\vw(\vx_i) \geq \frac{1}{2}(1 - 3a)\gamma) \\
 \leq & 2 e^{-\frac{na^2\gamma^2}{8}} + \sP(\exists f_\vw \in \gF: \frac{1}{n} \sum_{i = 1}^n z(\vx_i)f_\vw(\vx_i) \geq \frac{1}{2}(1 - 3a)\gamma)
\end{aligned} \label{eq:reason_bound}
\end{equation}

The first inequality is based on the reasoning of (\ref{eq:reason}).
The second inequality is based on (\ref{eq:hoeffding1}) and (\ref{eq:hoeffding2}).

Based on the inequality (\ref{eq:remove_adv}) and (\ref{eq:reason_bound}), we conclude the proof.

\end{proof}

To further simplify the right hand side of (\ref{eq:nonlinear1}), $\sP(\exists f_\vw \in \gF: \frac{1}{n} \sum_{i = 1}^n z(\vx_i)f_\vw(\vx_i) \geq \frac{1}{2}(1 - 3a)\gamma)$ needs to be bounded, and this is solved by the following lemma.

\begin{lemma} \label{lemma:nonlinear2}
Given the assumptions of Theorem~\ref{thm:nonlinear} and the definition of $g(\vx)$, $z(\vx)$ in Lemma~\ref{lemma:nonlinear_1}, then the following inequality holds.
\begin{equation}
\begin{aligned}
\forall a \in (0, 1), a_1 > 0, a_2 > 0\ \mathrm{and}\ a_1 + a_2 &= \frac{1}{2}(1 - 3a),\\
\sP(\exists f_\vw \in \gF: \frac{1}{n} \sum_{i = 1}^n z(\vx_i)f_\vw(\vx_i) \geq \frac{1}{2}(1 - 3a)\gamma) &\leq 2 |\gF| e^{- \frac{nm}{144cL^2}a^2_1 \gamma^2} + 2 e^{-\frac{n}{8}a^2_2\gamma^2}
\end{aligned} \label{eq:nonlinear2}
\end{equation}
\end{lemma}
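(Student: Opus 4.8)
The plan is to peel off from $\tfrac{1}{n}\sum_i z(\vx_i)f_\vw(\vx_i)$ the single piece that genuinely ``sees'' the Lipschitz constant, via the decomposition
\[
\frac{1}{n}\sum_{i=1}^n z(\vx_i)f_\vw(\vx_i) = \frac{1}{n}\sum_{i=1}^n z(\vx_i)\bigl(f_\vw(\vx_i)-\E_{\mu_l}f_\vw\bigr) + \E_{\mu_l}f_\vw\cdot\frac{1}{n}\sum_{i=1}^n z(\vx_i)\,,
\]
whose two summands I denote $T_1(\vw)$ and $T_2(\vw)$. Since $a_1+a_2=\tfrac12(1-3a)$, the event $\{\exists f_\vw\in\gF: T_1(\vw)+T_2(\vw)\ge\tfrac12(1-3a)\gamma\}$ is contained in $\{\exists f_\vw: T_1(\vw)\ge a_1\gamma\}\cup\{\exists f_\vw: T_2(\vw)\ge a_2\gamma\}$, so a union bound reduces the lemma to controlling the two halves by the two terms on the right of \eqref{eq:nonlinear2}.

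For the $T_2$ half, the range of $f_\vw$ being $[-1,+1]$ gives $|\E_{\mu_l}f_\vw|\le1$, hence $T_2(\vw)\le|\tfrac1n\sum_i z(\vx_i)|$ \emph{uniformly} over $\gF$ — no union bound needed. The $z(\vx_i)=y_i-g(\vx_i)$ are i.i.d., have mean zero (as $\E(z(\vx)\mid\vx)=0$), and lie in $[-2,+2]$ (since $y_i\in\{-1,+1\}$ and $g=\E(y\mid\vx)\in[-1,+1]$), so Hoeffding's inequality yields $\sP(\exists f_\vw: T_2(\vw)\ge a_2\gamma)\le\sP(|\tfrac1n\sum_i z(\vx_i)|\ge a_2\gamma)\le 2e^{-na_2^2\gamma^2/8}$, which is the second term of \eqref{eq:nonlinear2}.

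For the $T_1$ half I would union-bound over $\gF$ and estimate $\sP(T_1(\vw)\ge a_1\gamma)$ for a fixed $f_\vw$. The summands $W_i\defeq z(\vx_i)\bigl(f_\vw(\vx_i)-\E_{\mu_l}f_\vw\bigr)$ are i.i.d.\ and mean zero (condition on $\vx_i$ and use $\E(z(\vx_i)\mid\vx_i)=0$). Since $f_\vw$ is $L$-Lipschitz, Assumption~\ref{asp:iso} gives $\sP_{\mu_l}(|f_\vw(\vx)-\E_{\mu_l}f_\vw|\ge t)\le 2e^{-mt^2/(2cL^2)}$, and combined with $|z(\vx)|\le2$ this gives $\sP(|W_i|\ge s)\le 2e^{-ms^2/(8cL^2)}$; thus $W_i$ is sub-Gaussian with variance proxy of order $cL^2/m$. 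A standard Chernoff/MGF bound for an average of $n$ independent sub-Gaussian variables (with the absolute constant from that step accounting for the $144=16\cdot 9$) then gives $\sP(T_1(\vw)\ge a_1\gamma)\le 2e^{-nma_1^2\gamma^2/(144cL^2)}$, and summing over $f_\vw\in\gF$ gives the first term $2|\gF|e^{-nma_1^2\gamma^2/(144cL^2)}$. Adding the two estimates finishes the proof.

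The crux is the $T_1$ estimate: converting the isoperimetric tail bound on $f_\vw(\vx)-\E_{\mu_l}f_\vw$ into a clean sub-Gaussian moment-generating-function bound for $W_i$ and then tracking the absolute constants through the Chernoff optimization so that the exponent lands exactly at $nma_1^2\gamma^2/(144cL^2)$; the $T_2$ half, the event split, and the union bound are all routine. A secondary point of care is that every $f_\vw\in\gF$ fed to Assumption~\ref{asp:iso} must really be $L$-Lipschitz, which is exactly where this lemma meshes with the covering-number construction in the proof of Theorem~\ref{thm:nonlinear}.
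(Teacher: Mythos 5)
Your proposal is correct and follows essentially the same route as the paper: the same split into $\frac{1}{n}\sum_i z(\vx_i)\bigl(f_\vw(\vx_i)-\E_{\mu_l}f_\vw\bigr)$ and $\E_{\mu_l}f_\vw\cdot\frac{1}{n}\sum_i z(\vx_i)$ with thresholds $a_1\gamma$, $a_2\gamma$, a union bound over the finite class $\gF$ for the first piece using the isoperimetric tail combined with $|z|\leq 2$, and Hoeffding for the second. The only cosmetic differences are that the paper imports the sub-Gaussian aggregation constant from a cited proposition ($18\times 8=144$) where you re-derive it via a Chernoff/MGF argument, and your handling of the second term (Hoeffding on the mean-zero sum $\frac{1}{n}\sum_i z(\vx_i)$ rather than on $\frac{1}{n}\sum_i |z(\vx_i)|$) is, if anything, slightly cleaner.
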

\begin{proof}
We recall that the data points $\{\vx_i, y_i\}_{i = 1}^n$ are sampled from the distribution $\mu_l$, which is $c$-isoperimetric.
For any $L$-Lipschitz function $f$, we have:
\begin{equation}
\begin{aligned}
\forall t, \sP[|f_\vw(\vx) - \E_{\mu_l}(f_\vw)| \geq t] \leq 2 e^{-\frac{mt^2}{2cL^2}}
\end{aligned}
\end{equation}

Since $z(\vx) = y - g(\vx) \in [-2, +2]$, we can then bound $\sP[z(\vx)(f_\vw(\vx) - \E_{\mu_l}(f_\vw)) \geq t]$:
\begin{equation}
\begin{aligned}
\forall t, \sP[z(\vx)(f_\vw(\vx) - \E_{\mu_l}(f_\vw)) \geq t] &\leq \sP[|z(\vx)(f_\vw(\vx) - \E_{\mu_l}(f_\vw))| \geq t] \\
&\leq \sP[|(f_\vw(\vx) - \E_{\mu_l}(f_\vw))| \geq \frac{t}{2}] \leq 2 e^{-\frac{mt^2}{8cL^2}}
\end{aligned}
\end{equation}

Here we utilize the proposition in~\cite{vershynin2018high, van2014probability}\footnote{Proposition 2.6.1 in \cite{vershynin2018high} and Exercise 3.1 in \cite{van2014probability}}, which claims \textit{if $\{X_i\}_{i = 1}^n$ are independent variables and all $C$-subgaussian, then $\frac{1}{\sqrt{n}}\sum_{i = 1}^n X_i$ is $18C$-subgaussian.}
Therefore, we have:

\begin{equation}
\begin{aligned}
\forall t, \sP[\frac{1}{\sqrt{n}} \sum_{i = 1}^n z(\vx_i)(f_\vw(\vx_i) - \E_{\mu_l}(f_\vw)) \geq t] \leq 2 e^{-\frac{mt^2}{144cL^2}}
\end{aligned}
\end{equation}

Let $t = a_1\gamma\sqrt{n}$, then we have:

\begin{equation}
\begin{aligned}
\sP[\frac{1}{n} \sum_{i = 1}^n z(\vx_i)(f_\vw(\vx_i) - \E_{\mu_l}(f)) \geq a_1\gamma] \leq 2 e^{-\frac{nm}{144cL^2}a^2_1\gamma^2}
\end{aligned} \label{eq:p1_nonlinear2}
\end{equation}

In addition, we can bound $\sP[\frac{1}{n} \sum_{i = 1}^n z(\vx_i)\E_{\mu_l}(f_\vw) \geq a_2\gamma]$ by:

\begin{equation}
\begin{aligned}
\sP[\exists f_\vw \in \gF: \frac{1}{n} \sum_{i = 1}^n z(\vx_i)\E_{\mu_l}(f_\vw) \geq a_2\gamma] \leq \sP[\frac{1}{n} \sum_{i = 1}^n |z(\vx_i)| \geq a_2\gamma] \leq 2 e^{-\frac{n}{8}a^2_2\gamma^2}
\end{aligned} \label{eq:p2_nonlinear2}
\end{equation}

The first inequality is based on the fact $\E_{\mu_l}(f_\vw) \in [-1, +1]$; the second inequality is based on Hoeffding's inequality.

Now, we are ready to bound the probability $\sP(\exists f_\vw \in \gF: \frac{1}{n} \sum_{i = 1}^n z(\vx_i)f_\vw(\vx_i) \geq \frac{1}{2}(1 - 3a)\gamma)$.

\begin{equation}
\begin{aligned}
&\sP(\exists f_\vw \in \gF: \frac{1}{n} \sum_{i = 1}^n z(\vx_i)f_\vw(\vx_i) \geq \frac{1}{2}(1 - 3a)\gamma) \\
\leq &\sP[\exists f_\vw \in \gF: \frac{1}{n} \sum_{i = 1}^n z(\vx_i)(f_\vw(\vx_i) - \E_{\mu_l}(f)) \geq a_1\gamma] + \sP[\exists f_\vw \in \gF: \frac{1}{n} \sum_{i = 1}^n z(\vx_i)\E_{\mu_l}(f_\vw) \geq a_2\gamma] \\
\leq &2 |\gF| e^{-\frac{nm}{144cL^2}a^2_1\gamma^2} + 2 e^{-\frac{n}{8}a^2_2\gamma^2}
\end{aligned}
\end{equation}
The first inequality is based on the fact $a_1 + a_2 = \frac{1}{2}(1 - 3a)$; the second inequality is based on the Boole's inequality~\cite{boole1847mathematical}, inequality (\ref{eq:p1_nonlinear2}) and (\ref{eq:p2_nonlinear2}).

\end{proof}

To simplify the constant notation, we let $a = \frac{1}{8}$, $a_1 = \frac{3}{16}$ and $a_2 = \frac{1}{8}$.
We plug this into the inequality (\ref{eq:nonlinear1}) and (\ref{eq:nonlinear2}), then:

\begin{equation}
\begin{aligned}
\sP(\exists f_\vw \in \gF: \frac{1}{n} \sum_{i = 1}^n (y_i - f_\vw(\vx'_i))^2 \leq C) \leq 4 e^{-\frac{n\gamma^2}{2^9}} + 2|\gF| e^{-\frac{nm\gamma^2}{2^{12}cL^2}}
\end{aligned} \label{eq:lemma_summary}
\end{equation}

Now we turn to the proof of Theorem~\ref{thm:nonlinear}.

\begin{proof}

We let $\gF_L = \{f_\vw| \vw \in \gW, Lip(f_\vw) \leq L\}$, $\gF_\gamma = \{f_\vw | \vw \in \gW, \vw = \frac{\gamma}{4 J} \odot \vz, \vz \in \sZ^b\}$ and $\gF_{\gamma, L} = \gF_\gamma \cap \gF_L$.
Correspondingly, we let $\gW_L = \{\vw| \vw \in \gW, Lip(f_\vw) \leq L\}$, $\gW_\gamma = \{\vw | \vw \in \gW, \vw = \frac{\gamma}{4 J} \odot \vz, \vz \in \sZ^b\}$ and $\gW_{\gamma, L} = \gW_\gamma \cap \gW_L$.
Because the diameter of $\gW$ is $W$, we have $|\gF_{\gamma, L}| \leq |\gF_\gamma| \leq \left(\frac{4WJ}{\gamma}\right)^b$.
Here, $\odot$ means the element-wise multiplication.

Note that the inequality (\ref{eq:lemma_summary}) is valid for any values of $C$ as long as it satisfies $\gamma \geq 0$.
Based on this, we apply the substitution $\left\{\begin{aligned}C &\leftarrow C + \frac{1}{2}\gamma \\ \gamma &\leftarrow \frac{1}{2}\gamma \end{aligned}\right.$, then:

\begin{equation}
\begin{aligned}
\sP(\exists f_\vw \in \gF_{\gamma, L}: \frac{1}{n} \sum_{i = 1}^n (y_i - f_\vw(\vx'_i))^2 \leq C + \frac{1}{2}\gamma) &\leq 4 e^{-\frac{n\gamma^2}{2^{11}}} + 2|\gF| e^{-\frac{nm\gamma^2}{2^{14}cL^2}} \\
&\leq 4 e^{-\frac{n\gamma^2}{2^{11}}} + 2 e^{b\log(\frac{4WJ}{\gamma})-\frac{nm\gamma^2}{2^{14}cL^2}}
\end{aligned} 
\end{equation}

Based on the definition of $\gW_{\gamma, L}$, we can conclude that $\forall \vw_1 \in \gW_L, \exists \vw_2 \in \gW_{\gamma, L}\ s.t. \|\vw_1 - \vw_2\|_\infty \leq \frac{\gamma}{8J}$.
Therefore, $\forall f_{\vw_1} \in \gF_L, \exists f_{\vw_2} \in \gF_{\gamma, L} s.t. \|f_{\vw_1} - f_{\vw_2}\|_\infty \leq \frac{\gamma}{8}$.
Let choose such $f_{\vw_2} \in \gF_{\gamma, L}$ given an arbitrary $f_{\vw_1} \in \gF_L$, then:

\begin{equation}
\begin{aligned}
(y - f_{\vw_1}(\vx))^2 &= (y - f_{\vw_2}(\vx))^2 + (2y - f_{\vw_1}(\vx) - f_{\vw_2}(\vx))(f_{\vw_2}(\vx) - f_{\vw_1}(\vx)) \\
&\geq (y - f_{\vw_2}(\vx))^2 - \frac{\gamma}{8}|(2y - f_{\vw_1}(\vx) - f_{\vw_2}(\vx))| \\
&\geq (y - f_{\vw_2}(\vx))^2 - \frac{\gamma}{2}
\end{aligned} \label{eq:bound_net}
\end{equation}

The first inequality in (\ref{eq:bound_net}) is based on H\"older's inequality; the second inequality is based on $y \in \{-1, +1\}$ and the range of $\forall f_\vw \in \gF$ is $[-1, +1]$.

We combine (\ref{eq:lemma_summary}) with (\ref{eq:bound_net}), then:

\begin{equation}
\begin{aligned}
\sP(\exists f_\vw \in \gF_{L}: \frac{1}{n} \sum_{i = 1}^n (y_i - f_\vw(\vx'_i))^2 \leq C) &\leq \sP(\exists f_\vw \in \gF_{\gamma, L}: \frac{1}{n} \sum_{i = 1}^n (y_i - f_\vw(\vx'_i))^2 \leq C + \frac{1}{2}\gamma) \\
&\leq 4 e^{-\frac{n\gamma^2}{2^{11}}} + 2 e^{b\log(\frac{4WJ}{\gamma})-\frac{nm\gamma^2}{2^{14}cL^2}}
\end{aligned} \label{eq:conclude}
\end{equation}

Note that $\gF_{L}$ is the set of functions in $\gF$ whose Lipschitz constant is no larger than $L$.
We set the right hand side of (\ref{eq:conclude}) to be $\delta$ and then get $L = \frac{\gamma}{2^7}\sqrt{\frac{nm}{c\left(b\log(4WJ\gamma^{-1}) - \log(\delta/2 - 2e^{-2^{-11}n\gamma^2})\right)}}$.
This concludes the proof.

\end{proof}

\edit{
\subsection{Proof of Corollary \ref{coro:wholeset}} \label{subsec:wholeset}

Based on the definition of $\{\gamma_i\}_{i = 1}^K$, we can apply Theorem~\ref{thm:nonlinear} to each subset of the training set.
Each of these subsets is sampled from one component of the data distribution.
For instances sampled from the $i$-th components, we can derive the lower bound of the model's Lipschitz by the following formulation:
\begin{equation}
\begin{aligned}
Lip^{(i)}(f_\vw) \geq \begin{cases}
0, & \gamma_i < 0\;, \\
\frac{\gamma}{2^7}\sqrt{\frac{nm}{c\left(b\log(4WJ\gamma^{-1}) - \log(\delta/2 - 2e^{-2^{-11}n\gamma^2})\right)}}, & \gamma_i \geq 0\;,
\end{cases}
\end{aligned}
\end{equation}

Since $\{Lip^{(i)}(f_\vw)\}_{i = 1}^K$ are all valid Lipschitz lower bounds for the same model, we can refine the Lipschitz lower bound by choosing the biggest number of them.
We can then get the Lipschitz lower bound as in (\ref{eq:wholeset}).
}

\section{Experimental Settings} \label{sec:app_exp_settings}

\subsection{General Settings} \label{sec:app_exp_settings_general}

The ResNet-18 (RN18) architecture is same as the one in~\cite{wong2020fast}; the WideResNet-34 (WRN34) architecture is same as the one in~\cite{madry2017towards}.
Unless specified, the $l_\infty$ adversarial budget used for CIFAR10 dataset~\cite{krizhevsky2009learning}~\footnote{Data available for download on \href{https://www.cs.toronto.edu/~kriz/cifar.html}{https://www.cs.toronto.edu/~kriz/cifar.html}. MIT license. Free to use.} is $8 / 255$ and for SVHN dataset~\cite{netzer2011reading} \footnote{Data available for download on \href{http://ufldl.stanford.edu/housenumbers/}{http://ufldl.stanford.edu/housenumbers/}. Free for non-commercial use.} is $0.02$.
In PGD adversarial training, the step size is $2 / 255$ for CIFAR10 and $0.005$ for SVHN; PGD is run for $10$ iterations for both datasets.
For adversarial attacks using a different adversarial budget, the step size is always $1/4$ of the adversarial budget's size, and we always run it for $10$ iterations.
To comprehensively and reliably evaluate the robustness of the model, we use AutoAttack~\cite{croce2020reliable}, which is an ensemble of $4$ different attacks: AutoPGD on cross entropy, AutoPGD on difference of logits ratio, fast adaptive boundary (FAB) attack~\cite{croce2020minimally} and square attack~\cite{andriushchenko2020square}.
Unless specified, we use stochastic gradient descent (SGD) with a momentum to optimize the model parameters, we also use weight decay whose factor is $0.0005$.
Unless specified, the momentum factor is $0.9$, the learning rate starts with $0.1$ and is divided by $10$ in the $1 / 2$ and $3 / 4$ of the whole training duration.
The size of the mini-batch is always $128$.

We run the experiments on a machine with 4 NVIDIA TITAN XP GPUs.
It takes about $6$ hours to adversarially train a RN18 model for $200$ epochs, and a whole day to adversarially train a WRN34 model for $200$ epochs.

\subsection{Settings in the Case Studies} \label{subsec:settings_casestudy}

\textbf{Fast Adversarial Training} Our experiments in this section is on CIFAR10 and use the $l_\infty$ norm based adversarial budget with $\epsilon = 8 / 255$.
The step size $\alpha$ in Algorithm~\ref{alg:fast} is $4 / 255$.
Unless explicitly stated, the coefficient $\rho$ and $\beta$ is $0.9$ and $0.1$.
We train the model for $38$ epochs, the learning rate is $0.1$ on the first $30$ epochs, it decays to $0.01$ in the next $6$ epochs and further decays to $0.001$ in the last $2$ epochs.
When we use adaptive targets, the first $5$ epochs are the warmup period in which we use fixed targets.
Since the goal here is to accelerate adversarial training, we do not use a validation set to do model selection as in~\cite{rice2020overfitting}.
We use the standard data augmentation on CIFAR10: random crop and random horizontal flip.

\textbf{Adversarial Fine-tuning with Additional Data} For CIFAR10, we use 500000 images from 80 Million Tiny Images dataset~\cite{torralba200880} with pseudo labels in~\cite{carmon2019unlabeled} \footnote{Data available for download on \href{https://github.com/yguooo/semisup-adv}{https://github.com/yguooo/semisup-adv}. MIT license. Free to use.}.
For SVHN, we use the extra held-out set provided by SVHN itself, which contains 531131 somewhat less difficult samples.
When we construct a mini-batch, half of its instances are sampled from the original training set and the other half are sampled from the additional data.
The experimental settings are the same as~\cite{carmon2019unlabeled} except the learning rate.
We tune the learning rate and find that fixing it to $10^{-3}$ is the best choice.





\section{Additional Experiments and Discussion} \label{sec:app_exp}

\subsection{Properties of the Difficulty Metric} \label{subsec:d_function}

To study the factors affecting the difficulty function defined in~(\ref{eq:difficulty}), let us denote by $d_1$, $d_2$ the difficulty functions obtained under two different training settings, such as different network architectures and training methods.
We then define the difficulty distance (\textit{D-distance}) between two such functions $d_1$, $d_2$ \revision{under the same perturbation type $\gA$} as $D_{\gA}(d_1, d_2)$, which is calculated as follows:
\begin{equation}
\begin{aligned}
D_{\gA}(d_1, d_2) = \E_{\vx \sim U(\gD)} |d_1(\vx, \gA) - d_2(\vx, \gA)|\;.
\end{aligned} \label{eq:d_distance}
\end{equation}
\revision{Similarly, the D-distance between the same function $d$ but under two different perturbation types $\gA_1$, $\gA_2$ is represented by $D_d(\gA_1, \gA_2)$:}
\begin{equation}
\begin{aligned}
D_{d}(\gA_1, \gA_2) = \E_{\vx \sim U(\gD)} |d(\vx, \gA_1) - d(\vx, \gA_2)|\;.
\end{aligned} \label{eq:d_distance}
\end{equation}
\revision{For both $D_{\gA}(d_1, d_2)$ and $D_{d}(\gA_1, \gA_2)$, the expected D-distance between two random difficulty functions with random perturbation types is $0.375$, which is calculated based on the random shuffle of the average loss for each training instance.}

We then study the properties of the difficulty functions in Equation (\ref{eq:difficulty}) by \revision{performing experiments on the CIFAR10 and CIFAR10-C~\citep{hendrycks2018benchmarking} dataset, varying factors of interest and calculating the D-distances between different difficulty functions.}

We first study the influence of the network architectures and training durations by using either a RN18 model, trained for either 100 or 200 epochs (RN18-100 or RN18-200), or a WRN34 model trained for 200 epochs (WRN34).
To generate adversarial attacks, we always use of PGD perturbation \revision{$\gA_{PGD}$} with an adversarial budget based on the $l_\infty$ norm with $\epsilon = 8 / 255$.
This corresponds to the settings used in other works~\cite{hendrycks2018benchmarking, madry2017towards}.
The other hyper-parameters follow the general settings in Appendix~\ref{sec:app_exp_settings}.
In the left part of Table~\ref{tbl:cmp}, we report the D-distance \revision{$D_{\gA_{PGD}}(d_1, d_2)$} for all pairs of settings.
Each result is averaged over $4$ runs, the variances are all below $0.012$ and thus negligible.
The D-distances in all scenarios are very small and close to $0$, indicating the architecture and the training duration have little influence on instance difficulty based on our definition.

\begin{table}[!htb]
\begin{minipage}{.45\linewidth}
\centering
\begin{tabular}{lccc}
\Xhline{4\arrayrulewidth}
$d_1 \backslash d_2$ & RN18-100 & RN18-200 & WRN34 \\
\hline
RN18-100 & $0.0189$ & $0.0232$ & $0.0355$ \\
RN18-200 & $0.0232$ & $0.0159$ & $0.0299$ \\
WRN34    & $0.0355$ & $0.0299$ & $0.0178$ \\
\Xhline{4\arrayrulewidth}
\end{tabular}
\end{minipage}
\hspace{0.05\linewidth}
\begin{minipage}{.45\linewidth}
\centering
\begin{tabular}{lccc}
\Xhline{4\arrayrulewidth}
$\gA_1 \backslash \gA_2$ & Clean & FGSM & PGD \\
\hline
Clean & $0.0189$ & $0.0607$ & $0.1713$ \\
FGSM  & $0.0607$ & $0.0843$ & $0.1677$ \\
PGD   & $0.1713$ & $0.1677$ & $0.0857$ \\
\Xhline{4\arrayrulewidth}
\end{tabular}
\end{minipage}
\caption{D-distances ($D_\gA(d_1, d_2)$ for the left table and $D_d(\gA_1, \gA_2)$ for the right table) between difficulty functions in different settings, including different model architectures, training duration (left table), and different types of perturbations (right table).} \label{tbl:cmp}
\end{table}

We then perform experiments by varying the attack strategy using a RN18 network.
As shown by the D-distances $D_d(\gA_1, \gA_2)$ reported in the right portion of Table~\ref{tbl:cmp}, the discrepancy between values obtained with clean, FGSM-perturbed and PGD-perturbed inputs is much larger, thus indicating that our difficulty function correctly reflects the influence of an attack on an instance.
In addition, Table~\ref{tbl:cmp_common_corr} demonstrates the D-distance between the difficulty functions based on clean instances, FGSM-perturbed instance, PGD-perturbed instances and different common corruptions from CIFAR10-C~\cite{hendrycks2018benchmarking}\footnote{Data available for download on \href{https://github.com/hendrycks/robustness}{https://github.com/hendrycks/robustness}. Apache License 2.0. Free to use.}.
Note that~\cite{hendrycks2018benchmarking} only provides corrupted instances on the test set, so we train models on the clean training set and test model on corrupted test set in these cases.
We use RN18 architecture and train it for $100$ epochs in all cases, results are reported on the test set.
Compared with the results in the left half of Table~\ref{tbl:cmp}, the D-distance is much larger here.
This indicates the difficulty function depends on the perturbation type applied to the input, including the common corruptions.

The results in Table~\ref{tbl:cmp} and~\ref{tbl:cmp_common_corr} demonstrate that our difficulty metric mainly depends on the data and on the perturbation type; not the model architecture or the training duration.
\revision{This is why we include the data $\vx$ and the perturbation type $\gA$ explicitly in the parameter list in the definition of the difficulty function $d$ in Equation~(\ref{eq:difficulty}).}

\begin{table}[!ht]
\centering
\begin{tabular}{p{1.2cm}p{1.7cm}<{\centering}p{1.7cm}<{\centering}p{1.7cm}<{\centering}p{1.7cm}<{\centering}p{1.7cm}<{\centering}p{1.7cm}<{\centering}}
\Xhline{4\arrayrulewidth}
\multirow{2}{*}{$\gA_1 \backslash \gA_2$} & \multirow{2}{*}{brightness} & \multirow{2}{*}{contrast} & \multirow{2}{*}{defocus} & \multirow{2}{*}{elastic} & \multirow{2}{*}{fog} & gaussian \\
& & & & & & blur \\
\hline
Clean & $0.1279$ & $0.3219$ & $0.2646$ & $0.2115$ & $0.2324$ & $0.3069$ \\
FGSM  & $0.1303$ & $0.3128$ & $0.2642$ & $0.2098$ & $0.2289$ & $0.3064$ \\
PGD   & $0.1873$ & $0.3082$ & $0.2616$ & $0.2319$ & $0.2414$ & $0.2959$ \\
\Xhline{4\arrayrulewidth}
\\
\Xhline{4\arrayrulewidth}
\multirow{2}{*}{$\gA_1 \backslash \gA_2$} & glass & \multirow{2}{*}{jpeg} & motion & \multirow{2}{*}{pixelate} & gaussian & impulse \\
& blur & & blur & & noise & noise \\
\hline
Clean & $0.2809$ & $0.1838$ & $0.2520$ & $0.2365$ & $0.2999$ & $0.2869$ \\
FGSM  & $0.2760$ & $0.1853$ & $0.2520$ & $0.2417$ & $0.2918$ & $0.2807$ \\
PGD   & $0.2825$ & $0.2026$ & $0.2605$ & $0.2551$ & $0.2980$ & $0.2866$ \\
\Xhline{4\arrayrulewidth}
\\
\Xhline{4\arrayrulewidth}
\multirow{2}{*}{$\gA_1 \backslash \gA_2$} & \multirow{2}{*}{saturate} & shot & \multirow{2}{*}{snow} & \multirow{2}{*}{spatter} & zoom & speckle \\
& & noise & & & blur & noise \\
\hline
Clean & $0.1335$ & $0.2832$ & $0.2033$ & $0.1930$ & $0.2654$ & $0.2829$ \\
FGSM  & $0.1329$ & $0.2754$ & $0.2003$ & $0.1946$ & $0.2657$ & $0.2759$ \\
PGD   & $0.1932$ & $0.2841$ & $0.2148$ & $0.2297$ & $0.2711$ & $0.2901$ \\
\Xhline{4\arrayrulewidth}
\end{tabular}
\caption{D-distances between difficulty functions of vanilla / FGSM / PGD training and training based on 18 different corruptions on CIFAR10-C. We run each experiment for $4$ times and report the average value.} \label{tbl:cmp_common_corr}
\end{table}

\begin{figure}[!ht]
    \centering
    \includegraphics[width = 0.45\textwidth]{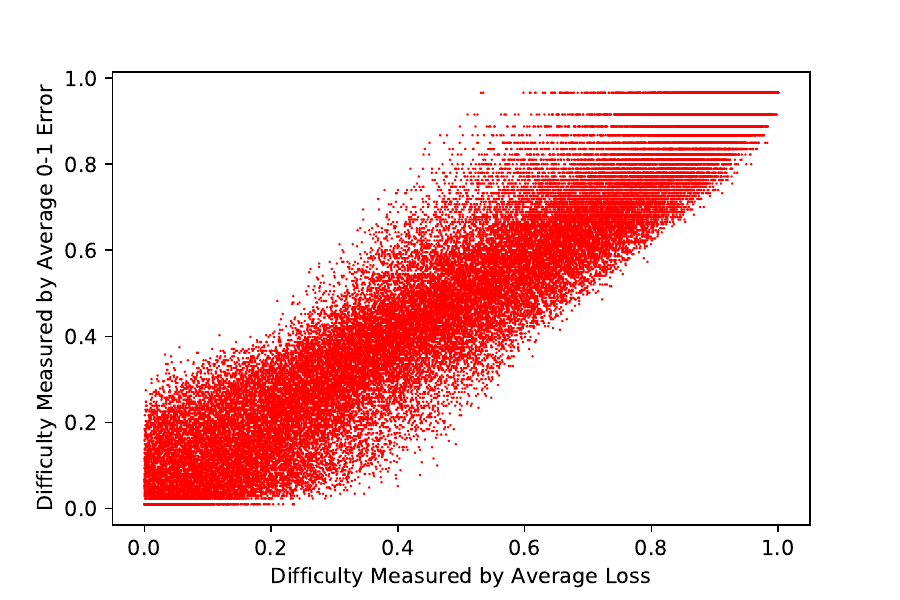}
    ~~~~
    \includegraphics[width = 0.45\textwidth]{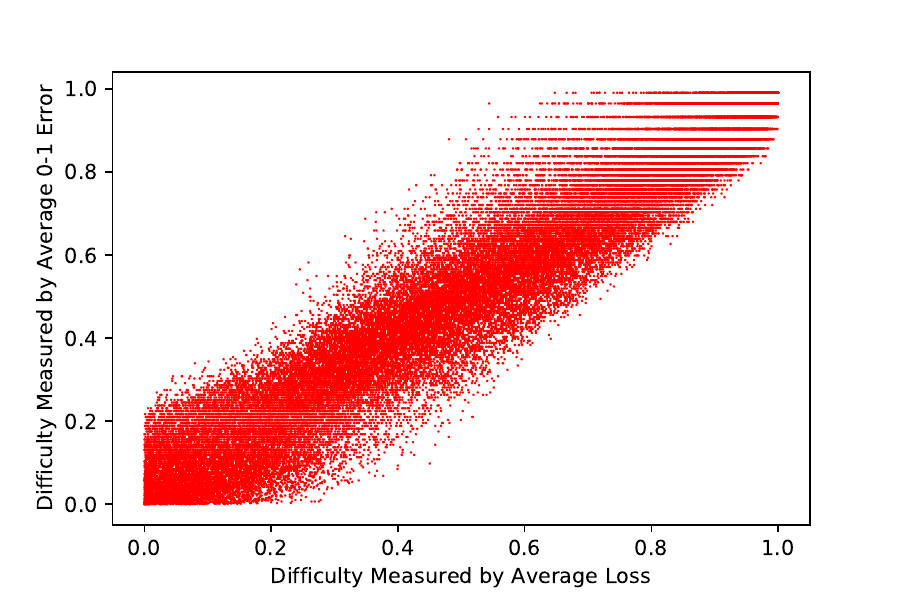}
    \caption{The relationship between the difficulty function based on the average loss values and the one based on the average 0-1 errors. The left figure is based on the RN18-200 model; the right figure is based on the WRN34 model. The correlation between these two metrics are $0.9466$ (left) and $0.9545$ (right), respectively.}
    \label{fig:acc_loss_compare}
\end{figure}

In the definition of our difficulty metric in Equation (\ref{eq:difficulty}), the difficulty of one instance is based on its average loss values during the training procedure.
It is intuitive, because the values of the loss objective represents the cost that model needs to fit the corresponding data point.
The bigger this cost is, the more difficulty this instance will be.
To make the metric stable and prevent the metric from being sensitive to the stochasticity in the training dynamics, we use the average value of the loss objective for each instance to define its difficulty.
In addition to the average loss objectives, we can also use the average 0-1 error to define the difficulty function.
In Figure~\ref{fig:acc_loss_compare}, we plot the relationship between the difficulty metric based on the average loss values and the one based on the average 0-1 error for instances in the CIFAR10 training set when we train a RN18-100 model and a WRN34 model.
We can see a strong correlation between them for both models.
The correlation of the difficulty measured by two metrics for the same instance is $0.9466$ in the RN18-100 case and $0.9545$ in the WRN34 case.
The high correlation indicates we can use either metric to measure the difficulty.
Since the loss objective values are continuous and finer-grained, we choose it as the basis of the difficulty function we use in this paper.

\subsection{Consistency of the Difficulty Definition} \label{subsec:consistent_difficulty}

\edit{The difficulty definitions used in our theoretical analyses and empirical experiments are consistent with the definition of $d$ function in Equation~(\ref{eq:difficulty}) in Section~\ref{sec:hardeasy}.

\textbf{Theoretical Analyses in Section~\ref{sec:thm}} In the analysis of the linear model, we assume the data distribution follows a $K$-component Gaussian mixture model.
In our definition (\ref{eq:gmm}), the average distance between the positive instances and the negative instances of the $k$-th component is $2 r_k$.
Based on symmetry, the average distance between the decision boundary and the adversarial training instances is $r_k + \epsilon$.
Since the loss of the linear model decreases with the increase of the distance between the input and the decision boundary, bigger the value of $r_k$ is, smaller the average loss objective is.
Therefore, in this case, the difficulty level of such training instances, which are defined on their loss objectives, is lower.

In the analysis of the general model, we use the conditional variance $\sigma_k^2$ to represent the difficulty of the $k$-th component of the data distribution.
Based on~\cite{bubeck2021universal}, the conditional variance $\sigma^2_k$ is the average error of a well-trained model.
Since the difficulty is defined on the loss objective, it can be concluded that bigger the $\sigma_k$ is, more difficulty the samples from the corresponding component will be.

\textbf{Case Studies in Section~\ref{sec:casestudy}}

\begin{figure}[!ht]
\centering
\begin{minipage}{.45\linewidth}
\includegraphics[width = 0.95\textwidth]{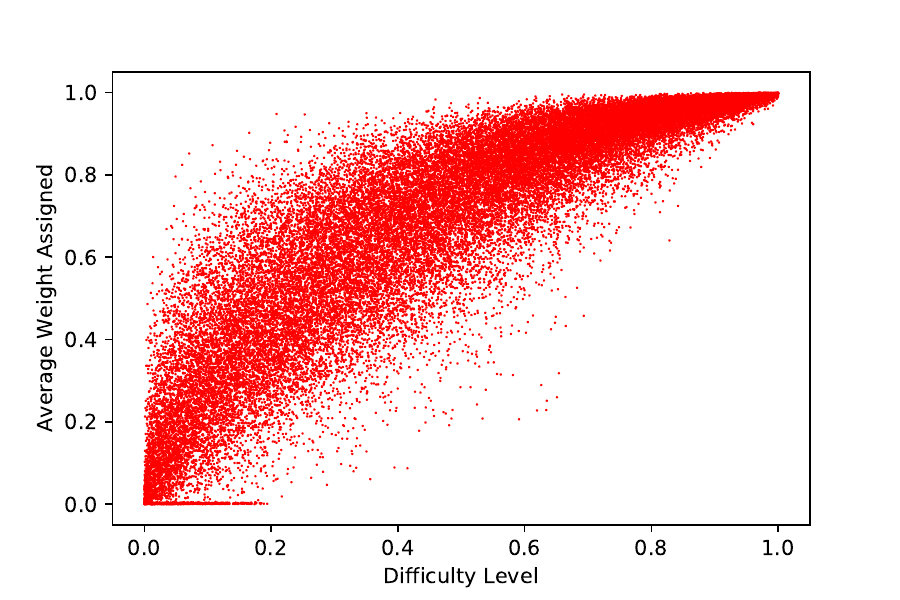}
\caption{The relationship between the difficulty value and the weight assigned to each instances when using reweighting. We use the average weight across epochs. The correlation between them is $0.8900$.} \label{fig:per-rw}
\end{minipage}
\hspace{0.05\linewidth}
\begin{minipage}{.45\linewidth}
\includegraphics[width = 0.95\textwidth]{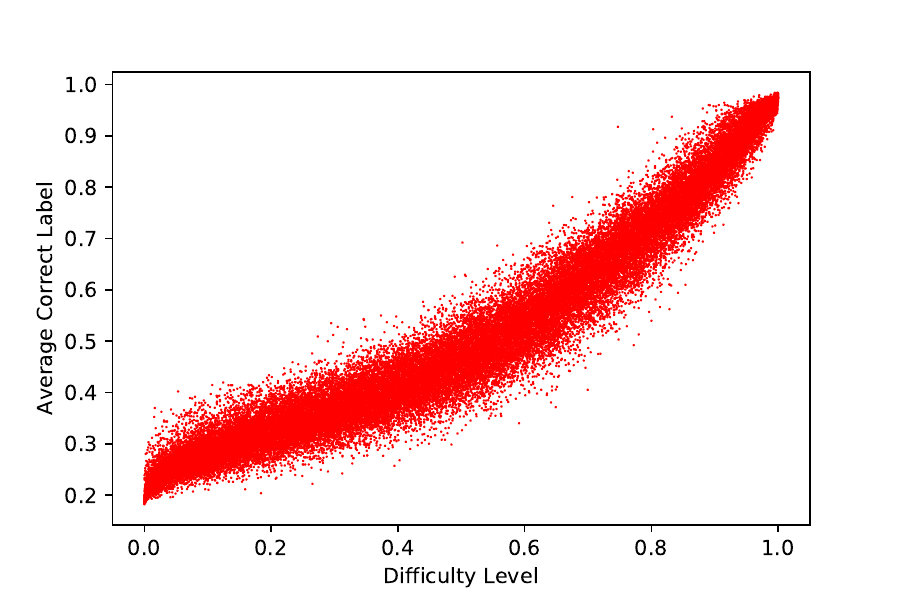}
\caption{The relationship between the difficulty value and the average value of the true label's probability when using the adaptive targets. The correlation between them is $0.9604$.} \label{fig:per-ada}
\end{minipage}
\end{figure}

To confirm that the Algorithm~\ref{alg:fast} is consistent with our difficulty definition, we study the relationship between the instance difficulty and the weight assigned to them when using reweighting, as well as the soft target when using adaptive targets.
Since the evaluation of model robustness is based on the PGD attack, the difficulty value here is also based on the PGD perturbation.
In Figure~\ref{fig:per-rw}, we demonstrate the relationship between the difficulty value and the average assigned weight for each instance when using reweighting.
We calculate the correlation between these two values on the training set, it is $0.8900$.
This indicates we indeed assign smaller weights for hard training instances and assign bigger weights for easy training instances.
In Figure~\ref{fig:per-ada}, we show the relationship between the difficulty value and the average value of the true label's probability in the soft target when we use the adaptive targets.
Similarly, we calculate the correlation between these two values on the training set, it is $0.9604$.
This indicates the adaptive target is similar to the ground-truth one-hot target for the easy training instances, while the adaptive target is very different from the ground-truth one-hot target for the hard training instances.
This means, adaptive targets prevent the model from fitting hard training instances while encourage the model to fit the easy training instances.

}



\subsection{Training on a Subset} \label{sec:app_exp_subset}

\textbf{Results on SVHN dataset}

Figure~\ref{fig:overfit_svhn} demonstrates the learning curves of PGD adversarial training based on a subset of the easiest, the random and the hardest instances of SVHN dataset.
We let the size of each subset be $20000$, because the training set of SVHN is larger than that of CIFAR10.
The model architecture is RN18 in these cases.
We have the same observations here: training on the hardest subset yields trivial performance, training on the random subset has significant generalization decay in the late phase of training while there is no such phenomenon when the model is trained on the easiest instances.

\edit{In Figure~\ref{fig:easy_compare_svhn}, we conduct PGD adversarial training using increasing more training instances in SVHN dataset, starting with the easiest ones.
The observation here is consistent with Figure~\ref{fig:hardremoval}: although fitting hard adversarial instances can cause overfitting, they can improve the model performance if we use easy stopping by a validation set.
Therefore, we should not simply remove the hard training instances, but need to utilize them adaptively.}

\textbf{Different Values of $\epsilon$ and $l_2$-based Adversarial Budget} Figure~\ref{fig:overfit_adv_budget} and Figure~\ref{fig:overfit_l2_adv_budget} demonstrate the learning curves of RN18 models under different adversarial budgets on CIFAR10, in both $l_\infty$ and $l_2$ cases.
In $l_\infty$ cases, the adversarial budgets are $2 / 255$, $4 / 255$ and $6 / 255$; in $l_2$ cases, the adversarial budgets are $0.5$, $0.75$ and $1$.
With the increase in the size of the adversarial budget, we can see a clear transition from the vanilla training: more and more severe generalization decay when training on the random or the hardest subset.


\begin{minipage}{\linewidth}
\centering
\begin{minipage}{0.43\linewidth}
\begin{figure}[H]
\includegraphics[width = \linewidth]{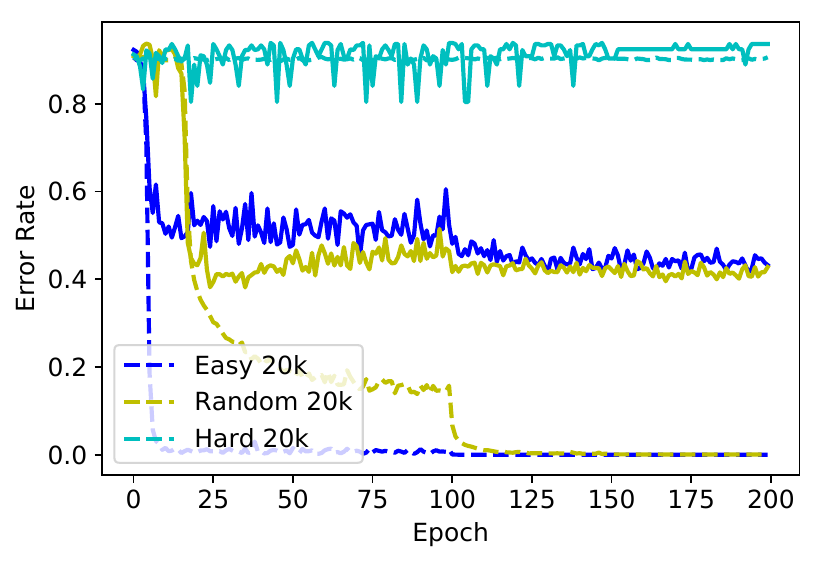}
\caption{Learning curves of training using the easiest, the random and the hardest 20000 instances of the SVHN training set. The training error (dashed lines) is the robust error on the selected instances, and the robust test error (solid lines) is always the error on the entire test set.} \label{fig:overfit_svhn}
\end{figure}
\end{minipage}
\begin{minipage}{0.06\linewidth}
~~
\end{minipage}
\begin{minipage}{0.43\linewidth}
\begin{figure}[H]
\includegraphics[width = \textwidth]{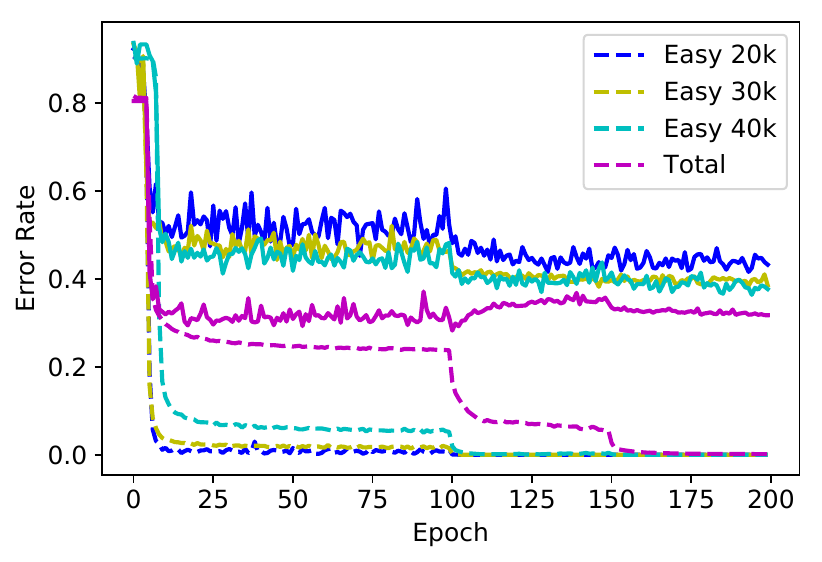}
\caption{Learning curves of PGD adversarial training using increasing more training data of SVHN. The dashed lines represent the robust training error on the selected training instances; the solid lines represent the robust test error on the entire test set.} \label{fig:easy_compare_svhn}
\end{figure}
\end{minipage}
\end{minipage}

\begin{figure}[!ht]
\begin{subfigure}{0.31\columnwidth}
\includegraphics[width = \textwidth]{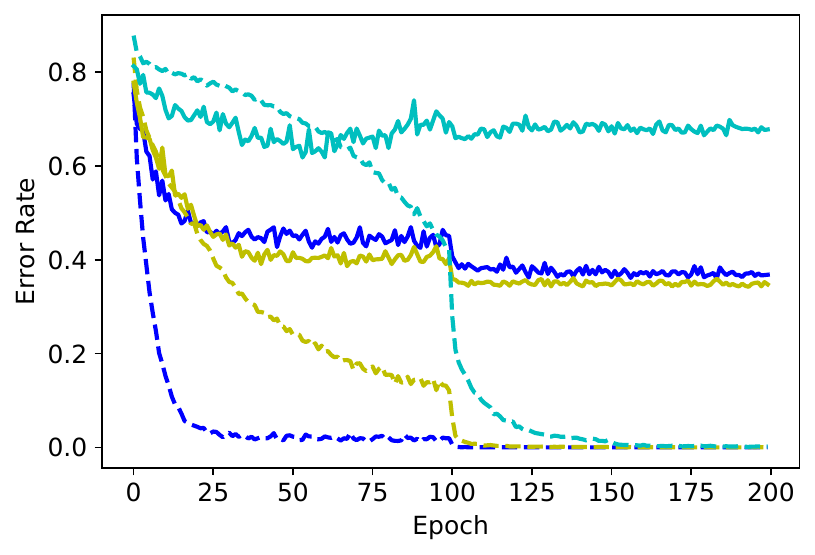}
\caption{$\epsilon = 2 / 255$}
\end{subfigure}
\begin{subfigure}{0.31\columnwidth}
\includegraphics[width = \textwidth]{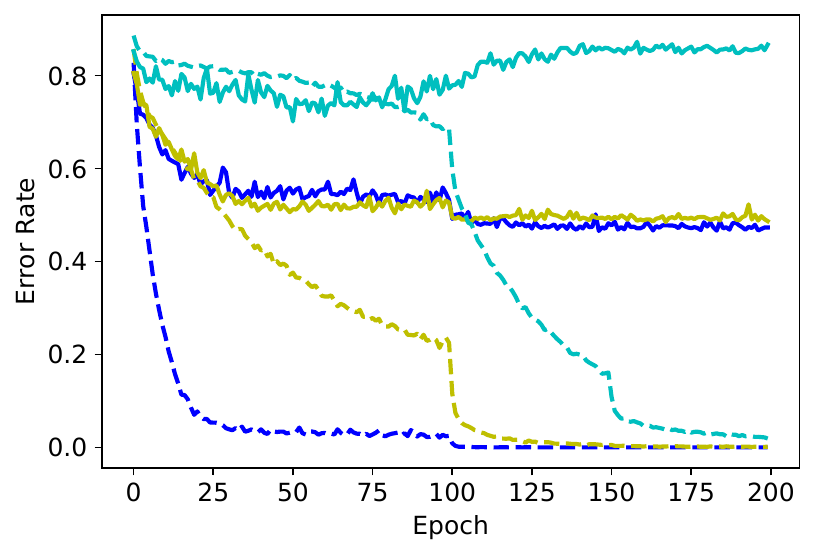}
\caption{$\epsilon = 4 / 255$}
\end{subfigure}
\begin{subfigure}{0.31\columnwidth}
\includegraphics[width = \textwidth]{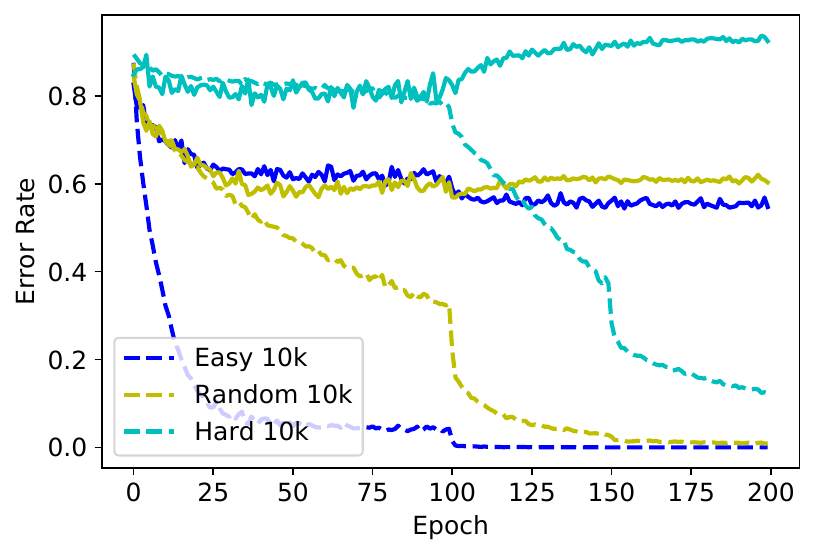}
\caption{$\epsilon = 6 / 255$}
\end{subfigure}
\caption{Learning curves of training on PGD-perturbed inputs against different sizes of $l_\infty$ norm based adversarial budgets using the easiest, the random and the hardest 10000 training instances. The instance difficulty is determined by the corresponding adversarial budget and is thus different under different adversarial budgets. The dashed lines are robust training error on the selected training set, the solid lines are robust test error on the entire test set.} \label{fig:overfit_adv_budget}
\end{figure}

\begin{figure}[!ht]
\begin{subfigure}{.31\columnwidth}
\includegraphics[width = \textwidth]{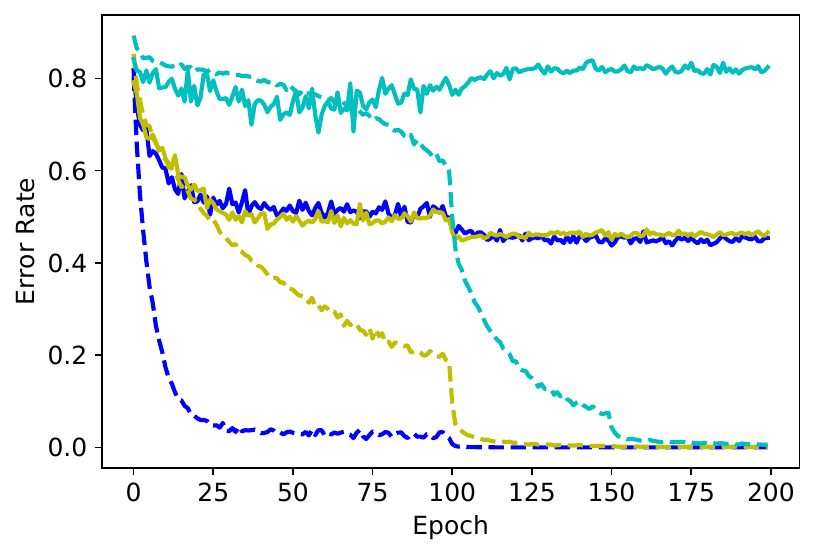}
\caption{$\epsilon = 0.50$}
\end{subfigure}
\begin{subfigure}{.31\columnwidth}
\includegraphics[width = \textwidth]{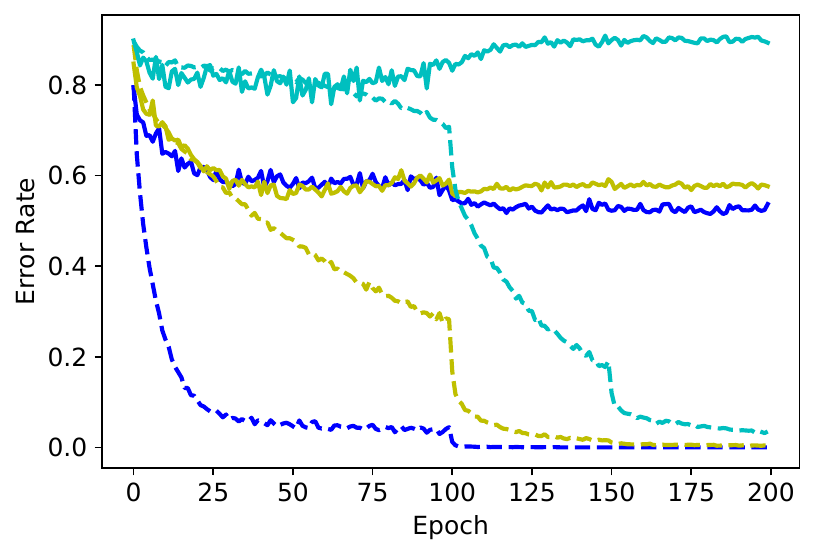}
\caption{$\epsilon = 0.75$}
\end{subfigure}
\begin{subfigure}{.31\columnwidth}
\includegraphics[width = \textwidth]{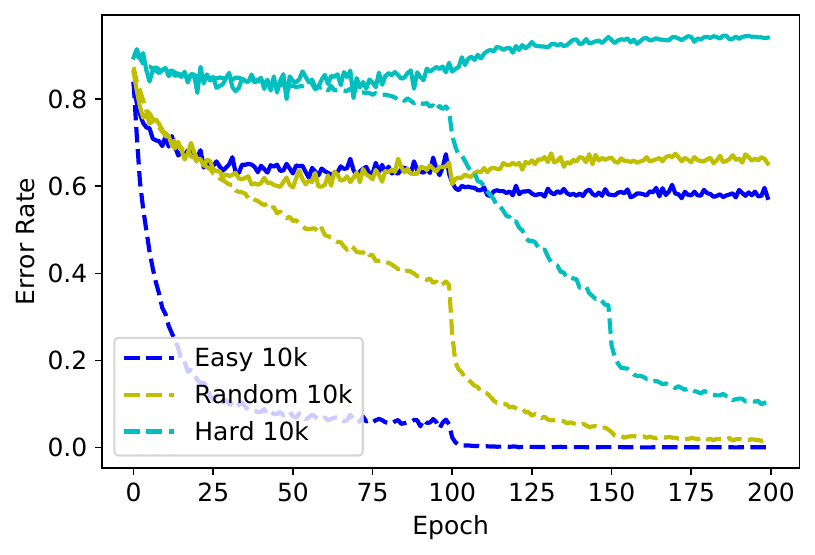}
\caption{$\epsilon = 1.00$}
\end{subfigure}
\caption{Learning curves of training on PGD-perturbed inputs against different size of $l_2$ norm based adversarial budgets using the easiest, the random and the hardest 10000 training instances. The instance difficulty is determined by the corresponding adversarial budget and is thus different under different adversarial budgets. The dashed lines are robust training error on the selected training set, the solid lines are robust test error on the entire test set.} \label{fig:overfit_l2_adv_budget}
\end{figure}


\subsection{Revisiting Existing Methods Mitigating Adversarial Overfitting} \label{subsec:app_revisit}

Existing methods mitigating adversarial overfitting can be generally divided into two categories: one is to use adaptive inputs, such as~\cite{balaji2019instance}; the other is to use adaptive targets, such as~\cite{chen2021robust, huang2020self}.
Both categories aim to prevent the model from fitting hard input-target pairs.
In this section, we pick one example from each category for investigation.
We provide the learning curves of the methods we study in Figure~\ref{fig:case_learncurve}.
We use the same hyper-parameters as in these methods' original paper, except for the training duration and learning rate scheduler, which follow our settings.
These methods clearly mitigate adversarial overfitting: The robust test error does not increase much in the late phase of training, and the generalization gap is much smaller that that of PGD adversarial training.

\begin{figure}
\centering
\includegraphics[width = 0.45\textwidth]{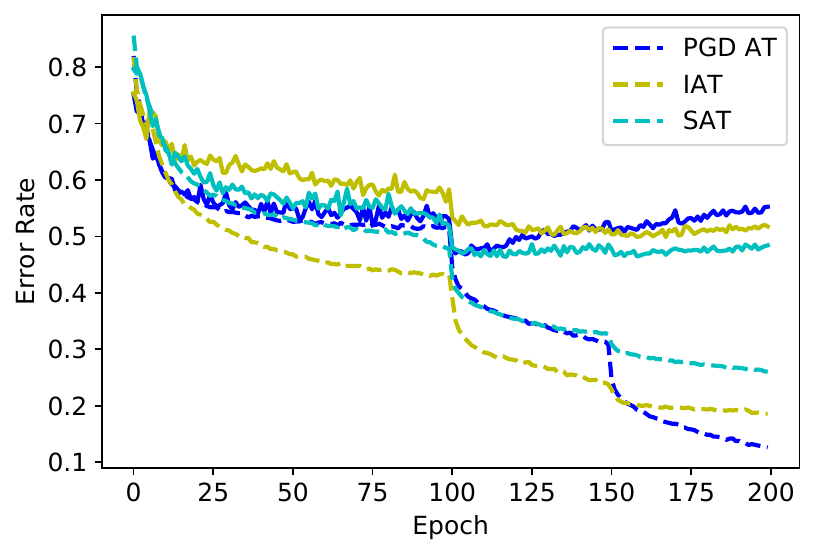}
\caption{Learning curves of PGD adversarial training (PGD AT), instance-adaptive training (IAT) and self-adaptive training (SAT). Dashed lines and solid lines represent the robust training error and the robust test error, respectively.}
\label{fig:case_learncurve}
\end{figure}

\textbf{Instance-Adaptive Training} Using an instance-adaptive adversarial budget has been shown to mitigate adversarial overfitting and yield a better trade-off between the clean and robust accuracy~\cite{balaji2019instance}.
In instance-adaptive adversarial training (IAT), each training instance $\vx_i$ maintains its own adversarial budget's size $\epsilon_i$ during training.
In each epoch, $\epsilon_i$ increases to $\epsilon_i + \epsilon_\Delta$ if the instance is robust under this enlarged adversarial budget.
By contrast, $\epsilon_i$ decreases to $\epsilon_i - \epsilon_\Delta$ if the instance is not robust under the original adversarial budget.
Here, $\epsilon_{\Delta}$ is the step size of the adjustment.

We use the same settings as in~\cite{balaji2019instance} except that we use the same number of training epochs and learning rate scheduling as the one in other experiments for fair comparison.
Specially, we set the value of $\epsilon$ and $\epsilon_{\Delta}$ to be $8 / 255$ and $1.9 / 255$, respectively, same as in~\cite{balaji2019instance}.
The first $5$ epochs are warmup, when we use vanilla adversarial training~\cite{madry2017towards}.



\textbf{Self-Adaptive Training} Self-adaptive training (SAT)~\cite{huang2020self} solves the adversarial overfitting issue by adapting the target.
By contrast with common practice consisting of using a fixed target, usually the ground-truth, SAT adapts the target of each instance to the model's output.
Specifically, after a warm-up period, the target $\vt_i$ for an instance $\vx_i$ is initialized as a one-hot vector by its ground-truth label $y_i$ and updated in an iterative manner after each epoch as $\vt_i \leftarrow \rho \vt_i + (1 - \rho) \vo_i$.
Here, $\rho$ is a predefined momentum factor and $\vo_i$ is the output probability of the current model on the corresponding clean instance.
SAT uses the loss of TRADES~\cite{zhang2019theoretically} but replaces the ground-truth label $y$ with the adaptive target $\vt_i$: $\gL_{SAT}(\vx_i) = \gL(\vx_i, \vt_i) + \lambda \max_{\Delta_i \in \gS(\epsilon)} KL(\vo_i || \vo'_i)$, where $KL$ refers to the Kullback–Leibler divergence and $\lambda$ is the weight for the regularizer.
Furthermore, SAT uses a weighted average to calculate the loss of a mini-batch; the weight assigned to each instance $\vx_i$ is proportional to the maximum element of its target $\vt_i$ but normalized to ensure that all instances' weights sum up to $1$.
By weighted averaging, the instances with confident predictions are strengthened, whereas the ambiguous instances are downplayed.

Similarly, we use the same settings as in~\cite{huang2020self} except we use the same number of training epochs and learning rate scheduling: we train the model for $200$ epochs and the first $90$ epochs are the warmup period.

\vskip 0.2in
\bibliography{main}

\end{document}